\DeclareMathAlphabet{\mathsf}{OT1}{cmss}{m}{n}
\SetMathAlphabet{\mathsf}{bold}{OT1}{cmss}{bx}{n}
\newcommand{\mone}{\mathds{1}}
\newcommand{\mtoned}{\widetilde{\mathds{1}}_{\Delta}}
\newcommand{\ttimes}{\widetilde{\times}}
\newcommand{\supp}{\mathrm{supp}}
\newcommand{\diam}{\mathrm{diam}}
\newcommand{\ReLU}{\mathrm{ReLU}}
\newcommand{\Conv}{\mathrm{Conv}}
\newcommand{\id}{\mathrm{id}}
\newcommand{\convex}{\mathrm{ch}}
\newcommand{\vhalf}{\vspace{0cm}}
\newcommand{\vfull}{\vspace{0cm}}
\newtheorem*{theorem*}{Theorem}
\begin{document}

%\setlength{\abovedisplayskip}{3pt}
%\setlength{\belowdisplayskip}{3pt}

%\twocolumn[
\title{Benefits of Overparameterized Convolutional Residual Networks: Function Approximation under Smoothness Constraint}
%\date{\today}
\author{Hao Liu, Minshuo Chen, Siawpeng Er, Wenjing Liao, Tong Zhang and Tuo Zhao \thanks{Hao Liu is affiliated with the Department of Mathematics at Hong Kong Baptist University; Minshuo Chen, Siawpeng Er and Tuo Zhao are affiliated with the ISYE department at Georgia Tech; Wenjing Liao is affiliated with the School of Mathematics at Georgia Tech; Tong Zhang is affiliated with the Department of Mathematics  and Department of Computer Science and Engineering at The Hong Kong University of Science and Technology, and Google Research; Email: \text{haoliu@hkbu.edu.hk, $\{$mchen393, ser8, wliao60, tourzhao$\}$@gatech.edu}, tongzhang@tongzhang-ml.org.}}

\date{}

\maketitle

\begin{abstract}
Overparameterized neural networks enjoy great representation power on complex data, and more importantly yield sufficiently smooth output, which is crucial to their generalization and robustness. Most existing function approximation theories suggest that with sufficiently many parameters, neural networks can well approximate certain classes of functions in terms of the function value. The neural network themselves, however, can be highly nonsmooth. To bridge this gap, we take convolutional residual networks (ConvResNets) as an example, and prove that large ConvResNets can not only approximate a target function in terms of function value, but also exhibit sufficient first-order smoothness. Moreover, we extend our theory to approximating functions supported on a low-dimensional manifold. Our theory partially justifies the benefits of using deep and wide networks in practice. Numerical experiments on adversarial robust image classification are provided to support our theory.
\end{abstract}

%!TEX root = NN_approx_sobolev.tex
\vfull
\section{Introduction}\label{sec:intro}
\vfull
Deep neural networks of enormous sizes have achieved remarkable success in various applications. Some well-known examples include ViT-Huge of $632$ million parameters \citep{dosovitskiy2020image}, BERT-Large of $336$ million parameters \citep{devlin2018bert}, and the gigantic GPT-3 of $175$ billion parameters \citep{brown2020language}. In addition to outstanding testing accuracy, there has been evidence that large neural networks favor smoothness and yield good robustness \citep{madry2017towards, bubeck2021universal}.

Among vast literature on explaining the success of neural networks, universal approximation theories analyze how well neural networks can represent complex data models (see literature in related work section). These works focus on approximating a target function in terms of its function value (i.e., in function $L_\infty$ norm). However, other important properties, espcifically the smoothness of the neural networks, are less investigated. A few early results provide asymptotic results on two-layer networks with smooth activation for approximating both function value and derivatives \citep{hornik1990universal, cardaliaguet1992approximation}. Recently, \citet{guhring2020error,hon2021simultaneous} established nonasymptotic approximation theory of feedforward networks in terms of Sobolev norms.
% We are not aware of similar results for deep neural networks of finite size.
% nor theoretical justifications on advantages of large networks compared to small counterparts.

In real-world applications, on the other hand, practitioners empirically demonstrated a close tie between the smoothness of a trained neural network to its adversarial robustness \citep{gu2014towards, hein2017formal, weng2018evaluating, miyato2018virtual}. The intuition behind is relatively clear. Consider, for instance, adding some adversarial perturbation to an input. A network of small (local) Lipschitz constant produces less deviation to the original output, and therefore, is often resilient to adversarial attackes. On the contrary, a network that is vulnerable to adversarial attacks usually has a large Lipschitz constant. Over the years, many computational methods are proposed and extensively tested in experiments for promoting network smoothness \citep{goodfellow2014explaining, madry2017towards, miyato2018virtual,zhang2019theoretically}. Apart from these explicit training methodologies, the size of a network is also recognized as a critical factor to its generalization and robustness \citep{zagoruyko2016wide, madry2017towards, wu2020wider}. Yet, theoretical understanding is largely missing.

In this paper, we investigate universal approximation ability of neural networks with smoothness guarantees. We consider the convolutional residual networks (ConvResNet, see a description in Section \ref{sec:convresnet}) with ReLU activation as an example. We measure the approximation error of ConvResNet in terms of not only the function value, but also higher order smoothness. Specifically, suppose given a target function $f$ belonging to a Sobolev space in a $D$-dimensional hypercube. We provide an approximation error estimate in terms of Sobolev norm as a function of the size of ConvResNet. We also extend our theory to functions supported on a $d$-dimensional Riemannian manifold ($d \ll D$). We summarize our main results in the following informal theorem.

\vhalf
\begin{theorem}[informal]\label{thm.informal}
Consider a ConvResNet architecture with $\tilde{M}$ residual blocks and each convolutional filter having at most $\tilde{J}$ channels. Let $\alpha \geq 2$ and $1\leq p \leq \infty$ be positive integers. Then

$\bullet$ (Euclidean) for any target function in a Sobolev space $W^{\alpha, p}((0, 1)^D)$ with Sobolev norm $\norm{f}_{W^{\alpha, p}((0, 1)^D)} \leq 1$, there exists $\tilde{f}$ yielded by the ConvResNet architecture, such that
$$
\norm{\tilde{f} - f}_{W^{s, p}} \leq {\sf const} \cdot (\tilde{M}\tilde{J})^{-\frac{\alpha-s}{D}} \quad \text{for} \quad s \in [0, 1]
$$
with the constant depending on $D, \alpha$, $p$;

$\bullet$ (Manifold) given $\cM \subset \RR^D$ a $d$-dimensional Riemannian manifold satisfying mild regularity conditions, for any target function in a Sobolev space $W^{\alpha, \infty}(\cM)$ with $\norm{f}_{W^{\alpha, \infty}(\cM)} \leq 1$, there exists $\tilde{f}$ yielded by the ConvResNet architecture, such that
$$
\norm{\tilde{f} - f}_{W^{k, \infty}} \leq {\sf const} \cdot (\tilde{M}\tilde{J})^{-\frac{\alpha-k}{d}} \quad \text{for} \quad k \in \{0, 1\}
$$
with the constant depending on $\alpha, p, \cM$.
\end{theorem}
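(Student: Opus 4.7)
The plan is to combine classical piecewise-polynomial Sobolev approximation with a smoothed (rather than sawtooth) ReLU implementation of multiplication, and then to port the resulting feedforward construction into the ConvResNet format through a standard residual-block simulation. The feature that distinguishes this from the usual $L^\infty$-approximation results is that every building block must have its first derivative controlled, not only its values.

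For the Euclidean statement, I would start by tiling $(0,1)^D$ with a uniform grid of side $1/N$ and writing $f\approx \sum_\nu p_\nu \rho_\nu$, where $p_\nu$ is an averaged Taylor polynomial of degree $\alpha-1$ on cell $\nu$ and $\{\rho_\nu\}$ is a $C^\infty$ tensor-product partition of unity. The Bramble--Hilbert lemma then yields $\|f - \sum_\nu p_\nu \rho_\nu\|_{W^{s,p}} = O(N^{-(\alpha-s)/D})$ for $s \le \alpha$. Each summand $p_\nu \rho_\nu$ is, after an affine change of coordinates, a product of at most $D+\alpha-1$ one-dimensional Lipschitz factors, so the only nontrivial primitive to implement in a ReLU network is the bilinear map $(x,y)\mapsto xy$. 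I would use a deep smoothed-Yarotsky approximation of $x\mapsto x^2$ whose $W^{1,\infty}$ error decays geometrically in depth, and then obtain multiplication via the polarization identity $xy=\tfrac14\bigl((x+y)^2-(x-y)^2\bigr)$. Iterating produces monomials of total degree $\le\alpha-1$ to any prescribed $W^{1,p}$ accuracy, and placing all $O(N^D)$ cellwise products in parallel gives a feedforward ReLU network $\tilde f$ matching $f$ to the stated order.

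The last step is to embed this feedforward network into a ConvResNet. A dense affine layer of width $W$ can be simulated by $O(W)$ filters equipped with identity skip connections: one pads the input so it occupies a single spatial location and lets channels play the role of hidden units, so that both the block count $\tilde M$ and the per-filter channel count $\tilde J$ match the feedforward construction up to absolute constants. Choosing $N^D\asymp \tilde M\tilde J$ then yields the claimed rate. For the manifold case, I would fix a finite atlas $\{(U_i,\phi_i)\}$ with a subordinate $C^\infty$ partition of unity $\{\rho_i\}$, extend each $(\rho_i f)\circ \phi_i^{-1}$ by Whitney extension to a Sobolev function on $\mathbb{R}^d$, and apply the Euclidean construction. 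The local approximants are recombined as $\tilde f=\sum_i \tilde\rho_i\cdot \tilde g_i\circ \tilde\phi_i$, where $\tilde\rho_i$ and $\tilde\phi_i$ are ConvResNet approximations built, respectively, from smooth distance functions to neighbouring charts and from projections onto local tangent planes.

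The principal obstacle I anticipate is the propagation of $W^{1,p}$ error through the nested compositions and products: every product magnifies gradient errors by a factor comparable to the Lipschitz constants of its arguments, so if the smoothed-multiplication subnetwork or the partition-of-unity bumps are too steep, the derivative bound explodes. The delicate bookkeeping is therefore to tune the accuracy $\varepsilon$ of the multiplication primitive, the grid size $N$, and the depth and width of the multiplication subnetwork simultaneously so that each contribution to $\|\tilde f - f\|_{W^{1,p}}$ is of matched order. The manifold argument adds the further requirement that the chart-selector and chart-coordinate approximations be $C^1$-accurate, so that the Jacobian factors produced by pulling back through $\phi_i^{-1}$ can be absorbed into constants depending on $\mathcal{M}$.
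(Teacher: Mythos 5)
Your Euclidean argument is essentially the paper's: localized averaged Taylor polynomials against a partition of unity, Bramble--Hilbert for the polynomial error, a multiplication primitive with $W^{1,\infty}$-controlled error, parallel assembly of the $O(N^D)$ local pieces, and a ConvResNet realization in which the sum is distributed over residual blocks and channels subject to $N^D\asymp \tilde{M}\tilde{J}$. Two small repairs are needed there: for cells of side $1/N$ the intermediate bound should be $O(N^{-(\alpha-s)})$, not $O(N^{-(\alpha-s)/D})$ (otherwise your final balancing would not produce the claimed rate), and for non-integer $s$ the network-implementation error does not follow from Bramble--Hilbert alone --- the paper obtains it by proving the $k=0$ and $k=1$ cases and then invoking an interpolation-space inequality (cf.\ Lemma \ref{lem.net.error}), a step you assert but do not supply.

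The genuine gap is in the manifold gluing. As written, you Whitney-extend $(\rho_i f)\circ\phi_i^{-1}$, approximate it by $\tilde g_i$, and then recombine as $\sum_i \tilde\rho_i\cdot\tilde g_i\circ\tilde\phi_i$; if $\tilde\rho_i\approx\rho_i$ this double-counts the partition of unity (the sum approximates $\sum_i\rho_i^2 f$, not $f$), so $\tilde\rho_i$ must instead be a chart selector equal to $1$ on $\supp(\rho_i)$ and $0$ off the chart. That is exactly where the first-order difficulty sits, and your proposal does not confront it: any ReLU selector transitions from $1$ to $0$ across a band of some width $\Delta$, so its gradient is of order $1/\Delta$ there, and the product with $\tilde g_i$ threatens to blow up in $W^{1,\infty}$ as $\Delta\to 0$; it cannot simply be made ``$C^1$-accurate.'' The paper neutralizes this by taking the partition of unity compactly supported strictly inside each chart with a quantitative margin, so the localized target vanishes on the transition band, and then proving (Lemma \ref{lem.A2}) that the network approximant itself is $O(N\eta\Delta)$ on that band, so that its value times the $O(1/\Delta)$ selector gradient stays of the same order as the other error terms. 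If instead you insist on a genuinely smooth selector built from distance functions, you need the normalization $\rho_i=b_i/\sum_j b_j$, i.e.\ a division realized by the network with $W^{1,\infty}$ control on a region where the denominator is bounded below --- additional machinery your proposal neither mentions nor sizes. (Minor point: the chart maps are affine projections onto tangent planes, so no approximation $\tilde\phi_i$ is needed; they are exactly one linear layer.) Without one of these two mechanisms, the $k=1$ bound on $\mathcal{M}$ does not go through.
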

Our theory restricts to $s \leq 1$, since only first-order weak derivatives exist for ReLU networks.
% Theories can be generalized to higher-order approximation guarantees, when using for example ReLU$^3$ activation.
Moreover, setting $s = 0$ or $s = 1$ is of particular interest, as $s = 0$ recovers the function value approximation guarantee and $s = 1$ extends the guarantee to first-order derivatives. As can be seen, to achieve the same function value approximation error, $s = 1$ requires a larger network, but enjoys good smoothness. This can partially explain that larger networks are often more robust. We refer readers to Corollary \ref{coro.D} for more discussion.

Theorem \ref{thm.informal} implies that as the number of residual blocks increases or each filter having more channels, ConvResNet gives better approximation of the target function. In order to achieve an $\epsilon$-error, we may set $\tilde{M}\tilde{J} = O(\epsilon^{-\frac{D}{\alpha - s}})$ ($O(\epsilon^{-\frac{d}{\alpha - s}})$ for the manifold case), while there is no scaling restriction between $\tilde{M}$ and $\tilde{J}$. See an explicit configuration of ConvResNet architecture depending on $\tilde{M}$ and $\tilde{J}$ in Theorem \ref{thm.D} and Theorem \ref{thm.M}. (Although the rate in the manifold case is independent of $D$, the network size inevitably weakly depends on $D$.)

Our result on Euclidean spaces is related to \citet{guhring2020error,hon2021simultaneous}, nonetheless, they focus on approximation guarantees of feedforward networks in terms of $W^{s,p}$ norm. It is also worth mentioning that our results are complementary to \citet{bubeck2021universal}, which provides a lower bound on network Lipschitz continuity. \citet{bubeck2021universal} suggest that small network suffers from bad Lipschitz continuity, in fitting isoperimetric random data. However, whether large network enjoys good smoothness is questionable. Our result proves that large network indeed yields appealing Lipschitz continuity from a function approximation perspective.

The manifold case draws motivation from the fact that data in real applications are often governed by a small number of free parameters \citep{tenenbaum2000global, roweis2000nonlinear,coifman2005geometric,allard2012multi}. As a concrete example, \citet{pope2021intrinsic} estimate the intrinsic dimension of many benchmark data sets, including MNIST, CIFAR-10/100, and ImageNet. A striking finding is that the intrinsic dimension of ImageNet is merely around $43$, in a sharp contrast to its $224 \times 224 \times 3$ total pixels. Therefore, it is reasonable to model data as a low-dimensional Riemannian manifold, and we show ConvResNet can adapt to data geometric structures and does not suffer from the curse of ambient dimensionality.

\vfull

\paragraph{Related work}
Approximation theories of feedforward neural network have been studied for a long time, most of which dedicate to function value approximation. The earliest literature dates back to late 1980s. For example, \citet{irie1988capabilities, funahashi1989approximate, cybenko1989approximation, hornik1991approximation, chui1992approximation, leshno1993multilayer} investigated the approximation power of two-layer feedforward neural networks with sigmoidal activation for square integrable functions and established some asymptotic results, where the number of neurons goes to infinity. \citet{barron1993universal, mhaskar1996neural} established nonasymptotic results for the so-called ``Barron'' function space. For multi-layer feedforward neural networks with ReLU activation, \citet{yarotsky2017error} analyzed the approximation of Sobolev $W^{\alpha, \infty}$ functions in a $D$-dimensional hypercube, and proved nonasymptotic results that given a pre-specified approximation error $\epsilon$, the depth and width of neural networks need to be at most of the order $O(\epsilon^{-D/\alpha})$ and $O(\log(1/\epsilon))$, respectively. 
% Recently, \citet{chen2019efficient,schmidt2019deep,cloninger2020relu,nakada1907adaptive,shen2019deep} further extended to  $\alpha$-H\"{o}lder functions supported on low-dimensional structures, which avoids the curse of dimensionality. 
More recently, \citet{suzuki2018adaptivity, suzuki2019deep, liu2021besov} extended to more general function classes such as Besov spaces.

Approximation theories for convolutional networks are established by \citet{zhou2020universality,zhou2020theory,petersen2020equivalence}. In \citet{zhou2020universality}, the authors consider CNN with ReLU activation whose width increases linearly from the first layer to the last. They show that such a CNN can approximate functions in Sobolev $W^{\alpha,2}$ space with arbitrary accuracy for integer $\alpha\geq 2+D/2$. To have a better control on the width of the network, the authors of \citet{zhou2020theory} studied downsampled CNNs, and show that the downsampled CNN can approximate Lipschitz ridge functions with an arbitrary accuracy. In \citet{petersen2020equivalence}, the authors show that any approximation bounds of FNN can be achieved by CNNs. The results in \citet{oono2019approximation,liu2021besov} dedicate to convolutional residual networks. In \citet{oono2019approximation}, the authors show that ConvResNets is able to approximate H\"{o}lder functions with an arbitrary accuracy.

Theoretical results on approximating or learning functions on low-dimensional manifold can be found in \citet{shaham2018provable, chui2018deep, schmidt2019deep, chen2019efficient,chen2019nonparametric,chen2020doubly, nakada1907adaptive, cloninger2020relu, shen2019deep, montanelli2020error, liu2021besov,liu2022deep}. These works show that when the target function is defined on or around a low-dimensional manifold, to achieve an approximation error $\epsilon$, the network size mainly depends on the intrinsic dimension and weakly depends on the ambient dimension.

{\bf Notations}: We use lower case letters to denote scalars, bold lower case letters to denote vectors, upper case letters to denote matrices, and calligraphic letters to denote tensors and sets. For $\xb=[x_1,...,x_D]^{\top},\vb=[v_1,...,v_D]^{\top}$, we denote $\xb^{\vb}=x_1^{v_1}\cdots x_D^{v_D}$ (if well-defined) and $|\vb|=\sum_{i=1}^D |v_i|$. Let $\bm{\alpha} = [\alpha_1, ..., \alpha_D]^\top \in \NN^D$ be a multi-index and $f$ be a function, we denote $D^{\bm{\alpha}} f = \frac{\partial^{|\balpha|} f}{\partial x_{1}^{\alpha_1} \cdots \partial x_D^{\alpha_D}}$. Let $\Omega$ be a subset in $\RR^D$, we denote $\overline{\Omega}$ as its closure and $\convex(\Omega)$ as its convex hull. We use $B_r(\cbb)$ to denote the closed Euclidean ball with radius $r$ and centered at $\cbb$.
%!TEX root = NN_approx_sobolev.tex
\vfull

\section{Preliminary}

\vhalf

\subsection{Sobolev Functions}

\vhalf

We focus on studying neural networks for approximating Sobolev functions. We provide a formal definition of Sobolev functions in both Euclidean spaces and on manifolds. We begin with Sobolev functions in Euclidean spaces \citep[Chapter 8]{brezis2011functional}.

\begin{definition}[Sobolev spaces]
Let $\alpha\geq 0,1\leq p\leq \infty$ be integers, and domain $\Omega\subset\RR^D$. We define Sobolev space $W^{\alpha, p}(\Omega)$ as
\begin{align*}
W^{\alpha,p}(\Omega)=\big\{ &f\in L^p(\Omega): D^{\bm{\alpha}}f\in L^p(\Omega) \mbox{ for all } |\bm{\alpha}|\leq \alpha\big\},
\end{align*}
where $\bm{\alpha}$ is a multi-index.
\end{definition}
For $f\in W^{\alpha,p}(\Omega)$, we define its Sobolev norm as
% any integers $0\leq k\leq \alpha$ and $1\leq p<\infty$, the Sobolev semi-norm is defined as 
% \begin{align*}
% 	&|f|_{W^{k,p}(\Omega)}=\Big(\sum_{|\bm{\alpha}|=k} \|D^{\bm{\alpha}}f\|_{L^p(\Omega)}^p\Big)^{1/p},\\
% 	&|f|_{W^{k,\infty}(\Omega)}=\max_{|\bm{\alpha}|=k} \|D^{\bm{\alpha}}f\|_{L^{\infty}(\Omega) },
% \end{align*}
% and the Sobolev norm is defined as
\begin{align*}
& \|f\|_{W^{\alpha,p}(\Omega)}=\Big(\sum_{|\bm{\alpha}|\leq \alpha} \|D^{\bm{\alpha}}f\|_{L^p(\Omega)}^p\Big)^{1/p}.
\end{align*}
In the special case of $p = \infty$, the Sobolev norm can be rewritten as
$\|f\|_{W^{\alpha,\infty}(\Omega)}=\max_{|\bm{\alpha}|\leq \alpha} \|D^{\bm{\alpha}}f\|_{L^{\infty}(\Omega)}$. In this case, $\|f\|_{W^0,\infty}<\infty$ implies the function value is bounded, and  $\|f\|_{W^1,\infty}<\infty$ implies both the function value and its gradient are bounded.

Our later approximation theories will provide error estimate in terms of Sobolev norms. To allow more flexibility, we define fractional Sobolev norms, which can be viewed as a generalization of Sobolev norms to non-integer $\alpha$. The fractional Sobolev functions are defined as follows.
\begin{definition}[Sobolev--Slobodeckij spaces \citep{slobodeckij1958generalized}]
For $0<s<1$ and $1\leq p\leq \infty$, we define $W^{s,p}(\Omega)$ as
\begin{align*}
W^{s,p}(\Omega)=\left\{ f\in L^p(\Omega): \|f\|_{W^{s,p}(\Omega)}<\infty\right\}
\end{align*}
with
\begin{align*}
& \|f\|_{W^{s,p}(\Omega)}= \Big( \|f\|_{L^p(\Omega)}^p + \int_{\Omega}\int_{\Omega} \Big(\frac{|f(\xb)-f(\yb)|}{\|\xb-\yb\|_2^{s+D/p}}\Big)^p d\xb d\yb\Big)^{1/p} 
\end{align*}
for $1\leq p< \infty$ and
\begin{align*}
& \|f\|_{W^{s,\infty}(\Omega)} =  \max\left\{ \|f\|_{L^{\infty}(\Omega)}, \mbox{ess sup}_{\xb,\yb\in \Omega} \frac{|f(\xb)-f(\yb)|}{\|\xb-\yb\|_2^s} \right\}.
\end{align*}
\end{definition}
We restrict our attention to $s < 1$ for simplicity, as we focus on approximation guarantees up to first-order continuity.

Next, we extend Sobolev spaces to Riemannian manifolds. We provide a brief introduction to manifold; a more detailed description can be found in Appendix \ref{appendix.manifold}. Roughly speaking, a Riemannian manifold $\cM$ is a collection of local neighborhoods, each of which is diffeomorphic to a low-dimensional Euclidean space. These local neighborhoods are termed charts, and a collection of which is an atlas. We provide a formal definition.

%\vhalf 

\begin{definition}[Atlas]
A smooth atlas for a $d$-dimensional manifold $\cM \subset \RR^D$ is a collection of charts $\{(U_\alpha, \phi_\alpha)\}_{\alpha \in \cA}$, which verifies $\bigcup_{\alpha \in \cA} U_\alpha = \cM$ and $\phi_\alpha: U_\alpha \mapsto \RR^d$ being diffeomorphic and pairwise compatible, i.e.,
\begin{align*}
&\phi_\alpha \circ \phi_\beta^{-1} : \phi_\beta(U_\alpha \cap U_\beta) \to \phi_\alpha(U_\alpha \cap U_\beta) \quad \textrm{and} \quad \phi_\beta \circ \phi_\alpha^{-1} : \phi_\alpha(U_\alpha \cap U_\beta) \to \phi_\beta(U_\alpha \cap U_\beta)
\end{align*} 
are both smooth for any $\alpha,\beta\in \cA$. An atlas is called finite if it contains finitely many charts.
\end{definition}
To define Sobolev spaces on a manifold $\cM$, we shall consider function regularity on each chart, as charts are geometrically ``akin'' to a Eulidean space through the chart mapping $\phi_\alpha$. One caveat, however, is that the chart mapping $\phi_\alpha$ can be arbitrarily rescaled, which results in potential unboundedness. We therefore, fix an atlas on $\cM$ to mitigate this issue. We are ready to define Sobolev spaces on a manifold \citep[Definition 48.17]{driver2003analysis}.

%\vhalf 

\begin{definition}[Sobolev spaces on manifold]\label{def.sobolevM}
Let $\cM$ be a compact Riemannian manifold of dimension $d$. Let $\{(U_i, \phi_{i})\}_{i=1}^{C_{\cM}}$ be a finite atlas on $\cM$ and $\{\rho_i\}_{i=1}^{C_{\cM}}$ be a partition of unity on $\cM$ such that $\supp(\rho_i)\subset U_i$. For integers $k\geq 0$ and $1\leq p\leq \infty$, a function $f: \cM \to \RR$ is in the Sobolev space $W^{k,p}(\cM)$ if
\begin{align*}
\|f\|_{W^{k,p}(\cM)}:=\sum_{i=1}^{C_{\cM}} \|(f\rho_i)\circ\phi_{i}^{-1}\|_{W^{k,p}(\phi_i(U_i))}<\infty.
\end{align*}
\end{definition}
Since $\cM$ is compact, a finite altas exists on $\cM$. Besides, we introduce the partition of unity $\rho_i$ to follow the standard definition in \citet[Definition 13.4]{tu2010introduction}. The existence of a smooth partition of unity is shown in Appendix \ref{appendix.manifold}. From Definition \ref{def.sobolevM}, we observe that a Sobolev function on $\cM$ is locally Sobolev on each chart.
%{\color{red} We need to find a reference of this definition. I saw them in
%	\begin{itemize}
%		\item A note for $p=2$: Definition 1.2.30. \href{http://math.ecnu.edu.cn/~bliu/files/Analysis\%20on\%20manifolds/Sobolev\%20space.pdf}{link} 
%		\item Stack Exchange answer: \href{https://math.stackexchange.com/questions/1233744/fractional-sobolev-spaces-on-closed-manifolds}{link}.
%	\end{itemize}
%}

\vfull

\subsection{Convolutional Residual Networks}\label{sec:convresnet}
\vfull

We consider one-sided stride-one convolution in our network. Let $\cW=\{\cW_{j,k,l}\}\in \RR^{C'\times K\times C}$ be a filter where $C'$ is the output channel size, $K$ is the filter size and $C$ is the input channel size. For $Z\in \RR^{D\times C}$, the convolution of $\cW$ with $Z$ gives $Y = \cW * Z \in \RR^{D\times C'}$ with
\begin{align*}
Y_{i,j}=\sum_{k=1}^K \sum_{l=1}^{C} \cW_{j,k,l} Z_{i+k-1,l},
\end{align*}
%, denoted by $y=\cW *z\in \RR^{D\times C'}$  is defined as
%\begin{align*}
%  y_{i,j}=\sum_{k=1}^K \sum_{l=1}^{C'} \cW_{j,k,l} z_{j+k-1,l}
%\end{align*}
where we set $Z_{i+k-1,l}=0$ for $i+k-1>D$. See a graphical demonstration in Figure \ref{fig.conv}(a).

In this paper, we study convolutional residual networks (ConvResNets) equipped with the rectified linear unit ($\ReLU$) activation function ($\ReLU(z)=\max(z,0)$).
%to approximate $\Bnorm(\cM)$ functions and to solve the classification problems.
The ConvResNet we consider consists consecutively of a padding layer, several residual blocks, and finally a fully connected output layer.

Given an input vector $\xb \in\RR^{D}$, the network first applies a padding operator $P:\RR^{D}\rightarrow \RR^{D\times C}$ for some integer $C\geq 1$ such that
$$
Z=P(\xb)=\begin{bmatrix}
	\xb & \zero &\cdots & \zero 
\end{bmatrix} \in \RR^{D \times C}.
$$
Then the matrix $Z$ is passed through $M$ residual blocks. To ease the notation, we denote the input matrix to the $m$-th block as $Z_m$ and its output as $Z_{m+1}$ (Consequently, $Z_1 = Z$).
% For an input $\xb$, each residual block computes
% $$
% \xb+F(\xb)
% $$
% where $F$ is a subnetwork consisting of convolutional layers (see more details in Section \ref{sec.ConvResNet}). A residual block is demonstrated in Figure \ref{fig.conv}(b).

In the $m$-th block, let
$\cW_m=\{ \cW_m^{(1)},...,\cW_m^{(L_m)}\}$ and $\cB_m=\{B_m^{(1)},...,B_m^{(L_m)}\}$ be a collection of filters and biases of proper sizes.
The $m$-th residual block maps its input matrix $Z_m$ from $\RR^{D\times C}$ to $\RR^{D\times C}$ by the operator
$$
\Conv_{\cW_m,\cB_m}+\id,
$$
where $\id$ is the identity mapping (also known as the shortcut connection) and
\begin{align} 
&\Conv_{\cW_m,\cB_m}(Z_m)=\ReLU\Big( \cW_m^{(L_m)}*\cdots *\ReLU\left(\cW_m^{(1)}*Z_m+B_m^{(1)}\right)\cdots +B_m^{(L_m)}\Big),
 \label{eq.conv}
\end{align}
with $\ReLU$ applied entrywise. We denote the mapping from input $\xb$ to the output of the $M$-th residual block as
\begin{align}
Q(\xb)=&\left(\Conv_{\cW_M,\cB_M}+\id\right)\circ\cdots\circ \left(\Conv_{\cW_1,\cB_1}+\id\right)\circ P(\xb).
\label{eq.cnnBlock}
\end{align}
%For networks only consisting of residual blocks, we define the network class as
%\begin{align}
%	&\cC^{\Conv}(M,L,J,K,\kappa)=\nonumber\\
%	&\big\{ Q| Q(\xb) \mbox{ is in the form of (\ref{eq.cnnBlock}) with $M$ residual blocks.} \nonumber\\
%	&\hspace{0.5cm}\mbox{Each block has filter size bounded by } K, \mbox{ number of}\nonumber\\
%	&\hspace{0.5cm} \mbox{channels bounded by } J,\ \max_m L_m\leq L,\nonumber\\
%	&\hspace{0.5cm}\max_{m,l}\|\cW_m^{(l)}\|_{\infty} \vee \|B_m^{(l)}\|_{\infty} \leq \kappa\big\},
%\end{align}
%where $\norm{\cdot}_\infty$ denotes $\ell^\infty$ norm of a vector, and for a tensor $\cW$, $\norm{\cW}_{\infty} = \max_{j,k,l} |\cW_{j,k,l}|$. 

\begin{figure}[th!]
	\centering
%	\vspace{-0.1in}
	\subfloat[Convolution.]{\includegraphics[height=3.2cm]{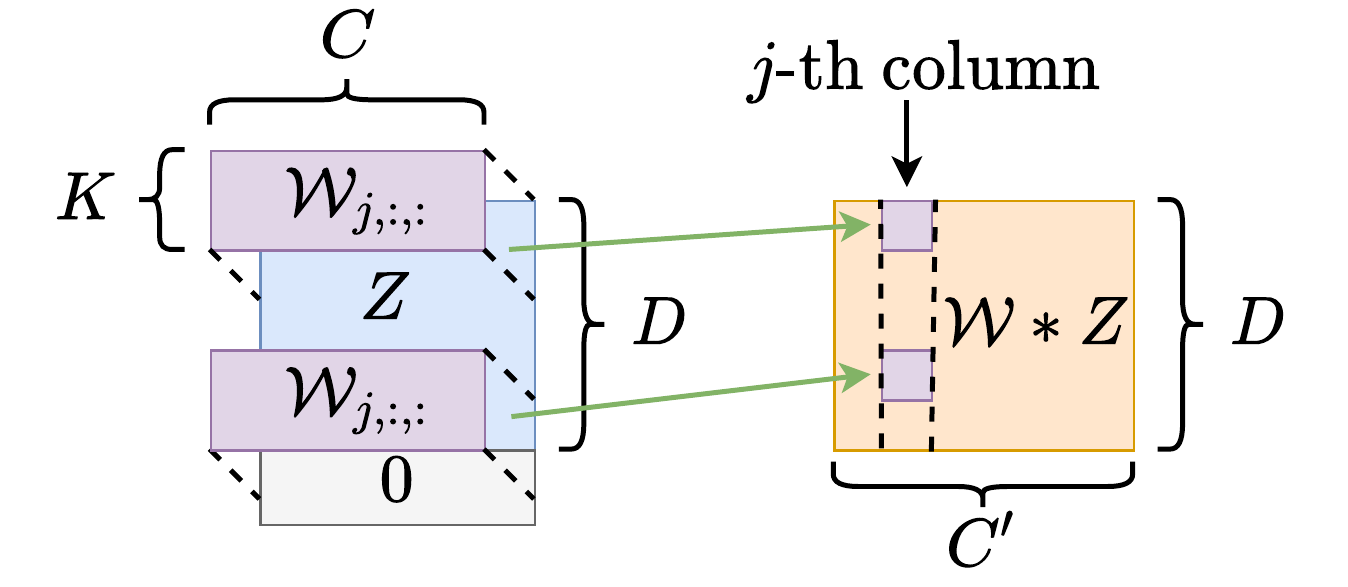}}
	%	\hspace{-0.3cm}
	\subfloat[A residual block.]{\includegraphics[trim={3cm 0 3cm 0},clip,height=3.2cm]{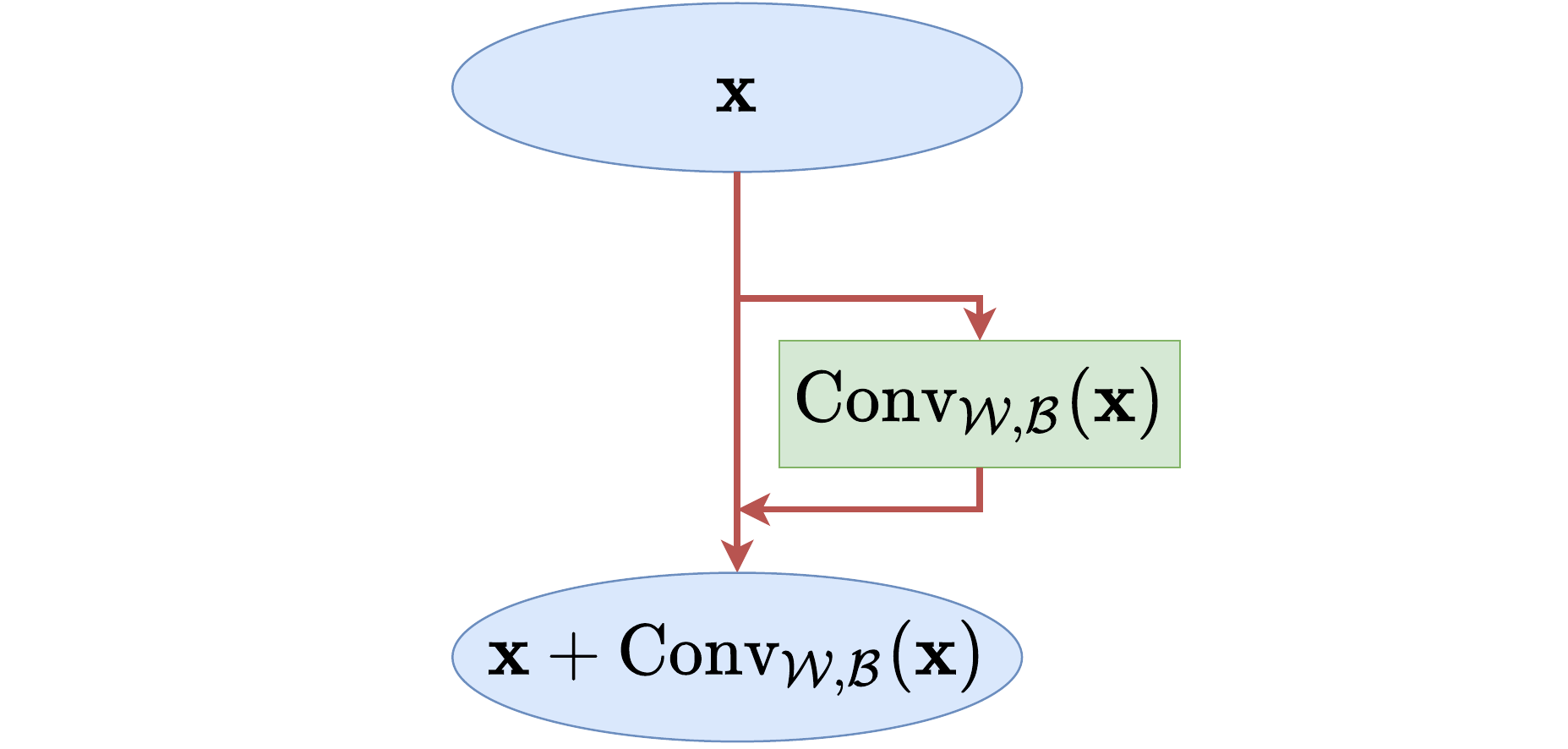}}
	\caption{(a) Convolution of $\cW *Z$, where the input is $Z\in  \RR^{D\times C}$, and the output is $\cW* Z \in \RR^{D\times C'}$. Here $\cW=\{\cW_{j,k,l}\}\in \RR^{C'\times K\times C}$ is a filter where $C'$ is the output channel size, $K$ is the filter size and $C$ is the input channel size.
		$\cW_{j,:,:}$ is a $D\times C$ matrix for the $j$-th output channel. (b) A convolutional residual block.
	}
%\vspace{-0.2in}
	\label{fig.conv}
\end{figure}

Given \eqref{eq.cnnBlock}, a ConvResNet applies an additional fully connected layer to $Q$ and outputs
\begin{align*}
f(\xb)= W \otimes Q(\xb)+b,
\label{eq.cnn}
\end{align*}
where $W \in \RR^{D \times C}$ and $b \in \RR$ are a weight matrix and a bias, respectively, and $\otimes$ denotes sum of entrywise product, i.e., $W \otimes Q(\xb) = \sum_{i, j} W_{i, j} [Q(\xb)]_{i, j}$. To this end, we define a class of ConvResNets of the same architecture as
\begin{align}
&\cC(M,L,J,K,\kappa_1,\kappa_2)=\nonumber\\
&\big\{ f~|~ f(\xb)=W \otimes Q(\xb)+ b \mbox{ with } \|W\|_{\infty} \vee |b| \leq \kappa_2,  Q(\xb) \mbox{ in the form of \eqref{eq.cnnBlock} with $M$ residual blocks.} \nonumber\\
&\hspace{0.7cm} \mbox{The number of filters per block is bounded by }L;   \mbox{ filter size is bounded by } K; \nonumber \\
& \hspace{0.7cm} \mbox{the number of channels is bounded by }J;  \max_{m,l}\|\cW_m^{(l)}\|_{\infty} \vee \|B_m^{(l)}\|_{\infty} \leq \kappa_1  \big\}.
\vhalf
\end{align}

Here $\norm{\cdot}_\infty$ denotes the entrywise maximum norm, i.e., when the input argument is a vector, it returns the vector $\ell^\infty$ norm; when the input is a matrix or a tensor, it returns the maximum magnitude of its entries, e.g., for a $3$-dimensional tensor $\cW$, $\norm{\cW}_{\infty} = \max_{j,k,l} |\cW_{j,k,l}|$. 

%\begin{align}
%	&\cC(M,L,J,K,\kappa_1,\kappa_2)=\nonumber\\
%	&\big\{ f| f(\xb)=WQ(\xb)+\bb \mbox{ with } Q\in \cC^{\Conv}(M,L,J,K,\kappa_1), \nonumber\\
%	&\hspace{0.5cm}\|W\|_{\infty} \vee |\bb| \leq \kappa_2 \big\}.
%\end{align}

%!TEX root = NN_approx_sobolev.tex
\vfull
\section{Approximation in Euclidean Space}\label{sec:euclidean}
\vhalf
Consider a Sobolev function class defined on a unit hypercube $(0, 1)^D$. We aim to use convolutional residual networks for approximating functions in the target class in terms of the $W^{s, p}$ norm. Here $p$ is a positive integer and $s$ can vary in $[0, 1]$; in particular, $s = 0$ corresponds to function value approximation, and $s = 1$ resembles the result Section \ref{sec:intro}. We formally define our target function class as a Sobolev norm ball.
\vhalf
\begin{assumption}\label{assum.f}
Let $\alpha\geq 2, 1\leq p\leq +\infty$ be integers. Assume the target function $f$ satisfies
\begin{align*}
f\in W^{\alpha,p}\left((0,1)^D\right) \quad \text{and} \quad \|f\|_{W^{\alpha,p}\left((0,1)^D\right)}\leq 1.
\end{align*}
	% \begin{enumerate}[noitemsep,topsep=0pt,leftmargin=*] 
	% 	\item[(i)] $
	% 	f\in W^{\alpha,p}\left((0,1)^D\right).
	% 	$
	% 	\item[(ii)] 
	% 	$
	% 	\|f\|_{W^{\alpha,p}\left((0,1)^D\right)}\leq R.
	% 	$
	% \end{enumerate}
\end{assumption}
We set the norm ball of radius $1$ for the sake of simplicity, while the results in the sequel hold for any constant radius. We also let $\alpha \geq 2$ for techincal convenience. In the following theorem, we show that ConvResNets can approximate any functions in a Sobolev norm ball in terms of $W^{s, p}$ norm ($s \leq 1$). The approximation error is obtained as a function of the network configuration.
\begin{theorem}\label{thm.D}
For any positive integers $K \in [2, D]$, $\widetilde{M}$, and $\widetilde{J}>0$, we choose 
\begin{align*}
&L=O(\log (\widetilde{M}\widetilde{J})), ~J=O(\widetilde{J}), ~\kappa_1=O((\widetilde{M}\widetilde{J})^{1/D}),  \ \kappa_2=O((\widetilde{M}\widetilde{J})^{1/D}), ~M=O(\widetilde{M}).
\end{align*}
Then given $s \in [0, 1]$, the ConvResNet architecture $\cC(M,L,J,K,\kappa_1,\kappa_2)$ can approximate any function $f$ satisfying Assumption \ref{assum.f}, i.e., there exists $\widetilde{f} \in \cC(M,L,J,K,\kappa_1,\kappa_2)$ with
\begin{align*}
	\|\widetilde{f}-f\|_{W^{s,p}\left((0,1)^D\right)} \leq C_1(\widetilde{M}\widetilde{J})^{-\frac{\alpha-s}{D}}
\end{align*}
for some constant $C_1$ depending on $D,\alpha,p$.
% Such an architecture has
% \begin{align}
% 	&L=O(D\log (\widetilde{M}\widetilde{J})), J=O(\widetilde{J}+D), \kappa_1=O((\widetilde{M}\widetilde{J})^{1/D}), \nonumber\\
% 	& \kappa_2=O((\widetilde{M}\widetilde{J})^{1/D}), M=O(\widetilde{M}).
% \end{align}
\end{theorem}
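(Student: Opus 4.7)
The plan is to follow the classical three-stage scheme for neural network approximation---local polynomial approximation, feedforward ReLU realization, then ConvResNet emulation---while tracking $W^{s,p}$ rather than merely $L^\infty$ errors throughout. Fix an integer $N$ with $N^D = \Theta(\widetilde{M}\widetilde{J})$ and set $h = 1/N$. Cover $(0,1)^D$ by cubes $\{U_{\bm{u}}\}$ of side $2h$ centered at grid points in $\{0,h,\ldots,1\}^D$, together with a tensor-product partition of unity $\{\phi_{\bm{u}}\}$ built from univariate trapezoidal hats subordinate to the cover. On each $U_{\bm{u}}$ take the averaged Taylor polynomial $P_{\bm{u}}$ of degree $\alpha-1$; the Bramble--Hilbert lemma gives
\[
\|f - P_{\bm{u}}\|_{W^{s,p}(U_{\bm{u}})} \leq C\, h^{\alpha-s}\, |f|_{W^{\alpha,p}(U_{\bm{u}})}.
\]
Summing over $\bm{u}$ and using finite overlap of the cover yields $\|f - \bar f\|_{W^{s,p}((0,1)^D)} = O(h^{\alpha-s})$ for $\bar f := \sum_{\bm{u}} \phi_{\bm{u}} P_{\bm{u}}$.

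The second stage realizes $\bar f$ by a feedforward ReLU network. Each bump $\phi_{\bm{u}}$ factors as a product of $D$ piecewise-linear univariate hats, each of which $\ReLU$ represents exactly. Each $P_{\bm{u}}$ has degree at most $\alpha-1$, and its monomials are produced by iterating Yarotsky's squaring construction; the $W^{s,p}$-refinement of \citet{guhring2020error} approximates the multiplication gate to accuracy $\epsilon$ using depth $O(\log(1/\epsilon))$ and constant width while controlling both values and weak derivatives. Taking $\epsilon$ of order $h^{\alpha}$ and running the $N^D$ local subnetworks in parallel produces $\tilde f_{\rm FF}$ with $\|\bar f - \tilde f_{\rm FF}\|_{W^{s,p}} = O(h^{\alpha-s})$, depth $O(\log N)$, and width $O(N^D)$.

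The third stage emulates $\tilde f_{\rm FF}$ inside $\mathcal{C}(M,L,J,K,\kappa_1,\kappa_2)$. I would adapt the padding-plus-shift trick of \citet{oono2019approximation}, also used in \citet{liu2021besov}: a width-$W$ fully connected layer is reproduced by $O(W/J)$ residual blocks of $J$ channels each, using $1{\times}1$ convolutions to implement affine combinations and $K$-tap convolutions to shuttle information across spatial positions, with identity shortcuts carrying along the coordinates that are already ``done.'' Feeding $\tilde f_{\rm FF}$ through this emulator yields $M \cdot J = O(N^D) = O(\widetilde{M}\widetilde{J})$ with $L = O(\log N) = O(\log(\widetilde{M}\widetilde{J}))$. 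The weight bounds $\kappa_1,\kappa_2 = O(N) = O((\widetilde{M}\widetilde{J})^{1/D})$ arise naturally because the coefficients of $P_{\bm{u}}$ scale like $h^{-|\bm{v}|}$ once each cube is rescaled to unit size before taking the Taylor expansion.

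The main obstacle is Stage 2: achieving $W^{s,p}$ rather than $L^\infty$ accuracy for the product $\phi_{\bm{u}} P_{\bm{u}}$. Because ReLU networks lie only in $W^{1,\infty}_{\rm loc}$, the weak derivatives of the iterated-squaring approximation have jump discontinuities, and one must control $\|D^{\bm{\beta}}(\phi_{\bm{u}} P_{\bm{u}} - \widetilde{\phi_{\bm{u}} P_{\bm{u}}})\|_{L^p}$ for $|\bm{\beta}|\leq 1$ and then sum over the $N^D$ overlapping patches without paying a factor of $N^D$; this is what pins down the per-gate accuracy and produces the $L = O(\log(\widetilde{M}\widetilde{J}))$ depth factor. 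A secondary point is checking that the Stage-3 emulation preserves derivative bounds: since each residual block only permutes, copies, and linearly combines activations (plus an identity shortcut), it is Lipschitz with a controlled constant, but one still has to verify that no cancellation between the shortcut and the nonlinear branch inflates the Sobolev seminorm.
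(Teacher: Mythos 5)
Your proposal follows essentially the same route as the paper's proof: a trapezoidal-hat partition of unity on an $N^D$ grid with $N^D\asymp \widetilde{M}\widetilde{J}$, averaged Taylor polynomials controlled via the Bramble--Hilbert lemma, G\"uhring-style ReLU multiplication gates with accuracy $\eta\asymp N^{-\alpha}$ tracked in $W^{k,p}$ for $k=0,1$ (interpolated to fractional $s$), and then serialization of the $N^D$ parallel local subnetworks into $\widetilde{M}$ residual blocks of width $O(\widetilde{J})$ with the identity shortcuts accumulating the sum, exactly the Oono--Suzuki/Liu-et-al.\ mechanism the paper uses in its Lemmas on CNN regrouping and ConvResNet realization. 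The obstacles you flag (summing local $W^{s,p}$ errors over overlapping patches without an $N^D$ factor, and derivative control through the residual emulation) are precisely the points the paper resolves in its network-error lemma and its block-construction lemmas, so your plan is correct and essentially coincides with the paper's argument.
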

Theorem \ref{thm.D} says that the approximation power of ConvResNet amplifies as its width and depth increase. To better interpret the result, we choose $s = 1$ and $p = \infty$, which corresponds to simultaneously approximating function value and first-order derivatives.
\vhalf 
\begin{corollary}\label{coro.D}
In the setup of Theorem \ref{thm.D}, taking $s = 1$ and $p = \infty$, the ConvResNet architecture $\cC(M,L,J,K,\kappa_1,\kappa_2)$ can approximate any $f$ satisfying Assumption \ref{assum.f} up to first-order, i.e., there exists $\tilde{f} \in \cC(M,L,J,K,\kappa_1,\kappa_2)$ with
\begin{align}
\norm{\tilde{f} - f}_\infty  \leq C_2(\widetilde{M}\widetilde{J})^{-\frac{\alpha-1}{D}} \quad \text{and} \quad \sup_i~ \left\| \frac{\partial \tilde{f}}{\partial x_i} - \frac{\partial f}{\partial x_i}\right\|_\infty  \leq C_2(\widetilde{M}\widetilde{J})^{-\frac{\alpha-1}{D}}, \nonumber
\end{align}
where the constant $C_2$ depends on $D$ and $\alpha$. In particular, we have Lipschitz continuity bound
\begin{align*}
\norm{\tilde{f}}_{\rm Lip} \leq 1 + C_2 \sqrt{D} (\tilde{M}\tilde{J})^{-\frac{\alpha-1}{D}}.
\end{align*}
	% Suppose Assumption \ref{assum.f} holds. For any integers $2\leq K\leq D$ and $\widetilde{M},\widetilde{J}>0$, there is a ConvResNet architecture $\cC(M,L,J,K,\kappa_1,\kappa_2)$ such that for any $f\in W^{\alpha,p}((0,1)^D)$, there exists $\widetilde{f}\in \cC(M,L,J,K,\kappa_1,\kappa_2)$ with
	% \begin{align}
	% 	\|\widetilde{f}-f\|_{W^{1,\infty}\left((0,1)^D\right)}<C_2(\widetilde{M}\widetilde{J})^{-\frac{\alpha-1}{D}}
	% 	\label{eq.coro}
	% \end{align}
	% for some constant $C_2$ depending on $D,\alpha,R$.
	% Such an architecture has
	% \begin{align}
	% 	&L=O(D\log (\widetilde{M}\widetilde{J})), J=O(\widetilde{J}+D), \kappa_1=O((\widetilde{M}\widetilde{J})^{1/D}), \nonumber\\ &\kappa_2=O((\widetilde{M}\widetilde{J})^{1/D}), M=O(\widetilde{M}).
	% \end{align}
\end{corollary}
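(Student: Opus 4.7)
The corollary is a direct specialization of Theorem~\ref{thm.D}, followed by a standard conversion from a Sobolev derivative bound to a Lipschitz bound. I plan to carry out three short steps.

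First I would invoke Theorem~\ref{thm.D} with $s = 1$ and $p = \infty$ to obtain a network $\tilde f \in \cC(M,L,J,K,\kappa_1,\kappa_2)$ satisfying
\begin{align*}
\|\tilde f - f\|_{W^{1,\infty}((0,1)^D)} \leq C_1 (\tilde M \tilde J)^{-(\alpha-1)/D}.
\end{align*}
By the definition of the $W^{1,\infty}$ norm recalled in Section~2, this is the same as
\begin{align*}
\max\Big\{\|\tilde f - f\|_\infty,\ \max_{1 \leq i \leq D}\|\partial_i \tilde f - \partial_i f\|_\infty\Big\} \leq C_1 (\tilde M \tilde J)^{-(\alpha-1)/D},
\end{align*}
so the first two displayed inequalities of the corollary follow immediately on setting $C_2 := C_1$.

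For the Lipschitz bound I would then argue as follows. A ReLU ConvResNet is locally Lipschitz and differentiable almost everywhere, so its global Lipschitz constant equals the essential supremum of $\|\nabla \tilde f\|_2$ on $(0,1)^D$. A triangle inequality together with the elementary bound $\|v\|_2 \leq \sqrt{D}\,\|v\|_\infty$ for $v \in \RR^D$ gives
\begin{align*}
\|\tilde f\|_{\rm Lip} \leq \|f\|_{\rm Lip} + \sqrt{D}\,\max_i \|\partial_i \tilde f - \partial_i f\|_\infty.
\end{align*}
Assumption~\ref{assum.f} with $p = \infty$ gives $\max_i \|\partial_i f\|_\infty \leq 1$ and hence controls $\|f\|_{\rm Lip}$ by an absolute constant, while the second term is controlled by the bound obtained above, yielding the advertised inequality $\|\tilde f\|_{\rm Lip} \leq 1 + C_2 \sqrt{D}(\tilde M \tilde J)^{-(\alpha-1)/D}$, possibly after enlarging $C_2$.

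The plan presents no substantive obstacle, since all of the analytic difficulty is encapsulated in Theorem~\ref{thm.D}; the corollary is pure unpacking. The one bookkeeping point worth care is the dimension factor $\sqrt{D}$ in the last step, which comes from converting the componentwise $L^\infty$ control on $\partial_i(\tilde f - f)$ that the $W^{1,\infty}$ estimate supplies into the Euclidean-norm gradient bound that governs the Lipschitz constant.
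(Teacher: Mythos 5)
Your proposal is correct and follows essentially the same route as the paper, which gives no separate proof of Corollary \ref{coro.D} but uses exactly this reasoning (specialize Theorem \ref{thm.D} to $s=1$, $p=\infty$, unpack the $W^{1,\infty}$ norm into the sup-norm and partial-derivative bounds, then get the Lipschitz estimate by the triangle inequality with the $\sqrt{D}$ conversion) verbatim in the proofs of Theorems \ref{thm.prob} and \ref{thm.adversarial}. One small bookkeeping remark: the leading constant $1$ in the Lipschitz bound requires taking $\|f\|_{\rm Lip}\leq 1$ directly from $\|f\|_{W^{\alpha,\infty}}\leq 1$ (as the paper does, in the componentwise sense), since your fallback of ``enlarging $C_2$'' cannot absorb a dimension-dependent constant such as $\sqrt{D}$ on $\|f\|_{\rm Lip}$ into the term that vanishes as $\widetilde{M}\widetilde{J}\to\infty$.
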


Theorem \ref{thm.D} and Corollary \ref{coro.D} have rich implications.

\noindent {\bf Large network for smooth approximation}. Taking $s = 0$ in Theorem \ref{thm.D} recovers function approximation in terms of $L_\infty$ norm. The corresponding approximation error scales as $O((\tilde{M} \tilde{J})^{-\frac{\alpha}{D}})$. A quick comparison to Corollary \ref{coro.D} indicates that in order to additionally capture the first-order information of a target function, large network is needed to achieve the same function value error bound.

\noindent {\bf Arbitrary width and depth}. \citet{guhring2020error,hon2021simultaneous} provide approximation guarantees of feedforward networks in terms of $W^{s, p}$ norm. Despite different network architectures, we remark that our theory covers general networks with arbitrary width and depth. More specifically, for a given approximation error $\epsilon$, \citet{guhring2020error} set the network depth and width as $O(\log 1/\epsilon)$ and $O(\epsilon^{-D/(\alpha - s)})$, respectively. Yet in our result, we only need to ensure $\tilde{M}\tilde{J} = O(\epsilon^{-D/(\alpha - s)})$, which does not require any scaling relation between $\tilde{M}$ and $\tilde{J}$. 
% As a side remark, our ConvResNet does not require cardinality constraints, while in \citet{guhring2020error}, the network is sparsely connected.

Theorem \ref{thm.D} can be used as a tool to analyze the empirical residual error. Specifically, assume the response in the data set contains bounded zero--mean noise, we have the following probability bound on the upper bound of the empirical residual error (see a proof in Appendix \ref{sec.proof.prob})
\begin{theorem}\label{thm.prob}
	Let $\{(\xb_i,y_i)\}_{i=1}^n$ be a given data set where $\xb_i$'s are i.i.d. samples from some distribution defined on $[0,1]^D$ and 
	$$
	y_i=f(\xb_i)+\xi_i
	$$
	with i.i.d. noise $\xi_i$'s satisfying $\EE[\xi_i]=0$ and $|\xi_i|\leq \sigma$ for all $i=1,...,n$. Assume $f$ satisfy Assumption \ref{assum.f} with $p=+\infty$. For $0<\varepsilon<\min\{\sigma,1\}$, let $\cC=\cC(M,L,J,K,\kappa_1,\kappa_2)$ be the network architecture in Theorem \ref{thm.D} with $\widetilde{M}\widetilde{J}=\left(\frac{\varepsilon}{C_1}\right)^{-D/\alpha}=O(\varepsilon^{-D/\alpha})$. We have 
	\begin{align}
		&\PP\bigg(\exists  \widetilde{f}\in \cC : \left\|\widetilde{f}\right\|_{\rm Lip}\leq 1+\sqrt{D}\varepsilon^{\frac{\alpha-1}{\alpha}}\ \mbox{ and }  \frac{1}{n} \sum_{i=1}^n (\widetilde{f}(\xb_i)-y_i)^2\leq 2\varepsilon^2+\sigma^2\bigg)  \geq 1-\exp\left( -\frac{3n\varepsilon^2}{104\sigma^4} \right).
	\end{align} 
\end{theorem}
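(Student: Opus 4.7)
The plan is to build a single network $\widetilde{f}$ from the approximation guarantee in Theorem~\ref{thm.D} / Corollary~\ref{coro.D}, and then handle the random noise by a concentration argument applied to one cross term. First I would apply Theorem~\ref{thm.D} with $p=\infty$ and the prescribed scaling $\widetilde{M}\widetilde{J}=(\varepsilon/C_1)^{-D/\alpha}$; the construction in that theorem yields a single $\widetilde{f}\in\cC(M,L,J,K,\kappa_1,\kappa_2)$ whose $W^{s,\infty}$-error bound holds simultaneously for every $s\in[0,1]$. Specializing $s=0$ gives the deterministic uniform bound $\|\widetilde{f}-f\|_\infty\le\varepsilon$, while $s=1$ (via Corollary~\ref{coro.D}) gives $\|\widetilde{f}\|_{\mathrm{Lip}}\le 1+\sqrt{D}\,\varepsilon^{(\alpha-1)/\alpha}$ after absorbing the constant $C_2$ into the choice of $C_1$. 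The Lipschitz half of the conclusion is then already established, and the uniform bound is the deterministic ingredient I need for the empirical residual.

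\textbf{Decomposition and concentration.} Setting $e_i:=\widetilde{f}(\xb_i)-f(\xb_i)$, so that $|e_i|\le\varepsilon$ for every $i$, I would expand
\begin{align*}
\frac{1}{n}\sum_{i=1}^n(\widetilde{f}(\xb_i)-y_i)^2 \;=\; \frac{1}{n}\sum_{i=1}^n e_i^2 \;-\; \frac{2}{n}\sum_{i=1}^n \xi_i e_i \;+\; \frac{1}{n}\sum_{i=1}^n \xi_i^2.
\end{align*}
The first average is at most $\varepsilon^2$ (from $|e_i|\le\varepsilon$) and the third is at most $\sigma^2$ (from $|\xi_i|\le\sigma$), both deterministically. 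It therefore suffices to show that the cross term satisfies $\frac{2}{n}\big|\sum_i \xi_i e_i\big|\le\varepsilon^2$ with the stated probability. Conditionally on the $\xb_i$'s, the variables $X_i:=\xi_i e_i$ are independent, mean zero, with $|X_i|\le\sigma\varepsilon$ and $\EE[X_i^2\mid\xb_i]\le\sigma^2 e_i^2$, so $\sum_i \EE[X_i^2\mid\xb_i]\le n\sigma^2\varepsilon^2$. Bernstein's inequality then yields a deviation bound of the form $\exp(-c\,n\varepsilon^2/\sigma^2)$ for the event $\{\frac{2}{n}|\sum_i X_i|>\varepsilon^2\}$; the explicit constant $3/(104\sigma^4)$ quoted in the theorem comes out of the standard Bernstein bookkeeping together with the regime $\varepsilon<\sigma$, and marginalizing over the $\xb_i$'s preserves the probability bound. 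Combining on the good event gives $\frac{1}{n}\sum (\widetilde{f}(\xb_i)-y_i)^2 \le \varepsilon^2+\varepsilon^2+\sigma^2 = 2\varepsilon^2+\sigma^2$.

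\textbf{Main obstacle.} The approximation step is a direct invocation of Theorem~\ref{thm.D} / Corollary~\ref{coro.D} and the decomposition is routine; the only step that requires care is the concentration. A naive Hoeffding bound applied directly to the squared residuals $(\widetilde{f}(\xb_i)-y_i)^2$, whose range is at most $(\varepsilon+\sigma)^2$, would produce an exponent of order $n\varepsilon^4/\sigma^4$, which is loose; isolating the cross term and applying Bernstein so as to exploit the variance proxy $\sigma^2\sum_i e_i^2$ upgrades this to $n\varepsilon^2$. Tracking the explicit constants to match the stated factor $3/104$ then reduces to balancing the variance term $n\sigma^2\varepsilon^2$ against the linear bound $\sigma\varepsilon$ in Bernstein under the hypothesis $\varepsilon<\sigma$, which is the only nontrivial bookkeeping in the proof.
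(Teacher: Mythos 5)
Your construction of $\widetilde{f}$ and the Lipschitz half of the claim follow the paper exactly, but your concentration step takes a genuinely different route, and it is worth spelling out the difference. The paper does \emph{not} isolate the cross term: it bounds $\EE[(\widetilde{f}(\xb_1)-y_1)^2]\le \varepsilon^2+\sigma^2$, centers the full squared residuals $X_i=\frac{1}{n}\big[(\widetilde{f}(\xb_i)-y_i)^2-\EE[(\widetilde{f}(\xb_i)-y_i)^2]\big]$, and applies Bernstein to these, using $|X_i|\le 2(\varepsilon^2+\sigma^2)/n$ and $\EE[X_i^2]\le 8(\varepsilon^4+\sigma^4)/n^2$; with $t=\varepsilon^2$ this is precisely the "naive squared-residual" route you dismiss, and its exponent is $\frac{3n\varepsilon^4}{48(\varepsilon^4+\sigma^4)+4(\varepsilon^2+\sigma^2)\varepsilon^2}$, which under $\varepsilon<\min\{\sigma,1\}$ simplifies to $3n\varepsilon^4/(104\sigma^4)$ — note the $\varepsilon^4$, so even the paper's own bookkeeping produces a weaker exponent than the $\varepsilon^2$ in the displayed statement. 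Your decomposition (deterministic bounds $\frac{1}{n}\sum e_i^2\le\varepsilon^2$, $\frac{1}{n}\sum\xi_i^2\le\sigma^2$, then Bernstein on $\sum_i\xi_ie_i$ conditionally on the $\xb_i$'s, with the same implicit assumption $\EE[\xi_i\mid\xb_i]=0$ that the paper uses) is valid, only a one-sided bound is needed, and it yields a failure probability of order $\exp\big(-\tfrac{3n\varepsilon^2}{28\sigma^2}\big)$, which is strictly stronger than what the paper's computation actually delivers; this is the real payoff of your route.

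The genuine gap is your final claim that "the explicit constant $3/(104\sigma^4)$ comes out of the standard Bernstein bookkeeping together with $\varepsilon<\sigma$." It does not. Your variance proxy is $n\sigma^2\varepsilon^2$ and your linear term is $\sigma\varepsilon$, so the denominator in your exponent scales as $\sigma^2$, never as $\sigma^4$; the $\sigma^4$ in the stated constant is an artifact of the paper's variance proxy $8(\varepsilon^4+\sigma^4)/n^2$ for the squared residuals and cannot be reproduced by a cross-term analysis. Concretely, your bound $\exp\big(-\tfrac{3n\varepsilon^2}{28\sigma^2}\big)$ implies the stated $\exp\big(-\tfrac{3n\varepsilon^2}{104\sigma^4}\big)$ only when $28\sigma^2\le 104\sigma^4$, i.e.\ $\sigma^2\ge 7/26$; for smaller $\sigma$ the stated tail is smaller than the one you can prove, so the theorem as displayed does not follow from your argument in that regime (and, as noted above, it does not literally follow from the paper's computation either, which gives $\varepsilon^4$ in the numerator). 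To make your write-up correct you should either state the bound your argument actually proves, $1-\exp(-cn\varepsilon^2/\sigma^2)$ with an explicit $c$ (arguably a cleaner and stronger statement), or switch to the paper's centered-squared-residual decomposition if the literal $104\sigma^4$ form is to be matched.
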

Theorem \ref{thm.prob} implies that with high probability, larger network architectures ensure the existence of a network that has small empirical residual error as well as certain smoothness, i.e., a bounded Lipschitz constant whcih is close to that of the underlying function. Our result is an upper bound counterpart of \citet[Theorem 3]{bubeck2021universal}, in which a high probability lower bound of the Lipschitz constant is derived.

\noindent {\bf Connection to adversarial robustness}. Consider, for example, the supervised learning scenario. Noisy or noiseless response is generated by a ground truth function satisfying Assumption \ref{assum.f}. Corollary \ref{coro.D} then indicates the existence of a properly large ConvResNet capable of smoothly approximating the data model, and the network's Lipschitz constant is approximately that of the ground truth function. Such Lipschitz continuity should be considered nearly optimal, in viewing of the smoothness of the ground truth function. The network's Lipschitz continuity closely relates to adversarial risk \cite{uesato2018adversarial,zhao2021adversarially} defined as
\begin{definition}[Adversarial risk]
	Given a data distribution $\rho$, and a loss function $l(\cdot,\cdot)$, for a positive constant $\delta>0$, we define the adversarial risk of a network $\widetilde{f}$ as
	\begin{align}
		R(\widetilde{f},\delta)=\EE_{(\xb,y)\in \supp(\rho)} \left[\sup_{\xb'\in B_{\delta}(\xb)} \ell\left(\widetilde{f}(\xb'),y\right)\right],
	\end{align}	
where $B_{\delta}(\xb)$ is the Euclidean ball with radius $\delta$ centered at $\xb$.
\end{definition}
In the case $\delta=0$, the adversarial risk $R(\widetilde{f},0)$ reduces to the population risk $\EE_{(\xb,y)\in \supp(\rho)} \left[ \ell\left(\widetilde{f}(\xb),y\right)\right]$. Based on Theorem \ref{thm.D} and Corollary \ref{coro.D}, we have the following theorem on adversarial risk (see a proof in Appendix \ref{sec.proof.adversarial}):
\begin{theorem}\label{thm.adversarial}
	Let $\rho$ be a data distribution defined on $[0,1]^D\times [-R,R]$ for some constant $R$ and $l(\cdot,\cdot)$ be a loss function with Lipschitz constant $L_{\rm Lip}$. Denote the population risk minimizer by $f$:
	\begin{align}
		f=\argmin_g \EE_{(\xb,y)\in \supp(\rho)} l(g(\xb),y).
	\end{align}
	Assume $f$ satisfies Assumption \ref{assum.f} with $p=+\infty$.
	For $0<\varepsilon<1$, let $\cC(M,L,J,K,\kappa_1,\kappa_2)$ be the network architecture in Theorem \ref{thm.D} with $\widetilde{M}\widetilde{J}=\left(\frac{\varepsilon}{C_1}\right)^{-D/\alpha}=O(\varepsilon^{-D/\alpha})$. Then there exists $\widetilde{f}\in \cC(M,L,J,K,\kappa_1,\kappa_2)$ so that
	\begin{align}
		\|\widetilde{f}-f\|_{\infty}\leq \varepsilon,\quad \left\|\widetilde{f}\right\|_{\rm Lip}\leq 1+\sqrt{D}\varepsilon^{\frac{\alpha-1}{\alpha}}
	\end{align}
	and
	\begin{align}
		R(\widetilde{f},\delta)\leq R(\widetilde{f},0)+L_{\rm Lip}\left(1+\sqrt{D}\varepsilon^{\frac{\alpha-1}{\alpha}}\right)\delta.
	\end{align}
\end{theorem}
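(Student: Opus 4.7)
\textbf{Proof plan for Theorem \ref{thm.adversarial}.} The strategy is to invoke Corollary \ref{coro.D} to produce a ConvResNet $\widetilde{f}$ that simultaneously approximates $f$ in value and in first derivative, and then use Lipschitzness (of both $\ell$ and $\widetilde{f}$) to compare $R(\widetilde{f},\delta)$ with $R(\widetilde{f},0)$.

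\textbf{Step 1: produce $\widetilde{f}$ via Corollary \ref{coro.D}.} Since $f$ satisfies Assumption \ref{assum.f} with $p=+\infty$, the corollary applies. Choose $\widetilde{M}\widetilde{J}=(\varepsilon/C_1)^{-D/\alpha}$ as in the hypothesis, so that the function-value bound $C_2(\widetilde{M}\widetilde{J})^{-\alpha/D}$ becomes at most $\varepsilon$ (after absorbing $C_1,C_2$ into the constant hidden in $O(\cdot)$, or equivalently redefining $C_1$). The derivative bound in Corollary \ref{coro.D} simultaneously gives
\[
\sup_i \left\| \frac{\partial \widetilde{f}}{\partial x_i}-\frac{\partial f}{\partial x_i}\right\|_{\infty}\leq C_2(\widetilde{M}\widetilde{J})^{-(\alpha-1)/D}=C_2(\varepsilon/C_1)^{(\alpha-1)/\alpha}.
\]
Combined with $\|\nabla f\|_{\infty}\leq \|f\|_{W^{1,\infty}}\leq 1$ coming from Assumption \ref{assum.f}, a triangle-inequality/Euclidean-norm computation (identical to the one in Corollary \ref{coro.D}) yields $\|\widetilde{f}\|_{\rm Lip}\leq 1+\sqrt{D}\,\varepsilon^{(\alpha-1)/\alpha}$ after folding $C_2/C_1^{(\alpha-1)/\alpha}$ into the overall constant. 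This establishes the first two displayed inequalities of the theorem.

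\textbf{Step 2: bound the adversarial risk.} For any $(\xb,y)\in\supp(\rho)$ and $\xb'\in B_\delta(\xb)$, the $L_{\rm Lip}$-Lipschitzness of $\ell$ in its first argument gives
\[
\ell\big(\widetilde{f}(\xb'),y\big)\leq \ell\big(\widetilde{f}(\xb),y\big)+L_{\rm Lip}\,\big|\widetilde{f}(\xb')-\widetilde{f}(\xb)\big|\leq \ell\big(\widetilde{f}(\xb),y\big)+L_{\rm Lip}\,\|\widetilde{f}\|_{\rm Lip}\,\|\xb'-\xb\|_2.
\]
Since $\|\xb'-\xb\|_2\leq \delta$, taking supremum over $\xb'\in B_\delta(\xb)$ and then expectation over $(\xb,y)\sim\rho$ gives
\[
R(\widetilde{f},\delta)\leq R(\widetilde{f},0)+L_{\rm Lip}\,\|\widetilde{f}\|_{\rm Lip}\,\delta.
\]
Plugging in the Lipschitz bound from Step 1 yields the claimed inequality.

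\textbf{What is (and is not) the obstacle.} Nothing genuinely difficult happens here: the content is already packaged in Corollary \ref{coro.D}, and the adversarial-risk estimate is just a one-line Lipschitz comparison. The only point requiring mild care is the bookkeeping of constants so that $C_2(\varepsilon/C_1)^{(\alpha-1)/\alpha}$ can be written cleanly as $\sqrt{D}\,\varepsilon^{(\alpha-1)/\alpha}$; this is handled by the standard convention of absorbing dimension- and $\alpha$-dependent constants into the $O(\cdot)$ that defines the choice $\widetilde{M}\widetilde{J}=O(\varepsilon^{-D/\alpha})$, exactly as in Theorem \ref{thm.prob}. No property of the risk minimizer $f$ beyond Assumption \ref{assum.f} is used, so the role of ``$f=\arg\min$'' is purely to motivate why approximating it is the right thing to do, not to invoke any optimality condition.
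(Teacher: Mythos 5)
Your proposal is correct and follows essentially the same route as the paper's proof in Appendix \ref{sec.proof.adversarial}: invoke Theorem \ref{thm.D}/Corollary \ref{coro.D} with the prescribed choice of $\widetilde{M}\widetilde{J}$ to get the $L^\infty$ and first-derivative bounds (hence the Lipschitz bound $1+\sqrt{D}\varepsilon^{(\alpha-1)/\alpha}$, with the same mild glossing over the constant $C_2(\varepsilon/C_1)^{(\alpha-1)/\alpha}$ that the paper itself performs), and then bound $R(\widetilde{f},\delta)-R(\widetilde{f},0)$ by a pointwise Lipschitz comparison of $\ell(\widetilde{f}(\xb'),y)$ with $\ell(\widetilde{f}(\xb),y)$ over $\xb'\in B_\delta(\xb)$ followed by taking supremum and expectation.
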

In Theorem \ref{thm.adversarial}, the difference between the adversarial risk and population risk depends on the Lipschitz constant of the network $\widetilde{f}$, the Lipscthiz constant of the loss function and the adversarial parameter $\delta$. It implies that large networks can give rise to smooth functions with a small adversarial risk, i.e., adversarially robust.
This partially explains the empirical observation that large networks are often smooth with respect to input, and hence, tend to have better robustness. However, how to use practical training algorithms to find such networks remains curiously unclear.

\vfull
\section{Approximation on Manifold}
Theorem \ref{thm.D} indicates a curse of data dimensionality: When data dimension $D$ is large, such as image data, Theorem \ref{thm.D} converges extremely slowly and becomes less attractive.
% In contrast, practical neural networks still achieve remarkable performance. To explain this gap, we take low-dimensional structures in data into consideration, which are commonly observed in applications. For example, \citet{pope2021intrinsic} estimate the intrinsic dimension of many benchmark data sets, including MNIST, CIFAR-10/100, and ImageNet. A striking finding is that the intrinsic dimension of ImageNet is merely around $43$, in a sharp contrast to its $224 \times 224 \times 3$ total pixels. More broadly speaking, many real data sets are governed by a small number of free parameters \citep{tenenbaum2000global, roweis2000nonlinear,coifman2005geometric,allard2012multi}.
Motivated by applications, we model data as a low-dimensional Riemannian manifold $\cM$ and extend our approximation theory to functions defined on $\cM$. We will show that ConvResNet is adaptable to manifold structures. We first impose some mild regularity conditions.
\begin{assumption}\label{assum.M}
$\cM$ is a $d$-dimensional compact Riemannian manifold isometrically embedded in $\RR^D$. It's range is bounded by $B$, i.e., there exists a constant $B>0$ such that for any $\xb\in\cM$, we have $\|\xb\|_{\infty}\leq B$.
\end{assumption}
Besides boundedness, we characterize the curvature of manifold by the following geometric notion.

%\vhalf

\begin{definition}[Reach \citep{federer1959curvature,niyogi2008finding}]
Define the set
\begin{align*}
G=\left\{ \xb\in \RR^D: \exists\mbox{ distinct } \pb,\qb\in\cM \mbox{ such that }  d(\xb,\cM)=\|\xb-\pb\|_2=\|\xb-\qb\|_2\right\}.	
\end{align*}
Then the reach of $\cM$ is defined as
$$
{\rm reach}(\cM) =\inf_{\xb \in \cM} \ \inf_{\yb\in G}\|\xb-\yb\|_2.
$$
\end{definition}
To roughly put, a large reach implies that the manifold is flat. While a manifold with a small reach can be highly zigzagging. Therefore, the reach is highly relevant to the difficulty of capturing the local structures on a manifold. We assume a positive reach on $\cM$.
\begin{assumption}\label{assum.reach}
The reach of $\cM$ is $\tau>0$.
\end{assumption}
Similar to Section \ref{sec:euclidean}, we consider a Sobolev norm ball on $\cM$ as target function class.
%\vhalf
\begin{assumption}\label{assum.M.f}
	Let $\alpha\geq 2$ be an integer. Assume the target function $f$ satisfies
\begin{align*}
f\in W^{\alpha,\infty}\left(\cM\right) \quad \text{and} \quad \|f\|_{W^{\alpha,\infty}\left(\cM\right)}\leq 1.
\end{align*}
\end{assumption}
We now present a counterpart of Theorem \ref{thm.D}, showing an efficient approximation of functions in a Sobolev norm ball on $\cM$.
%\vhalf
\begin{theorem}\label{thm.M}
For any positive integers $K \in [2, D]$, $\widetilde{M}$, and $\widetilde{J}>0$, we choose 
\begin{align*}
&\hspace{0.35in} L=O(\log (\widetilde{M}\widetilde{J}))+D, ~J=O(D\widetilde{J}),   \  \kappa_1=O((\widetilde{M}\widetilde{J})^{1/d}), ~\kappa_2=O((\widetilde{M}\widetilde{J})^{1/d}), ~M=O(\widetilde{M}).
\end{align*}
Then given $k \in \{0, 1\}$, the ConvResNet architecture $\cC(M,L,J,K,\kappa_1,\kappa_2)$ can approximate any function $f$ satisfying Assumption \ref{assum.M.f}, i.e., there exists $\widetilde{f} \in \cC(M,L,J,K,\kappa_1,\kappa_2)$ with
\begin{align*}
	\|\widetilde{f}-f\|_{W^{k,\infty}\left(\cM\right)} \leq C_3 (\widetilde{M}\widetilde{J})^{-\frac{\alpha-k}{d}},
\end{align*}
where constant $C_3$ depends on $d,\alpha,B,\tau$, and the surface area of $\cM$.
% Suppose Assumption \ref{assum.M}--\ref{assum.M.f} hold. For any integers $2\leq K\leq D$ and $\widetilde{M},\widetilde{J}>0$, there is a ConvResNet architecture $\cC(M,L,J,K,\kappa_1,\kappa_2)$ such that for any $f\in W^{\alpha,\infty}(\cM)$, there exists $\widetilde{f}\in \cC(M,L,J,K,\kappa_1,\kappa_2)$ with
% \begin{align}
% 	\|\widetilde{f}-f\|_{W^{k,\infty}\left(\cM\right)}<C_3(\widetilde{M}\widetilde{J})^{-\frac{\alpha-k}{d}}
% \end{align}
% for some constant $C_3$ depending on $D,\alpha,k,B,R,\tau$ and the surface area of $\cM$.
% Such an architecture has
% \begin{align}
% 	&L=O(D\log (\widetilde{M}\widetilde{J})), J=O(D\widetilde{J}), \kappa_1=O((\widetilde{M}\widetilde{J})^{1/d}), \nonumber\\ &\kappa_2=O((\widetilde{M}\widetilde{J})^{1/d}), M=O(\widetilde{M}).
% \end{align}
\end{theorem}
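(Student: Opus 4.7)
The plan is to reduce the manifold approximation problem to a constant number of $d$-dimensional Euclidean approximation problems, and then splice the local pieces into a single ConvResNet. First I fix a finite atlas $\{(U_i,\phi_i)\}_{i=1}^{C_\cM}$ with charts of the form $U_i=B_{r}(c_i)\cap\cM$ for some $r<\tau/2$ and with $\phi_i$ the orthogonal projection onto the tangent space $T_{c_i}\cM$; by Assumption \ref{assum.reach} each such projection is a diffeomorphism onto its image, and standard covering arguments give $C_\cM=O(1)$ depending only on $d,\tau$ and the surface area of $\cM$. Taking a smooth partition of unity $\{\rho_i\}$ subordinate to $\{U_i\}$, Definition \ref{def.sobolevM} implies that each local piece $f_i := (f\rho_i)\circ\phi_i^{-1}$ lies in a Sobolev ball in $W^{\alpha,\infty}(\phi_i(U_i))\subset W^{\alpha,\infty}(\RR^d)$ with norm bounded by a constant depending only on $\cM$, and $f(\xb)=\sum_i f_i(\phi_i(\xb))$ for $\xb\in\cM$.

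Next I invoke Theorem \ref{thm.D} in dimension $d$ (not $D$) on each $f_i$, after trivially extending it to $(0,1)^d$, to obtain a sub-ConvResNet $\widetilde{f_i}$ with
\begin{align*}
\|\widetilde{f_i}-f_i\|_{W^{k,\infty}(\phi_i(U_i))}\le C(\widetilde{M}\widetilde{J})^{-(\alpha-k)/d},
\end{align*}
and define the global approximator by
\begin{align*}
\widetilde{f}(\xb)=\sum_{i=1}^{C_\cM}\widetilde{f_i}\bigl(\phi_i(\xb)\bigr)\,\widetilde{\chi}_i(\xb),
\end{align*}
where $\widetilde{\chi}_i$ is a ReLU-network realization of a smooth cutoff equal to $1$ on $\supp(\rho_i)$ and vanishing outside $U_i$. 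The cutoff is needed because $\phi_i$, as a linear projection, is defined on all of $\RR^D$ even though $f_i$ has been extended by zero beyond $\phi_i(U_i)$; on $\cM$ the gluing identity $f(\xb)=\sum_i f_i(\phi_i(\xb))\widetilde{\chi}_i(\xb)$ still holds, since $f\rho_i$ already vanishes outside $U_i$.

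To place $\widetilde{f}$ inside $\cC(M,L,J,K,\kappa_1,\kappa_2)$, I reuse the building blocks from the proof of Theorem \ref{thm.D}. Each chart map $\phi_i\colon\RR^D\to\RR^d$ is an arbitrary $d\times D$ matrix, so implementing all $C_\cM=O(1)$ of them with one-sided stride-one convolutions of filter size at most $K$ demands $O(D)$ additional convolution layers and $O(D)$ additional channels, which is exactly the source of the additive $D$ in $L$ and the factor $D$ in $J$. The $C_\cM$ sub-networks computing the $\widetilde{f_i}$'s and the cutoffs $\widetilde{\chi}_i$ are stacked across channels in parallel and distributed over the $\widetilde{M}$ residual blocks, while the products $\widetilde{f_i}\cdot\widetilde{\chi}_i$ and their sum are produced by the $W^{1,\infty}$-accurate ReLU multiplier used in the proof of Theorem \ref{thm.D} together with the fully connected output layer, costing only a constant number of additional blocks and channels.

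The main obstacle is controlling the $W^{1,\infty}(\cM)$ norm rather than sup norm. Manifold derivatives are read through $\phi_i^{-1}$, whose Jacobian is bounded by a constant depending on $\tau$, so the chain rule contributes only a multiplicative constant absorbed into $C_3$. The product rule, however, couples the errors: approximating $\widetilde{f_i}\cdot\widetilde{\chi}_i$ in $W^{1,\infty}$ requires simultaneous first-order control of $\widetilde{f_i}-f_i$, of the cutoff $\widetilde{\chi}_i$ together with its Lipschitz norm, and of the ReLU multiplier. This is handled by (i) invoking Theorem \ref{thm.D} at $k=1$, (ii) choosing each $\widetilde{\chi}_i$ with $W^{1,\infty}$-norm bounded independently of $\widetilde{M}\widetilde{J}$, which is possible because $\supp(\rho_i)$ sits compactly inside $U_i$, and (iii) using the $W^{1,\infty}$-accurate ReLU multiplier already established for Theorem \ref{thm.D}. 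Summing the $C_\cM=O(1)$ local errors then yields the claimed rate.
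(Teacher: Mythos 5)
Your overall strategy coincides with the paper's: build an atlas from tangent-space projections over balls of radius $O(\tau)$, take a partition of unity $\{\rho_i\}$ with $\supp(\rho_i)$ well separated from $\partial U_i$, approximate each local piece by the Euclidean result in dimension $d$, localize with a network cutoff, and track $W^{1,\infty}$ errors through the product rule and the approximate multiplier. Where you genuinely deviate is in the localization device, and this is in fact the crux of the paper's proof. The paper approximates the \emph{sharp} indicator $\mone_{[0,r^2]}\circ d_i^2$ by a network $\widetilde{\mone}_i$ whose transition band has width $\Delta\to 0$, so its derivative is $O(1/\Delta)$; the resulting error term $A_2$ is only controlled by a delicate argument (Lemma \ref{lem.A2}) exploiting that $\widetilde{f}_i$ itself is $O(N\eta\Delta)$ on that band because $f\rho_i$ vanishes near $\partial U_i$. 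You instead build the cutoff into an \emph{exact} gluing identity $f=\sum_i (f_i\circ\phi_i)\,\widetilde{\chi}_i$ with a wide transition (between radius $r/2$ and $r$), so $\|\widetilde{\chi}_i\|_{W^{1,\infty}}=O(1)$ independently of the network size and no $1/\Delta$ blow-up ever appears. Provided the details go through, this is a legitimately simpler route to the hardest step, since the remaining errors are just the local $W^{k,\infty}$ approximation error (paper's $A_3$) and the multiplier error (paper's $A_1$).

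Two details do need to be nailed down, and both are fixable. First, your gluing identity is exact only if the realized $\widetilde{\chi}_i$ equals $1$ on $\{\rho_i>0\}$ and $0$ outside $U_i$ \emph{exactly}; a ReLU network cannot compute $d_i^2$ exactly, and if $\widetilde{\chi}_i$ merely approximates an ideal cutoff, the extra term $(f\rho_i)(\widetilde{\chi}_i-\chi_i)$ has $W^{1,\infty}$ norm of order the derivative error of the cutoff, which is $O(1)$ near its breakpoints --- fatal. The fix is the margin trick: compose a trapezoid that is $1$ on $(-\infty,(r/2)^2+\theta]$ and $0$ on $[r^2-\theta,\infty)$ with the $\theta$-accurate squared-distance subnetwork, using the $r/2$ separation from Lemma \ref{lem.parunity}; then the support properties hold exactly and the Lipschitz bound is $O(B/r^2)$. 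Second, your formula $\widetilde{f}=\sum_i \widetilde{f}_i(\phi_i(\xb))\widetilde{\chi}_i(\xb)$ with $\widetilde{f}_i$ a black-box Theorem \ref{thm.D} network cannot be realized as stated: the local approximant is itself a sum of $O(N^d)$ small CNNs spread additively across the residual blocks, and the final layer of a ConvResNet is linear, so you cannot multiply the aggregated local output by $\widetilde{\chi}_i$ at the end. As in the paper (cf.\ \eqref{eq.tildef.1}), the multiplication must be applied per polynomial summand, $\widetilde{\times}(\widetilde{g}_{\mb,\vb}\circ\varphi_i,\widetilde{\chi}_i)$, before reassembling via Lemmas \ref{lem.CNN.adap} and \ref{lem.cnn.convresnet}; this multiplies the $A_1$-type error by the $O(N^d)$ number of summands, which is absorbed by taking the multiplier accuracy polynomially small at logarithmic depth cost. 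With these two repairs your argument is sound and delivers the stated rate and network scaling.
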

As can be seen, the approximation error decays at a rate only depending on intrinsic data dimension $d$, which is a significant improvement over Theorem \ref{thm.D} given $d \ll D$. We also note that the size of ConvResNet has a weak dependence on $D$, yet it is inevitable due to the residual connection preserves input dimensionality.

Theorem \ref{thm.M} can be viewed as further results of recent advances on the adaptability of neural networks for approximating functions on low-dimensional structures. In particular, \citet{chen2019efficient} and \citet{schmidt2019deep} share a very similar setup as Theorem \ref{thm.M}, and established function value approximation theories.

%!TEX root = NN_approx_sobolev.tex

\section{Numerical Experiments}
We verify our theory by numerical experiments. Due to the complex structure of convolutional residual networks, directly estimating the Lipschitz constant is rather difficult. We instead testing the adversarial robustness as an indication of the network smoothness.

 We consider the TRADES model which uses a data driven smoothness regularization and encourages model smoothness. By keeping the same clean testing accuracy, we can compare model smoothness through the robust testing accuracy. We follow the setup in TRADES \citep{zhang2019theoretically}, and report the performance of WideResNet \citep{zagoruyko2016wide} with different widening factor (WF) and number of convolutional layers per residual block (we term as ``depth'' in the sequel). We use the CIFAR-10 data set. Hyperparameters in training are set as follows: perturbation diameter $\epsilon = 0.031$ under the $\ell_\infty$ norm, step size for generating perturbation $0.007$, number of iterations $10$, learning rate $0.1$, batch size $b = 128$ and run $76$ epochs on the training dataset. We run the White-box attacks by applying PGD attack with $20$ iterations (PGD-20) and the step size is $0.003$. We report the robust accuracy $\mathcal{A}_{\rm rob}$  and the natural accuracy $\mathcal{A}_{\rm nat}$ on the test data set. 

%We denote robust accuracy as $\mathcal{A}_{\rm rob} = 1 - \mathcal{R}_{\rm rob}$, and natural accuracy as $\mathcal{A}_{\rm nat} = 1 - \mathcal{R}_{\rm nat}$, where $\mathcal{R}_{\rm nat}$ and $ \mathcal{R}_{\rm rob}$ are robust (classification) error and natural (classification) error, respectively. 

The training objective is
\begin{align*}
	\min _{f} \EE_{(\xb,y)\sim\cD}\mathcal{L}(f(\xb), y)+\max _{\norm{\tilde{\xb}- \xb}_{\infty}\leq\epsilon} \mathcal{R}\left(f(\xb), f\left(\tilde{\xb}\right)\right) / \lambda,
\end{align*}
where $\cL$ is the cross entropy loss, $\cR$ is the KL-divergence, $\xb$ is the clean input, $\tilde{\xb}$ is the adversarial input, $y$ is the label, $\lambda$ is the tuning parameter controlling the strength of the regularizer, and $\cD$ denotes the training dataset $\{\xb_i,y_i\}_{i=1}^n$.

For a fair comparison, we tune $\lambda$ such that networks of different sizes achieve approximately the same natural accuracy. This can be understood as achieving approximately the same $L_\infty$ approximation error to the data model. As can be seen in Table \ref{tab:experiment}, $\cA_{\rm nat}$ of different models about matches the performance in \citet{zhang2019theoretically}, indicating the network has been sufficiently trained. By comparing the robust accuracy $\mathcal{A}_{\rm rob}$, we observe that wider and deeper WideResNet attains better robustness. When fixing the depth, a wider network can achieve a higher robust accuracy. Similarly, when fixing the widening factor, a deeper network can achieve a higher robust accuracy.
% During training, a deeper and wider network allows us to put a stricter regularization (a higher value of $1/ \lambda$) to the model, while the model still maintaining the similar natural accuracy. Such training consideration produces a trained model with higher robust accuracy.
%Our result does not conflict that in \citet{wu2020wider}, which suggest wider networks have worse robustness. 
\begin{table}[h]
    \centering
    \begin{tabular}{c|c|p{7em}|p{7em} }
	    \hline
	    %block size & \multirow{2}{1em}{block size} & \multirow{2}{1em}{clean accuracy} & \multirow{2}{1em}{robust accuracy}\\
	    Depth & WF & $\mathcal{A}_{\rm nat}$ & $\mathcal{A}_{\rm rob}$ \\
	    \hline
	    \multirow{3}{2em}{16} & 1 & $78.87\pm0.47$\%  & $34.31 \pm 0.45$\% \\
	           & 2 &  $79.34 \pm 0.28$ \%  & $46.14 \pm 0.21$\% \\
	           & 4 &  $79.97 \pm 0.04$\%  & $51.40 \pm 0.16$\% \\
	    \hline
	    \multirow{3}{2em}{22} & 1 & $78.51 \pm 0.25$\%  & $41.47 \pm 0.11$\% \\
	           & 2 &  $79.49 \pm 0.48$\%  & $49.63 \pm 0.07$\% \\
	           & 4 &  $80.81 \pm 0.44$\%  &  $53.36 \pm 0.21$\% \\
	    \hline
	    \multirow{3}{2em}{28} & 1 & $79.46 \pm 0.06$\%  & $43.33 \pm 0.57$\% \\
	           & 2 &  $79.01 \pm 0.11$\%  & $50.85 \pm 0.07$\% \\
	           & 4 &  $80.90 \pm 0.71$\%  & $54.45 \pm 0.14$\% \\
	    \hline
	    \multirow{3}{2em}{34} & 1 & $78.58 \pm 0.09$\%  & $46.14 \pm 0.16$\% \\
	           & 2 &  $79.29 \pm 0.35$\%  & $51.63 \pm 0.28$\% \\
	           & 4 &  $80.79 \pm 0.71$\%  & $55.28 \pm 0.35$\% \\
	    \hline
	    \end{tabular}
    \caption{Performance of Wide Residual Networks with different widening factors and depths under PGD-20 attacks.}
    \label{tab:experiment}
%\vspace{-0.2in}
\end{table}

%\begin{table}[h]
%    \centering
%    \begin{tabular}{c|c|p{4.5em}|p{4.5em} }
%    \hline
%    %block size & \multirow{2}{1em}{block size} & \multirow{2}{1em}{clean accuracy} & \multirow{2}{1em}{robust accuracy}\\
%    block width & block number & $\mathcal{A}_{\rm nat}$ & $\mathcal{A}_{\rm rob}$ \\
%    \hline
%    \multirow{4}{2em}{16} & 1 & 67.57\%  & 12.27\% \\
%           & 2 &  68.20\%  & 14.81\% \\
%           & 3 &  68.04\%  & 16.05\% \\
%           & 4 &  68.32\%  & 17.03\% \\
%    \hline
%    \multirow{4}{2em}{32} & 1 & 68.65\%  & 21.17\% \\
%           & 2 &  68.31\%  & 27.23\% \\
%           & 3 &  68.31\%  & 28.96\% \\
%           & 4 &  68.40\%  & 29.70\% \\
%    \hline
%    \multirow{4}{2em}{64} & 1 & 67.85\%  & 29.21\% \\
%           & 2 &  67.61\%  & 34.48\% \\
%           & 3 &  68.23\%  & 35.79\% \\
%           & 4 &  67.88\%  & 36.53\% \\
%    \hline
%    \multirow{4}{2em}{128} & 1 & 67.92\%  & 33.39\% \\
%           & 2 &  68.02\%  & 38.76\% \\
%           & 3 &  68.10\%  & 39.60\% \\
%           & 4 &  68.28\%  & 39.81\% \\
%    \hline
%    \end{tabular}
%    \caption{Comparison of Residual Networks with different block numbers and widths under white-box attacks}
%    \label{tab:experiment}
%\end{table}
\section{Proof Sketch}
We highlight key steps in establishing Theorem \ref{thm.D} and \ref{thm.M} in this section. Full proofs are deferred to Appendix \ref{sec.proof.D} and \ref{sec.proof.M}, respectively.
\subsection{Proof Sketch of Theorem \ref{thm.D}}
The main idea consists of two stages: 1) Approximating target function $f$ in terms of $W^{s, p}$ norm using a sum of averaged Taylor polynomials; 2) Implementing the sum of averaged Taylor polynomials by a given width and depth ConvResNet up to a certain error. In stage 1), we rely on tools from the finite element anaylsis to quantify approximation error. In stage 2), we first represent polynomials using convolutional networks, and then assemble them according to the specified width and depth as a ConvResNet. We dive into the following four steps.
% Our main idea is to approximate $f$ by a sum of local averaged Taylor polynomials, each of which can be approximated by CNNs. ConvResNet is used to realize the sum of CNNs. Our proof is summarized in four steps:
%\begin{enumerate}[noitemsep,topsep=0pt,leftmargin=*] 

\noindent\textbf{Step 1: Decompose $f$ using a partition of unity.} Given the network size parameter $\tilde{M}$ and $\tilde{J}$, we define a partition of unity $\{\phi_j\}_{j=1}^{N^D}$ on $(0,1)^D$ for an integer $N = O((\tilde{M}\tilde{J})^{1/D})$, so that each $\phi_j$ is supported on a small hypercube of edge length $\frac{4}{3N}$. The function $f$ is decomposed into $f=\sum_{j=1}^{N^D} f_j$ with $f_j=f\phi_j$. See Figure \ref{fig.Omega.PoU}(a) for an illustration.

\noindent\textbf{Step 2: Averaged Taylor polynomial approximation.} Each $f_j$ is a Sobolev function, which may not have classical derivatives but weak derivatives. Similar to approximating differentiable functions by Taylor polynomials, we approximate $f_j$ by an averaged Taylor polynomial $\widehat{f}_j$, which is defined in an integral form and indeed is a polynomial. The approximation error of averaged Taylor polynomial is similar to that of using Taylor polynomial, and can be found in Lemma \ref{lem.ATE.error}.

\noindent\textbf{Step 3: Network implementation.} As shown in Lemma \ref{lem.multiplication.CNN} and \ref{lem.product}, CNN can approximate multiplication and compositions of muliplications well. Since a polynomial is a sum of compositions of multiplication, each $\widehat{f}_i$ can be approximated by a sum of $O(1)$ CNNs, and therefore $\sum_{i=1}^{N^D} \widehat{f}_i$ is approximated by a sum of $O(N^D)$ CNNs, each of which has width of $O(1)$. We prove in Lemma \ref{lem.CNN.adap} that such a sum can be realized by a sum of $\widetilde{M}$ CNNs with width $\widetilde{J}$. The new sum can be realized by a ConvResNet with $\widetilde{M}$ residual blocks (Lemma \ref{lem.cnn.convresnet}), where each summand corresponds to a residual block and the sum is realized using skip-layer connections. 
	
%	The sum of CNNs $\sum_{j=1}^{N^D} \widetilde{f}_j$ can be realized by a ConvResNet $\widetilde{f}$.
\noindent\textbf{Step 4: Error estimation.} To estimate the approximation error of $\widetilde{f}$, we decompose the error as
\begin{align}
\|\widetilde{f}-f\|_{W^{s,p}(0,1)^D} \leq &\sum_{j=1}^{N^D}\|\widetilde{f}_j-\widehat{f}_j\|_{W^{s,p}((0,1)^D)}+ \sum_{j=1}^{N^D}\|\widehat{f}_j-f_j\|_{W^{s,p}((0,1)^D)}.
\label{eq.D.error}
\end{align}
On the right-hand side of (\ref{eq.D.error}), the second term is the approximation error of averaged Taylor polynomial, whose upper bound is given by Lemma \ref{lem.ATE.approx.error}. 

The first term is the network implementation error. We derive an upper bound of it in Lemma \ref{lem.net.error}. In the proof of Lemma \ref{lem.net.error}, we first derive an upper bound with respect to the $W^{k,p}$ norm for $k=0,1$. The case $k=0$ corresponds to the error of function value approximation, and the case $k=1$ corresponds to the error of first order weak derivative approximation. Note that each $\widehat{f}_j$ is a polynomial, and each $\widetilde{f}_j$ consists of compositions of $\widetilde{\times}$, the network approximation of multiplication $\times$. The error indeed is the approximation error of compositions of $\widetilde{\times}$. We first derive the $W^{k,\infty}$ approximation error of $\widetilde{\times}$ and then show that compositions of $\widetilde{\times}$ have $W^{k,p}$ approximation errors of the same order.
After the upper bounds of $W^{0,p}$ and $W^{1,p}$ errors are derived, these upper bounds are generalized to $W^{s,p}$ errors using an argument on interpolation spaces, which is discussed in Appendix \ref{sec.interpolation}. %\cms{the interesting part should be how to get $W^{0, p}$ and $W^{1, p}$ error, but not how to get $W^{s, p}$ error for a general $s$.}

Combining the upper bounds of both terms in (\ref{eq.D.error}) gives rise to the total approximation error as a function of $N$. Utilizing the relation $\widetilde{M}\widetilde{J}=O(N^D)$, we can further express the approximation error in terms of number of blocks and width of the ConvResNet. 
%{\color{blue} Note that $\widetilde{f}$ is a sum of $N^D$ CNNs each of which has width $O(1)$. We prove that such a sum can be realized by a sum of $\widetilde{M}$ CNNs with width $\widetilde{J}$ as long as $\widetilde{M}\widetilde{J}=O(N^D)$. The new sum can be realized by a ConvResNet with $\widetilde{M}$ residual blocks. Then for any given $\widetilde{M}$ and $\widetilde{J}$, one can deduce the corresponding value of $N$ and the approximation error.}
%Note the the architecture of the constructed ConvResNet, in particular $\widetilde{M}$ and $\widetilde{J}$, also depend on $N$. Theorem \ref{thm.D} is proved by substituting $N$ in the error by its relation with $\widetilde{M}$ and $\widetilde{J}$. 
%\cms{feels like we did nothing. Can we emphasize how to choose $\tilde{M}$ and $\tilde{J}$ freely? I mean, can we do something in a reverse order, e.g., given $\tilde{M}$ and $\tilde{J}$, we choose $N$.}
%\end{enumerate}

%\vspace{-0.2in}

\begin{figure}[th!]
	\centering
	\subfloat[Partition of unity on $(0,1)^D$.]{\includegraphics[height=1.3in]{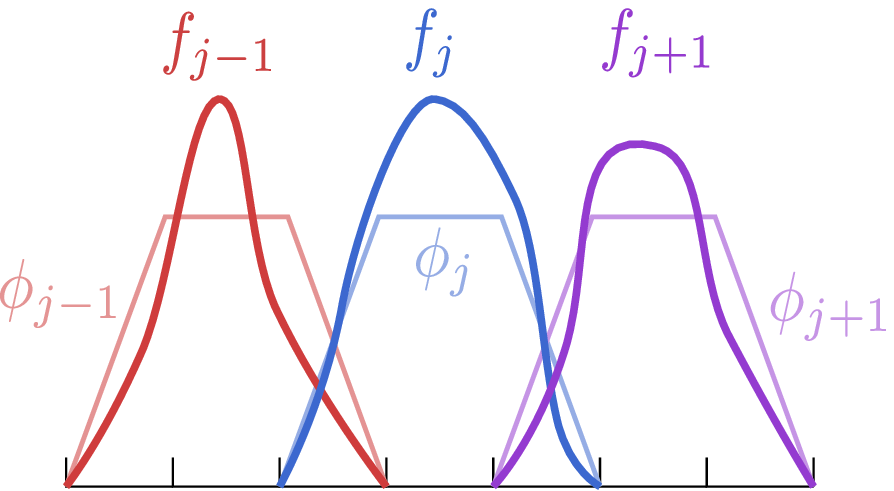}}\hspace{1cm}
	\subfloat[Partition of unity on $\cM$.]{\includegraphics[height=1.3in]{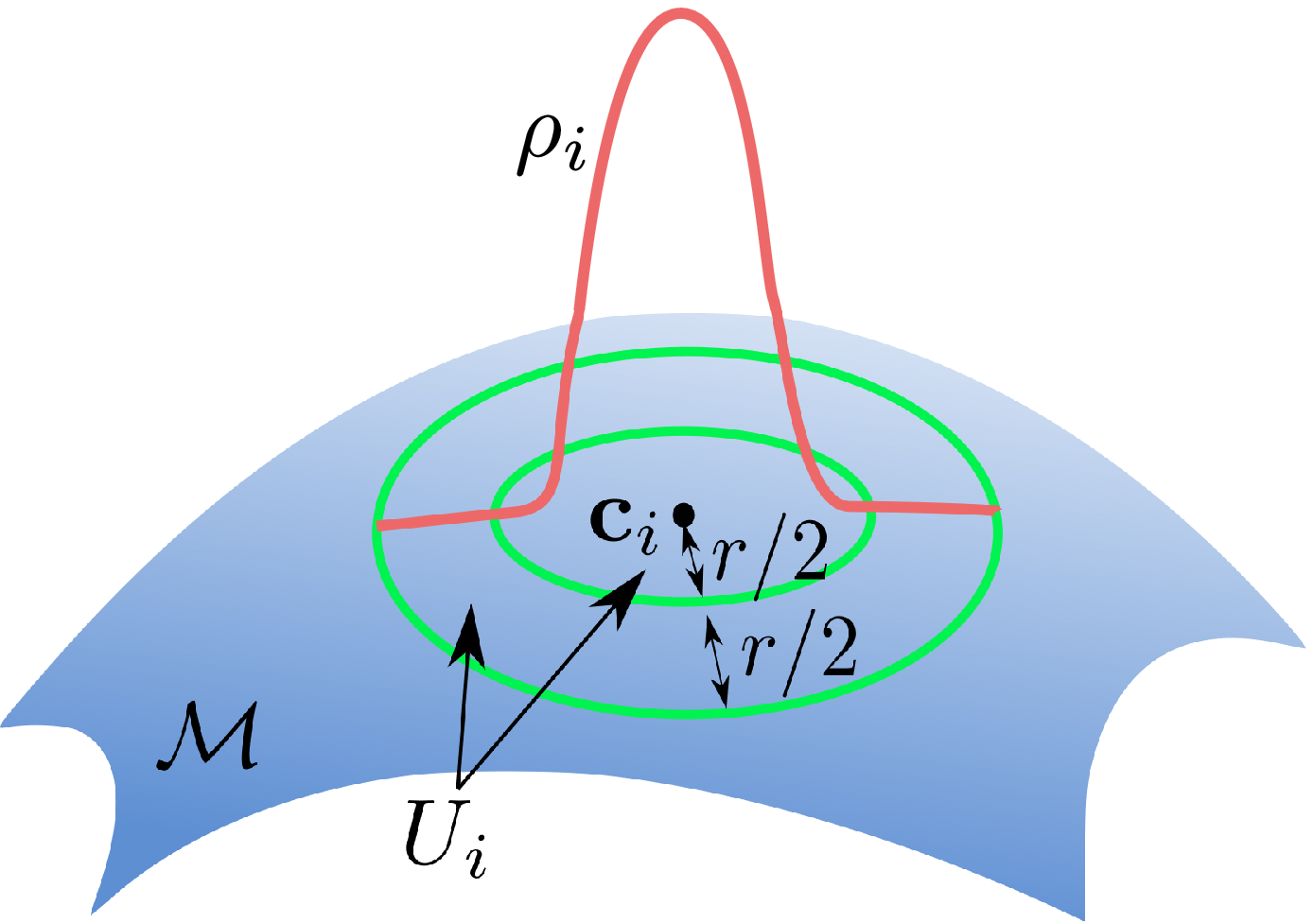}}
	\caption{(a) Illustration of $\phi_j$'s and $f_j$'s in Step 1 of the proof of Theorem \ref{thm.D}. (b) Illustration of the construction of charts and paritition of unity in Step 1 of the proof of Theorem \ref{thm.M}. The red curve represents a cross section of $\rho_i$.}
%	\vspace{-0.2in}
	\label{fig.Omega.PoU}
\end{figure}

%\vfull

\subsection{Proof Sketch of Theorem \ref{thm.M}}
We exploit the geometric nature of manifold $\cM$ and Sobolev functions on it to prove Theorem \ref{thm.M}. By an explicit construction of a finite atlas on $\cM$ based on the curvature condition in Assumption \ref{assum.reach}, we first restrict ourselves to a single chart on $\cM$. Recall Definition \ref{def.sobolevM} that a Sobolev function $f$ on $\cM$ is locally Sobolev on a chart. We are thus, able to locally approximate $f$ on each chart by the results in Theorem \ref{thm.D}. However, the main challenge stems from combining these local approximations to obtain a global guarantee. This requires to determine which charts a given input belongs to. We develop a chart determination sub-network for approximating indicator functions of charts, nonetheless, its Lipschitz continuity is troublesome due to the sharp jump on the boundary of a chart. We resolve such an issue by carefully constructing a partition of unity vanishing at a neighborhood of the boundary of charts. We provide more details in the following four steps.
% The main idea on proving Theorem \ref{thm.M} is to construct an atlas and a corresponding partition of unity of $\cM$. For each chart, approximating $f$ on the corresponding local neighborhood is transferred to approximating a compactly supported Sobolev function on a tangent space of $\cM$, for which our network construction strategy in Theorem \ref{thm.D} is used. Our proof is summarized as the following five steps:
%\begin{enumerate}[noitemsep,topsep=0pt,leftmargin=*] 

\textbf{Step 1: Decompose $f$ using an atlas and partition of unity of $\cM$.} We first construct an atlas and a partition of unity of $\cM$ so that each function in the partition of unity is compactly supported in a chart (Lemma \ref{lem.parunity}). To construct an atlas of $\cM$, we use a set of $D$-dimensional Euclidean balls  $\{B_{r/2}(\cbb_i)\}_{i=1}^{C_{\cM}}$ with centers $\{\cbb_i\}_{i=1}^{C_{\cM}}\subset \cM$ and radius $r/2$ satisfying $0<r<\tau/4$ to cover $\cM$. Since $\cM$ is compact, $C_{\cM}$ is finite. The collection of intersections between each ball and $\cM$, denoted by $\{\widetilde{U}_i\}_{i=1}^{C_{\cM}}$ with $\widetilde{U}_i=B_{r/2}(\cbb_i)\cap \cM$, forms an open cover of $\cM$. It is guaranteed that there exists a $C^{\infty}$ partition of unity $\{\rho_i\}_{i=1}^{\cM}$ so that $\rho_i$ is supported in $\widetilde{U}_i$ (Lemma \ref{lem.parunity.exist}). We then double the radius and denote $U_i=B_{r}(\cbb_i)\cap \cM$. The collection $\{U_i\}_{i=1}^{C_{\cM}}$ is also an open cover of $\cM$. Since $\widetilde{U}_i\subset U_i$, $\rho_i$ is compactly supported in $U_i$ and the distance between the support of $\rho_i$ and $\partial U_i$ is at least $r/2$. For each $U_i$, an orthogonal projection $\varphi_i$ with proper scaling and shifting, which projects any $\xb\in U_i$ to a tangent plane, is constructed so that $\varphi_i(U_i)\subset (0,1)^d$. 
See the proof of Lemma \ref{lem.parunity} for details. With this construction, we illustrate $U_i$ and $\rho_i$ in Figure \ref{fig.Omega.PoU}(b). We then focus on the atlas $\{U_i,\varphi_i\}_{i=1}^{C_{\cM}}$ and partition of unity $\{\rho_i\}_{i=1}^{\cM}$. We decompose $f$ as $f=\sum_{i=1}^{C_{\cM}} (f_i\circ\varphi_i^{-1})\circ \varphi_i$ with $f_i=f\rho_i$. %\cms{are you missing some key explanations such as the gap between the boundary of partition of unity to the boundary of $U_i$ is at least $r/4$?}

%It is guaranteed that there exists a $C^{\infty}$ partition of unity $\{\rho_i\}_{i=1}^{\cM}$ so that $\rho_i$ is compactly supported in $U_i$. For each $U_i$, an orthogonal projection $\varphi_i$, which projects any $\xb\in\cM$ to a tangent plane, is constructed so that $\varphi_i(U_i)\subset (0,1)^d$. Then $\{(U_i,\varphi_i)\}_{i=1}^{C_{\cM}}$ is an atlas of $\cM$.  We decompose $f$ as $f=\sum_{i=1}^{C_{\cM}} (f_i\circ\varphi_i^{-1})\circ \varphi_i$ with $f_i=f\rho_i$.
\textbf{Step 2: Averaged Taylor polynomial approximation.} In the decomposition in Step 1, each $f_i\circ\varphi_i^{-1}$ is a Sobolev function compactly supported in $\varphi_i(U_i)\subset (0,1)^d$. Extend $f_i\circ\varphi_i^{-1}$ to $(0,1)^d$ by 0. The extended function has the same smoothness as $f_i\circ\varphi_i^{-1}$, and can be approximated by a sum of local averaged Taylor polynomials $\sum_{i=1}^{N^d} \widehat{f}_{i,j}$, as what has been done in the proof of Theorem \ref{thm.D}. %\cms{how to choose $N$?}

\textbf{Step 3: Network implementation.} Each polynomial $\widehat{f}_{i,j}$ can be approximated by a CNN $\widetilde{f}_{i,j}$. Since we are only interested in the value of $\widetilde{f}_{i,j}\circ\varphi_i(\xb)$ when $\xb\in U_i$, we need to determine the chart it belongs to. We accomplish this by introducing a chart determination function $\mone_i(\xb)=\mone_{[0,r^2]}\circ d^2_i(\xb)$, where $\mone_{[0,r^2]}(a)$ is a step function which outputs 1 when $a\in[0,r^2]$ and outputs 0 otherwise, $d^2_i(\xb)$ computes the squared Euclidean distance between $\xb$ and $\cbb_i$. The squared distance function $d^2_i$ can be approximated by a CNN with high accuracy. To approximate the step function $\mone_{[0,r^2]}$, we construct a CNN which outputs 1 on $[0,r^2-\Delta]$, 0 on $[r^2,\infty)$ and is linear on $[r^2-\Delta,r^2]$ for some small $\Delta$. The CNN approximation of $\mone_i$ , denoted by $\widetilde{\mone}_i$, is illustrated in Figure \ref{fig.Omega.simple}(a). Our network approximation of $f$ is constructed as
\begin{align*}
\widetilde{f}(\xb)=\sum_{i=1}^{C_{\cM}} \sum_{j=1}^{N^d} \widetilde{\times}(\widetilde{f}_{i,j}\circ\varphi_i(\xb),\widetilde{\mone}_i(\xb)),
\end{align*}
where $\widetilde{\times}$ denotes the CNN approximation of multiplication. By Lemma \ref{lem.CNN.adap} and \ref{lem.cnn.convresnet}, $\widetilde{f}$ can be realized by a ConvResNet with $\widetilde{M}$ blocks and width of $O(\widetilde{J})$ as long as $\widetilde{M}\widetilde{J}=O(N^d)$.
	
%	\begin{figure}[ht!]
%		\centering
%		\includegraphics[width=0.2\textwidth]{}
%		\caption{Illustration of the relations of $\Omega_{i,1},\ \Omega_{i,2}$ and $\varphi_i(U_i)$.}
%		\label{fig.Omega.simple}
%	\end{figure}
%	Since $\sum_{i=1}^{N^d} \widetilde{f}_{i,j}$ is an approximation of $f_{i}\circ\varphi_i^{-1}$ which is only defined on $\varphi_i(U_i)$, we are only interested in the value of $\widetilde{f}_{i,j}\circ\varphi_i(\xb)$ when $\xb\in U_i$.

\textbf{Step 4: Error estimation.} We decompose the error into two parts: 1) the error between $f$ and its averaged Taylor polynomial approximation, and 2) the error between the averaged Taylor polynomial and its network approximation, see (\ref{eq.M.err}) in Appendix \ref{sec.proof.M}. The first part can be bounded using Lemma \ref{lem.ATE.approx.error}.
The second part is characterized by the approximation error of $\widetilde{\times}$ for multiplication, of $\widetilde{f_{i,j}}$ for averaged Taylor polynomials, and of $\widetilde{\mone}_i$ for chart determination $\mone_i$. The first two errors can be bounded using techniques similar to those in the proof of Theorem \ref{thm.D}. 

\begin{figure}[th!]
	\centering
	\subfloat[Chart determination.]{\includegraphics[height=1.2in]{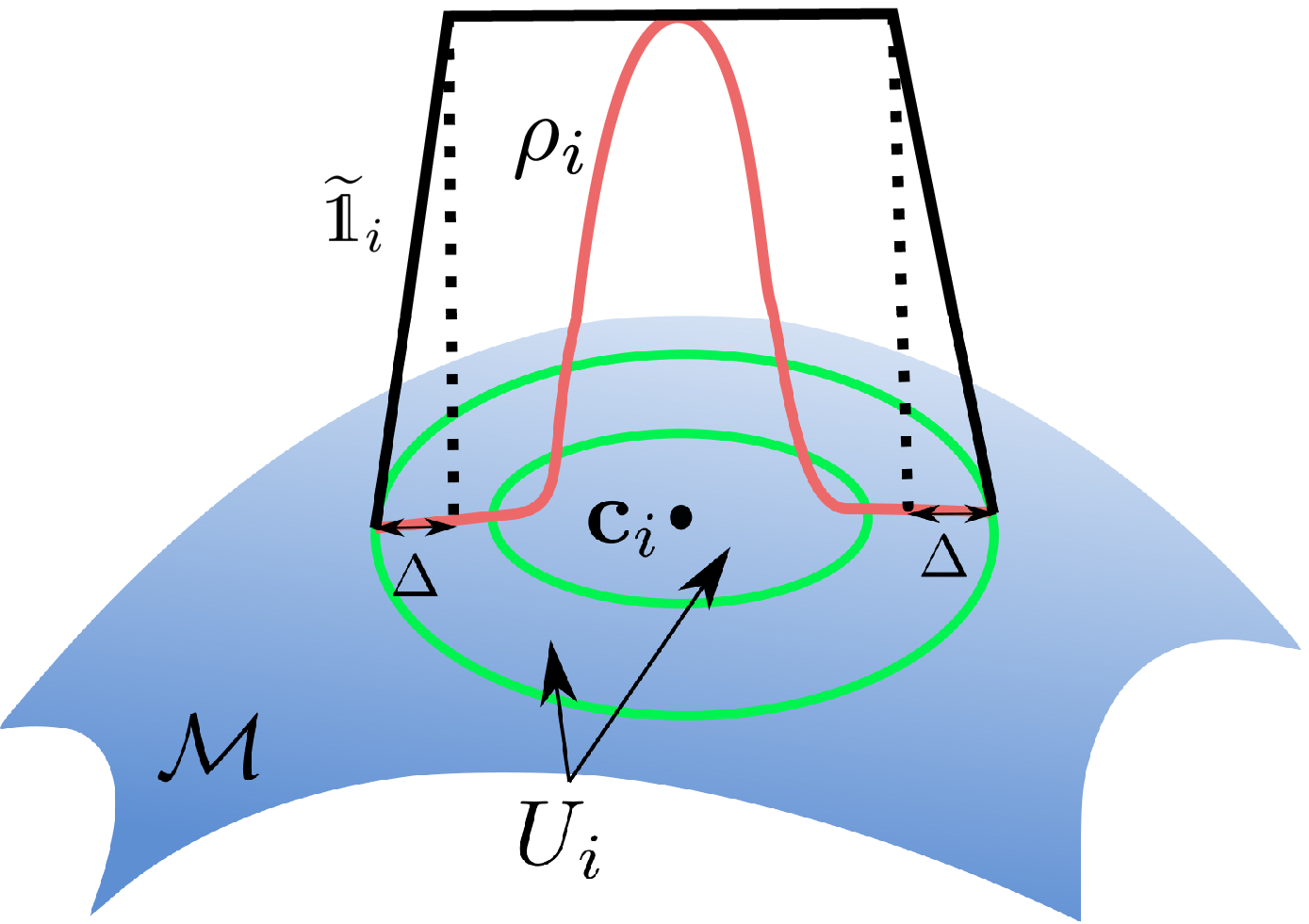}}\hspace{1cm}
	\subfloat[Projected region in $(0,1)^d$.]{\includegraphics[height=1.2in]{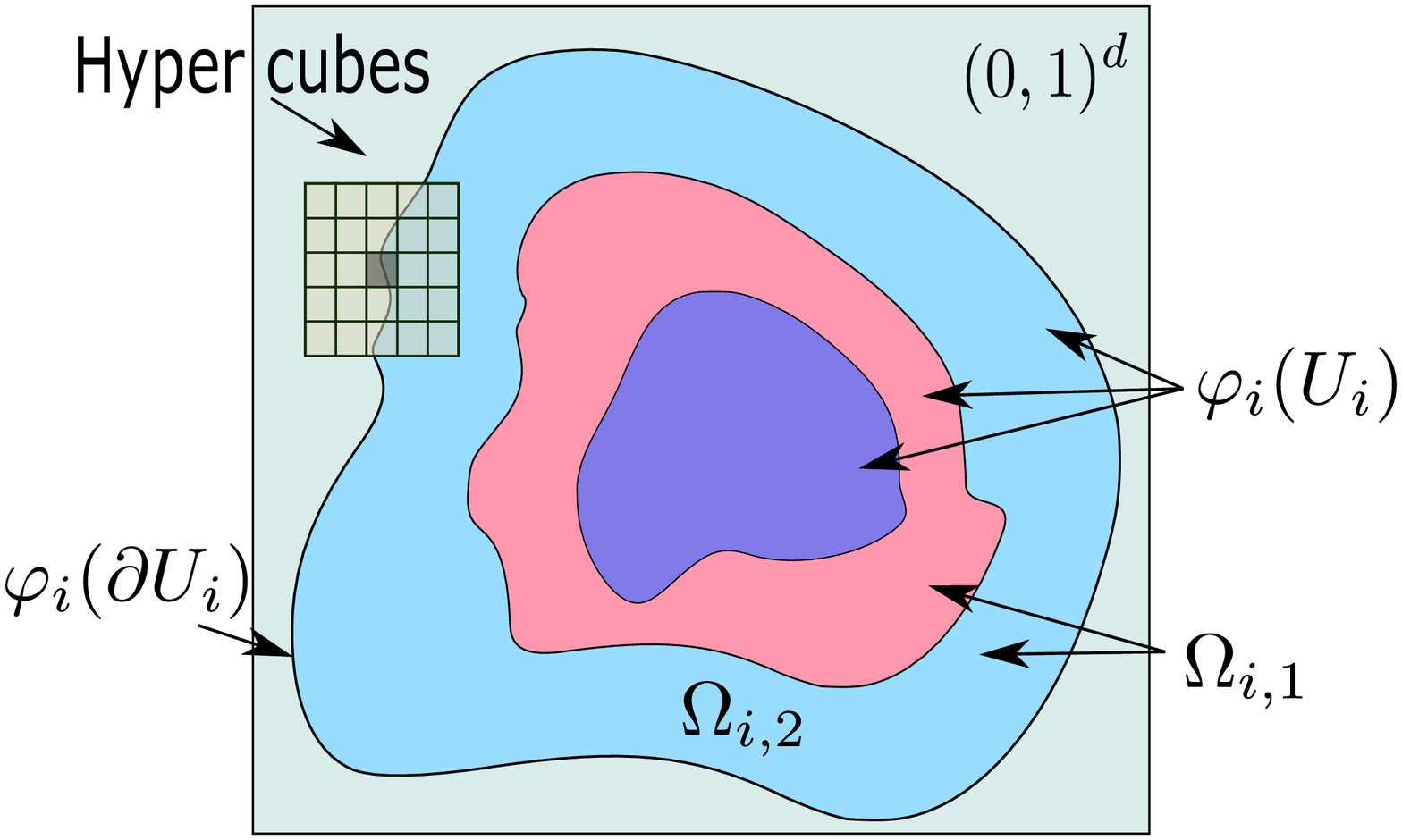}}
	\caption{(a) Illustration of an element of a chart and partition of unity. The red curve represents a cross section of $\rho_i$. (b) Illustration of the chart determination network $\widetilde{\mone}_i$. The black curve represents a cross section of $\widetilde{\mone}_i$. (c) Illustration of the projected regions in $(0,1)^d$.}
%	\vspace{-0.2in}
	\label{fig.Omega.simple}
\end{figure}

For the approximation error of $\widetilde{\mone}_i$, bounding its $W^{1,\infty}$ norm is the most challenging task. To derive an upper bound, one needs to bound
$|(\widetilde{f}_{i,j}\circ\varphi_i)\times (\partial (\widetilde{\mone}_i\circ \varphi_i^{-1})/\partial z_l)|$ for $l=1,...,d$ and $\zb\in \varphi_i(U_i)$. In our network construction, $\widetilde{\mone}_i\circ \varphi_i^{-1}$ is linear on a narrow band, denoted by $\Omega_{i,2}$, with width of $O(\Delta)$. Its weak derivative on the narrow band is of $O(1/\Delta)$, which blows up as $\Delta\rightarrow0$ and causes problems.
To eliminate the effect of $\Delta$, we show that the value of $\widetilde{f}_{i,j}\circ\varphi_i$ is small enough so that its product with $\partial (\widetilde{\mone}_i\circ \varphi_i^{-1})/\partial z_l$ does not blow up as $\Delta\rightarrow 0$.
Specifically, thanks to the fact that $f_i$ is compactly supported on $U_i$, we have $f_i\circ\varphi^{-1}$ is compactly supported on $\varphi_i(U_i)$. Therefore there exists another band $\Omega_{i,1}$ adjacent to $\varphi_i(\partial U_i)$ so that $f_i\circ\varphi^{-1}=0$ on $\Omega_{i,1}$. We choose $\Delta$ small enough so that $\Omega_{i,2}\subset \Omega_{i,1}$, and $\widehat{f}_{i,j}$ and all of its first order weak derivatives vanish on $\Omega_{i,2}$, see Figure \ref{fig.Omega.simple}(a) and (b) for illustrations. Note that $\widetilde{f}_{i,j}$ is an approximation of $\widehat{f}_{i,j}$. We can show that $\widetilde{f}_i=0$ on $\varphi_i(\partial U_i)$, and all of its first order weak derivatives on $\Omega_{i,2}$ are in the same order of other error terms. Since the width of $\Omega_{i,2}$ is of $O(\Delta)$, by Taylor's theorem, $|\widetilde{f}_{i,j}\circ\varphi_i|$ is bounded by a linear function of $\Delta$ on $\Omega_{i,2}$. With such a construction and proper choice of $\Delta$, the resulting upper bound is in the same order of those of other terms. See Lemma \ref{lem.A2} for details. %\cms{do we have a lemma for this result? it would be much better to state a lemma and explain a little bit.}

Combining all of the error bounds, we can express the error in terms of $N$. Substituting the relation $\widetilde{M}\widetilde{J}=O(N^d)$ proves Theorem \ref{thm.M}.
%\end{enumerate}

%!TEX root = NN_approx_sobolev.tex

\section{Conclusion}
We provide universal approximation theories of Convolutional Residual Networks in terms of Sobolev norms. Our theory applies to Sobolev function spaces defined on a high-dimensional hypercube or low-dimensional Riemannian manifold. We demonstrate that deep and wide ConvResNets can provide approximation with good first-order smoothness properties. This partially justifies why using large networks in practice often leads to better performance and robustness.

\section*{Acknowledgment}
The work of Hao Liu is partially supported by HKBU 162784 and HKBU 179356. The work of Wenjing Liao is partially supported by DMS 2012652 and NSF CAREER 2145167. The work of Wenjing Liao and Tuo Zhao is partially supported by DMS 2012652.

\bibliography{ref.bib}

\begin{thebibliography}{62}
\expandafter\ifx\csname natexlab\endcsname\relax\def\natexlab#1{#1}\fi
\expandafter\ifx\csname url\endcsname\relax
  \def\url#1{\texttt{#1}}\fi
\expandafter\ifx\csname urlprefix\endcsname\relax\def\urlprefix{}\fi

\bibitem[{Allard et~al.(2012)Allard, Chen and Maggioni}]{allard2012multi}
\textsc{Allard, W.~K.}, \textsc{Chen, G.} and \textsc{Maggioni, M.} (2012).
\newblock Multi-scale geometric methods for data sets ii: Geometric
  multi-resolution analysis.
\newblock \textit{Appl. Comput. Harmon. Anal.}, \textbf{32} 435--462.

\bibitem[{Barron(1993)}]{barron1993universal}
\textsc{Barron, A.~R.} (1993).
\newblock Universal approximation bounds for superpositions of a sigmoidal
  function.
\newblock \textit{IEEE Trans. Inform. Theory}, \textbf{39} 930--945.

\bibitem[{Brenner et~al.(2008)Brenner, Scott and
  Scott}]{brenner2008mathematical}
\textsc{Brenner, S.~C.}, \textsc{Scott, L.~R.} and \textsc{Scott, L.~R.}
  (2008).
\newblock \textit{The mathematical theory of finite element methods}, vol.~3.
\newblock Springer.

\bibitem[{Brezis and Br{\'e}zis(2011)}]{brezis2011functional}
\textsc{Brezis, H.} and \textsc{Br{\'e}zis, H.} (2011).
\newblock \textit{Functional analysis, Sobolev spaces and partial differential
  equations}, vol.~2.
\newblock Springer.

\bibitem[{Brown et~al.(2020)Brown, Mann, Ryder, Subbiah, Kaplan, Dhariwal,
  Neelakantan, Shyam, Sastry, Askell et~al.}]{brown2020language}
\textsc{Brown, T.~B.}, \textsc{Mann, B.}, \textsc{Ryder, N.}, \textsc{Subbiah,
  M.}, \textsc{Kaplan, J.}, \textsc{Dhariwal, P.}, \textsc{Neelakantan, A.},
  \textsc{Shyam, P.}, \textsc{Sastry, G.}, \textsc{Askell, A.} \textsc{et~al.}
  (2020).
\newblock Language models are few-shot learners.
\newblock \textit{arXiv preprint arXiv:2005.14165}.

\bibitem[{Bubeck and Sellke(2021)}]{bubeck2021universal}
\textsc{Bubeck, S.} and \textsc{Sellke, M.} (2021).
\newblock A universal law of robustness via isoperimetry.
\newblock \textit{arXiv preprint arXiv:2105.12806}.

\bibitem[{Cardaliaguet and Euvrard(1992)}]{cardaliaguet1992approximation}
\textsc{Cardaliaguet, P.} and \textsc{Euvrard, G.} (1992).
\newblock Approximation of a function and its derivative with a neural network.
\newblock \textit{Neural networks}, \textbf{5} 207--220.

\bibitem[{Chen et~al.(2019{\natexlab{a}})Chen, Jiang, Liao and
  Zhao}]{chen2019efficient}
\textsc{Chen, M.}, \textsc{Jiang, H.}, \textsc{Liao, W.} and \textsc{Zhao, T.}
  (2019{\natexlab{a}}).
\newblock Efficient approximation of deep relu networks for functions on low
  dimensional manifolds.
\newblock \textit{Advances in neural information processing systems},
  \textbf{32} 8174--8184.

\bibitem[{Chen et~al.(2019{\natexlab{b}})Chen, Jiang, Liao and
  Zhao}]{chen2019nonparametric}
\textsc{Chen, M.}, \textsc{Jiang, H.}, \textsc{Liao, W.} and \textsc{Zhao, T.}
  (2019{\natexlab{b}}).
\newblock Nonparametric regression on low-dimensional manifolds using deep relu
  networks.
\newblock \textit{arXiv preprint arXiv:1908.01842}.

\bibitem[{Chen et~al.(2020)Chen, Liu, Liao and Zhao}]{chen2020doubly}
\textsc{Chen, M.}, \textsc{Liu, H.}, \textsc{Liao, W.} and \textsc{Zhao, T.}
  (2020).
\newblock Doubly robust off-policy learning on low-dimensional manifolds by
  deep neural networks.
\newblock \textit{arXiv preprint arXiv:2011.01797}.

\bibitem[{Chui and Li(1992)}]{chui1992approximation}
\textsc{Chui, C.~K.} and \textsc{Li, X.} (1992).
\newblock Approximation by ridge functions and neural networks with one hidden
  layer.
\newblock \textit{J. Approx. Theory}, \textbf{70} 131--141.

\bibitem[{Chui and Mhaskar(2018)}]{chui2018deep}
\textsc{Chui, C.~K.} and \textsc{Mhaskar, H.~N.} (2018).
\newblock Deep nets for local manifold learning.
\newblock \textit{Frontiers in Applied Mathematics and Statistics}, \textbf{4}
  12.

\bibitem[{Cloninger and Klock(2020)}]{cloninger2020relu}
\textsc{Cloninger, A.} and \textsc{Klock, T.} (2020).
\newblock Relu nets adapt to intrinsic dimensionality beyond the target domain.
\newblock \textit{arXiv e-prints} arXiv--2008.

\bibitem[{Coifman et~al.(2005)Coifman, Lafon, Lee, Maggioni, Nadler, Warner and
  Zucker}]{coifman2005geometric}
\textsc{Coifman, R.~R.}, \textsc{Lafon, S.}, \textsc{Lee, A.~B.},
  \textsc{Maggioni, M.}, \textsc{Nadler, B.}, \textsc{Warner, F.} and
  \textsc{Zucker, S.~W.} (2005).
\newblock Geometric diffusions as a tool for harmonic analysis and structure
  definition of data: Diffusion maps.
\newblock \textit{Proc. Natl. Acad. Sci.}, \textbf{102} 7426--7431.

\bibitem[{Conway and Sloane(1988)}]{conwaysphere}
\textsc{Conway, J.} and \textsc{Sloane, N.} (1988).
\newblock Sphere packings, lattices and groups.

\bibitem[{Cybenko(1989)}]{cybenko1989approximation}
\textsc{Cybenko, G.} (1989).
\newblock Approximation by superpositions of a sigmoidal function.
\newblock \textit{Math. Control Signals Systems}, \textbf{2} 303--314.

\bibitem[{Devlin et~al.(2018)Devlin, Chang, Lee and Toutanova}]{devlin2018bert}
\textsc{Devlin, J.}, \textsc{Chang, M.-W.}, \textsc{Lee, K.} and
  \textsc{Toutanova, K.} (2018).
\newblock Bert: Pre-training of deep bidirectional transformers for language
  understanding.
\newblock \textit{arXiv preprint arXiv:1810.04805}.

\bibitem[{Dosovitskiy et~al.(2020)Dosovitskiy, Beyer, Kolesnikov, Weissenborn,
  Zhai, Unterthiner, Dehghani, Minderer, Heigold, Gelly
  et~al.}]{dosovitskiy2020image}
\textsc{Dosovitskiy, A.}, \textsc{Beyer, L.}, \textsc{Kolesnikov, A.},
  \textsc{Weissenborn, D.}, \textsc{Zhai, X.}, \textsc{Unterthiner, T.},
  \textsc{Dehghani, M.}, \textsc{Minderer, M.}, \textsc{Heigold, G.},
  \textsc{Gelly, S.} \textsc{et~al.} (2020).
\newblock An image is worth 16x16 words: Transformers for image recognition at
  scale.
\newblock \textit{arXiv preprint arXiv:2010.11929}.

\bibitem[{Driver(2003)}]{driver2003analysis}
\textsc{Driver, B.~K.} (2003).
\newblock Analysis tools with applications.
\newblock \textit{Lecture notes}.

\bibitem[{Federer(1959)}]{federer1959curvature}
\textsc{Federer, H.} (1959).
\newblock Curvature measures.
\newblock \textit{Transactions of the American Mathematical Society},
  \textbf{93} 418--491.

\bibitem[{Funahashi(1989)}]{funahashi1989approximate}
\textsc{Funahashi, K.-I.} (1989).
\newblock On the approximate realization of continuous mappings by neural
  networks.
\newblock \textit{Neural networks}, \textbf{2} 183--192.

\bibitem[{Goodfellow et~al.(2014)Goodfellow, Shlens and
  Szegedy}]{goodfellow2014explaining}
\textsc{Goodfellow, I.~J.}, \textsc{Shlens, J.} and \textsc{Szegedy, C.}
  (2014).
\newblock Explaining and harnessing adversarial examples.
\newblock \textit{arXiv preprint arXiv:1412.6572}.

\bibitem[{Gu and Rigazio(2014)}]{gu2014towards}
\textsc{Gu, S.} and \textsc{Rigazio, L.} (2014).
\newblock Towards deep neural network architectures robust to adversarial
  examples.
\newblock \textit{arXiv preprint arXiv:1412.5068}.

\bibitem[{G{\"u}hring et~al.(2020)G{\"u}hring, Kutyniok and
  Petersen}]{guhring2020error}
\textsc{G{\"u}hring, I.}, \textsc{Kutyniok, G.} and \textsc{Petersen, P.}
  (2020).
\newblock Error bounds for approximations with deep relu neural networks in w
  s, p norms.
\newblock \textit{Analysis and Applications}, \textbf{18} 803--859.

\bibitem[{Hein and Andriushchenko(2017)}]{hein2017formal}
\textsc{Hein, M.} and \textsc{Andriushchenko, M.} (2017).
\newblock Formal guarantees on the robustness of a classifier against
  adversarial manipulation.
\newblock \textit{arXiv preprint arXiv:1705.08475}.

\bibitem[{Hon and Yang(2021)}]{hon2021simultaneous}
\textsc{Hon, S.} and \textsc{Yang, H.} (2021).
\newblock Simultaneous neural network approximations in sobolev spaces.
\newblock \textit{arXiv preprint arXiv:2109.00161}.

\bibitem[{Hornik(1991)}]{hornik1991approximation}
\textsc{Hornik, K.} (1991).
\newblock Approximation capabilities of multilayer feedforward networks.
\newblock \textit{Neural Networks}, \textbf{4} 251--257.

\bibitem[{Hornik et~al.(1990)Hornik, Stinchcombe and
  White}]{hornik1990universal}
\textsc{Hornik, K.}, \textsc{Stinchcombe, M.} and \textsc{White, H.} (1990).
\newblock Universal approximation of an unknown mapping and its derivatives
  using multilayer feedforward networks.
\newblock \textit{Neural networks}, \textbf{3} 551--560.

\bibitem[{Irie and Miyake(1988)}]{irie1988capabilities}
\textsc{Irie, B.} and \textsc{Miyake, S.} (1988).
\newblock Capabilities of three-layered perceptrons.
\newblock In \textit{IEEE International Conference on Neural Networks}, vol.~1.

\bibitem[{Lee(2006)}]{lee2006riemannian}
\textsc{Lee, J.~M.} (2006).
\newblock \textit{Riemannian manifolds: an introduction to curvature}, vol.
  176.
\newblock Springer Science \& Business Media.

\bibitem[{Leshno et~al.(1993)Leshno, Lin, Pinkus and
  Schocken}]{leshno1993multilayer}
\textsc{Leshno, M.}, \textsc{Lin, V.~Y.}, \textsc{Pinkus, A.} and
  \textsc{Schocken, S.} (1993).
\newblock Multilayer feedforward networks with a nonpolynomial activation
  function can approximate any function.
\newblock \textit{Neural Networks}, \textbf{6} 861--867.

\bibitem[{Liu et~al.(2021)Liu, Chen, Zhao and Liao}]{liu2021besov}
\textsc{Liu, H.}, \textsc{Chen, M.}, \textsc{Zhao, T.} and \textsc{Liao, W.}
  (2021).
\newblock Besov function approximation and binary classification on
  low-dimensional manifolds using convolutional residual networks.
\newblock In \textit{International Conference on Machine Learning}. PMLR.

\bibitem[{Liu et~al.(2022)Liu, Yang, Chen, Zhao and Liao}]{liu2022deep}
\textsc{Liu, H.}, \textsc{Yang, H.}, \textsc{Chen, M.}, \textsc{Zhao, T.} and
  \textsc{Liao, W.} (2022).
\newblock Deep nonparametric estimation of operators between infinite
  dimensional spaces.
\newblock \textit{arXiv preprint arXiv:2201.00217}.

\bibitem[{Madry et~al.(2017)Madry, Makelov, Schmidt, Tsipras and
  Vladu}]{madry2017towards}
\textsc{Madry, A.}, \textsc{Makelov, A.}, \textsc{Schmidt, L.},
  \textsc{Tsipras, D.} and \textsc{Vladu, A.} (2017).
\newblock Towards deep learning models resistant to adversarial attacks.
\newblock \textit{arXiv preprint arXiv:1706.06083}.

\bibitem[{Mhaskar(1996)}]{mhaskar1996neural}
\textsc{Mhaskar, H.~N.} (1996).
\newblock Neural networks for optimal approximation of smooth and analytic
  functions.
\newblock \textit{Neural Comput.}, \textbf{8} 164--177.

\bibitem[{Miyato et~al.(2018)Miyato, Maeda, Koyama and
  Ishii}]{miyato2018virtual}
\textsc{Miyato, T.}, \textsc{Maeda, S.-i.}, \textsc{Koyama, M.} and
  \textsc{Ishii, S.} (2018).
\newblock Virtual adversarial training: a regularization method for supervised
  and semi-supervised learning.
\newblock \textit{IEEE transactions on pattern analysis and machine
  intelligence}, \textbf{41} 1979--1993.

\bibitem[{Montanelli and Yang(2020)}]{montanelli2020error}
\textsc{Montanelli, H.} and \textsc{Yang, H.} (2020).
\newblock Error bounds for deep relu networks using the kolmogorov--arnold
  superposition theorem.
\newblock \textit{Neural Networks}, \textbf{129} 1--6.

\bibitem[{Nakada and Imaizumi(2019)}]{nakada1907adaptive}
\textsc{Nakada, R.} and \textsc{Imaizumi, M.} (2019).
\newblock Adaptive approximation and estimation of deep neural network to
  intrinsic dimensionality.
\newblock \textit{arXiv preprint arXiv:1907.02177}.

\bibitem[{Niyogi et~al.(2008)Niyogi, Smale and Weinberger}]{niyogi2008finding}
\textsc{Niyogi, P.}, \textsc{Smale, S.} and \textsc{Weinberger, S.} (2008).
\newblock Finding the homology of submanifolds with high confidence from random
  samples.
\newblock \textit{Discrete \& Computational Geometry}, \textbf{39} 419--441.

\bibitem[{Oono and Suzuki(2019)}]{oono2019approximation}
\textsc{Oono, K.} and \textsc{Suzuki, T.} (2019).
\newblock Approximation and non-parametric estimation of resnet-type
  convolutional neural networks.
\newblock In \textit{International Conference on Machine Learning}. PMLR.

\bibitem[{Petersen and Voigtlaender(2020)}]{petersen2020equivalence}
\textsc{Petersen, P.} and \textsc{Voigtlaender, F.} (2020).
\newblock Equivalence of approximation by convolutional neural networks and
  fully-connected networks.
\newblock \textit{Proceedings of the American Mathematical Society},
  \textbf{148} 1567--1581.

\bibitem[{Pope et~al.(2021)Pope, Zhu, Abdelkader, Goldblum and
  Goldstein}]{pope2021intrinsic}
\textsc{Pope, P.}, \textsc{Zhu, C.}, \textsc{Abdelkader, A.}, \textsc{Goldblum,
  M.} and \textsc{Goldstein, T.} (2021).
\newblock The intrinsic dimension of images and its impact on learning.
\newblock \textit{arXiv preprint arXiv:2104.08894}.

\bibitem[{Roweis and Saul(2000)}]{roweis2000nonlinear}
\textsc{Roweis, S.~T.} and \textsc{Saul, L.~K.} (2000).
\newblock Nonlinear dimensionality reduction by locally linear embedding.
\newblock \textit{Science}, \textbf{290} 2323--2326.

\bibitem[{Schmidt-Hieber(2019)}]{schmidt2019deep}
\textsc{Schmidt-Hieber, J.} (2019).
\newblock Deep relu network approximation of functions on a manifold.
\newblock \textit{arXiv preprint arXiv:1908.00695}.

\bibitem[{Shaham et~al.(2018)Shaham, Cloninger and
  Coifman}]{shaham2018provable}
\textsc{Shaham, U.}, \textsc{Cloninger, A.} and \textsc{Coifman, R.~R.} (2018).
\newblock Provable approximation properties for deep neural networks.
\newblock \textit{Applied and Computational Harmonic Analysis}, \textbf{44}
  537--557.

\bibitem[{Shen et~al.(2019)Shen, Yang and Zhang}]{shen2019deep}
\textsc{Shen, Z.}, \textsc{Yang, H.} and \textsc{Zhang, S.} (2019).
\newblock Deep network approximation characterized by number of neurons.
\newblock \textit{arXiv preprint arXiv:1906.05497}.

\bibitem[{Slobodeckij(1958)}]{slobodeckij1958generalized}
\textsc{Slobodeckij, L.} (1958).
\newblock Generalized sobolev spaces and their applications to boundary value
  problems of partial differential equations, leningrad.
\newblock \textit{Gos. Ped. Inst. Ucep. Zap}, \textbf{197} 54--112.

\bibitem[{Spivak(1973)}]{spivak1973comprehensive}
\textsc{Spivak, M.} (1973).
\newblock A comprehensive introduction to differential geometry.
\newblock \textit{Bull. Amer. Math. Soc}, \textbf{79} 303--306.

\bibitem[{Stein(1970)}]{stein1970singular}
\textsc{Stein, E.~M.} (1970).
\newblock \textit{Singular Integrals and Differentiability Properties of
  Functions}, vol.~2.
\newblock Princeton University Press.

\bibitem[{Suzuki(2019)}]{suzuki2018adaptivity}
\textsc{Suzuki, T.} (2019).
\newblock Adaptivity of deep re{LU} network for learning in besov and mixed
  smooth besov spaces: optimal rate and curse of dimensionality.
\newblock In \textit{International Conference on Learning Representations}.

\bibitem[{Suzuki and Nitanda(2019)}]{suzuki2019deep}
\textsc{Suzuki, T.} and \textsc{Nitanda, A.} (2019).
\newblock Deep learning is adaptive to intrinsic dimensionality of model
  smoothness in anisotropic besov space.
\newblock \textit{arXiv preprint arXiv:1910.12799}.

\bibitem[{Tenenbaum et~al.(2000)Tenenbaum, De~Silva and
  Langford}]{tenenbaum2000global}
\textsc{Tenenbaum, J.~B.}, \textsc{De~Silva, V.} and \textsc{Langford, J.~C.}
  (2000).
\newblock A global geometric framework for nonlinear dimensionality reduction.
\newblock \textit{Science}, \textbf{290} 2319--2323.

\bibitem[{Tu(2010)}]{tu2010introduction}
\textsc{Tu, L.} (2010).
\newblock \textit{An Introduction to Manifolds}.
\newblock Universitext, Springer New York.

\bibitem[{Uesato et~al.(2018)Uesato, O’donoghue, Kohli and
  Oord}]{uesato2018adversarial}
\textsc{Uesato, J.}, \textsc{O’donoghue, B.}, \textsc{Kohli, P.} and
  \textsc{Oord, A.} (2018).
\newblock Adversarial risk and the dangers of evaluating against weak attacks.
\newblock In \textit{International Conference on Machine Learning}. PMLR.

\bibitem[{Weng et~al.(2018)Weng, Zhang, Chen, Yi, Su, Gao, Hsieh and
  Daniel}]{weng2018evaluating}
\textsc{Weng, T.-W.}, \textsc{Zhang, H.}, \textsc{Chen, P.-Y.}, \textsc{Yi,
  J.}, \textsc{Su, D.}, \textsc{Gao, Y.}, \textsc{Hsieh, C.-J.} and
  \textsc{Daniel, L.} (2018).
\newblock Evaluating the robustness of neural networks: An extreme value theory
  approach.
\newblock \textit{arXiv preprint arXiv:1801.10578}.

\bibitem[{Wu et~al.(2020)Wu, Chen, Cai, He and Gu}]{wu2020wider}
\textsc{Wu, B.}, \textsc{Chen, J.}, \textsc{Cai, D.}, \textsc{He, X.} and
  \textsc{Gu, Q.} (2020).
\newblock Do wider neural networks really help adversarial robustness?
\newblock \textit{arXiv preprint arXiv:2010.01279}.

\bibitem[{Yarotsky(2017)}]{yarotsky2017error}
\textsc{Yarotsky, D.} (2017).
\newblock Error bounds for approximations with deep relu networks.
\newblock \textit{Neural Networks}, \textbf{94} 103--114.

\bibitem[{Zagoruyko and Komodakis(2016)}]{zagoruyko2016wide}
\textsc{Zagoruyko, S.} and \textsc{Komodakis, N.} (2016).
\newblock Wide residual networks.
\newblock \textit{arXiv preprint arXiv:1605.07146}.

\bibitem[{Zhang et~al.(2019)Zhang, Yu, Jiao, Xing, Ghaoui and
  Jordan}]{zhang2019theoretically}
\textsc{Zhang, H.}, \textsc{Yu, Y.}, \textsc{Jiao, J.}, \textsc{Xing, E.~P.},
  \textsc{Ghaoui, L.~E.} and \textsc{Jordan, M.~I.} (2019).
\newblock Theoretically principled trade-off between robustness and accuracy.
\newblock In \textit{International Conference on Machine Learning}.

\bibitem[{Zhao et~al.(2021)Zhao, Zuo, Zhao and Zhao}]{zhao2021adversarially}
\textsc{Zhao, Z.}, \textsc{Zuo, S.}, \textsc{Zhao, T.} and \textsc{Zhao, Y.}
  (2021).
\newblock Adversarially regularized policy learning guided by trajectory
  optimization.
\newblock \textit{arXiv preprint arXiv:2109.07627}.

\bibitem[{Zhou(2020{\natexlab{a}})}]{zhou2020theory}
\textsc{Zhou, D.-X.} (2020{\natexlab{a}}).
\newblock Theory of deep convolutional neural networks: Downsampling.
\newblock \textit{Neural Networks}, \textbf{124} 319--327.

\bibitem[{Zhou(2020{\natexlab{b}})}]{zhou2020universality}
\textsc{Zhou, D.-X.} (2020{\natexlab{b}}).
\newblock Universality of deep convolutional neural networks.
\newblock \textit{Applied and computational harmonic analysis}, \textbf{48}
  787--794.

\end{thebibliography}
\bibliographystyle{ims}

%%%%%%%%%%%%%%%%%%%%%%%%%%%%%%%%%%%%%%%%%%%%%%%%%%%%%%%%%%%%%%%%%%%%%%%%%%%%%%%
%%%%%%%%%%%%%%%%%%%%%%%%%%%%%%%%%%%%%%%%%%%%%%%%%%%%%%%%%%%%%%%%%%%%%%%%%%%%%%%
% DELETE THIS PART. DO NOT PLACE CONTENT AFTER THE REFERENCES!
%%%%%%%%%%%%%%%%%%%%%%%%%%%%%%%%%%%%%%%%%%%%%%%%%%%%%%%%%%%%%%%%%%%%%%%%%%%%%%%
%%%%%%%%%%%%%%%%%%%%%%%%%%%%%%%%%%%%%%%%%%%%%%%%%%%%%%%%%%%%%%%%%%%%%%%%%%%%%%%
\appendix

%%%%%%%%%%%%%%%%%%%%%%%%%%%%%%%%%%%%%%%%%%%%%%%%%%%%%%%%%%%%%%%%%%%%%%%%%%%%%%%
%%%%%%%%%%%%%%%%%%%%%%%%%%%%%%%%%%%%%%%%%%%%%%%%%%%%%%%%%%%%%%%%%%%%%%%%%%%%%%%
\newpage
\onecolumn

\section*{Appendix}

\section{A Brief Introduction to Manifold}\label{appendix.manifold}
We introduce some concepts and quantities that characterize a low-dimensional Riemannian manifold. (Some are restatements of the main text for completeness.) These concepts and quantities are used in our theorems and proofs. We refer readers to \citet{lee2006riemannian,tu2010introduction} for more details.

Let $\cM$ be a $d$-dimensional manifold embedded in $\RR^D$ with $d\leq D$. The first concept related to manifolds is chart, which defines a local coordinate neighborhood of a manifold.
\begin{definition}[Chart]
	A chart on $\cM$ is a pair $(U, \phi)$ where $U \subset \cM$ is open and $\phi : U \to \RR^d,$ is a homeomorphism (i.e., bijective, $\phi$ and $\phi^{-1}$ are both continuous).
\end{definition}
In a chart $(U,\phi)$, $U$ is called a coordinate neighborhood and $\phi$ is a coordinate system on $U$.
%Essentially, a chart is a local coordinate system on $\cM$.
A collection of charts which covers $\cM$ is called an atlas of $\cM$.
%A collection of charts on $\cM$, such that every point on $\cM$ is covered by at least one of these local coordinate systems, is called an atlas of $\cM$:
\begin{definition}[$C^k$ Atlas]
	A $C^k$ atlas for $\cM$ is a collection  of charts $\{(U_\alpha, \phi_\alpha)\}_{\alpha \in \cA}$ which satisfies $\bigcup_{\alpha \in \cA} U_\alpha = \cM$, and
	are pairwise $C^k$ compatible, i.e.,
	\begin{align*}
		&\phi_\alpha \circ \phi_\beta^{-1} : \phi_\beta(U_\alpha \cap U_\beta) \to \phi_\alpha(U_\alpha \cap U_\beta) \quad \textrm{and} \\
		& \phi_\beta \circ \phi_\alpha^{-1} : \phi_\alpha(U_\alpha \cap U_\beta) \to \phi_\beta(U_\alpha \cap U_\beta)
	\end{align*} are both $C^k$ for any $\alpha,\beta\in \cA$. An atlas is called finite if it contains finitely many charts.
\end{definition}

With the concept of atlas, we then define smooth manifolds:
\begin{definition}[Smooth Manifold] A smooth manifold is a manifold $\cM$ together with a $C^\infty$ atlas.
\end{definition}
Simple examples of smooth manifold include the Euclidean space, the torus and the unit sphere. $C^s$ functions on a smooth manifold $\cM$ are defined as follows:
\begin{definition}[$C^s$ functions on $\cM$]
	Let $\cM$ be a smooth manifold and $f:\cM \rightarrow \RR$ be a function on $\cM$. We say $f$ is a $C^s$ function defined on $\cM$, if for every chart $(U,\phi)$ on $\cM$, the function $f\circ \phi^{-1}: \phi(U)\rightarrow \RR$ is a $C^s$ function. \end{definition}

We next define the $C^\infty$ partition of unity which is an important tool for the study of functions on manifolds.
\begin{definition}[Partition of Unity]
	A $C^\infty$ partition of unity on a manifold $\cM$ is a collection of $C^\infty$ functions $\{\rho_{\alpha}\}_{\alpha\in\cA}$ with $\rho_\alpha: \cM \to [0,1]$ such that for any $\xb\in\cM$,
	\begin{enumerate}%[noitemsep,topsep=0pt,leftmargin=*] 
		\item there is a neighbourhood of $\xb$ where only a finite number of the functions in $\{\rho_{\alpha}\}_{\alpha\in\cA}$ are nonzero, and
		\item $\displaystyle\sum_{\alpha\in\cA} \rho_\alpha(\xb) = 1$.
	\end{enumerate}
\end{definition}
An open cover of $\cM$ is called locally finite if every $\xb\in\cM$ has a neighbourhood that intersects with a finite number of sets in the cover. For a locally finite cover of a smooth manifold $\cM$, there always exists a $C^{\infty}$ partition of unity subordinate to the cover \citep[Chapter 2, Theorem 15] {spivak1973comprehensive}. 
% \textcolor{red}{add locally finite}
\begin{proposition}[Existence of a $C^\infty$ partition of unity]\label{thm:parunity}
	Let $\{U_\alpha\}_{\alpha \in \cA}$ be a locally finite cover of a smooth manifold $\cM$. There is a $C^\infty$ partition of unity $\{\rho_\alpha\}_{\alpha=1}^\infty$ such that $\supp(\rho_\alpha) \subset U_\alpha$. %\textcolor{red}{index}
\end{proposition}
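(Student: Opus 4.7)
The plan is to build the partition of unity in the standard way: first produce, for each index $\alpha$, a nonnegative smooth function $\psi_\alpha$ with $\supp(\psi_\alpha) \subset U_\alpha$ such that the family $\{\psi_\alpha\}$ is pointwise summable to a strictly positive smooth function $\Psi$ on all of $\cM$, and then set $\rho_\alpha = \psi_\alpha / \Psi$. The only facts I would invoke about $\cM$ are that it is a smooth manifold (so every point admits a chart) and paracompact (so any open cover has a locally finite smooth refinement); the local finiteness already assumed on $\{U_\alpha\}$ then propagates to the $\psi_\alpha$'s and makes the normalization legitimate.

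First, I would recall the canonical one-dimensional bump $h(t) = \exp(-1/t)$ for $t > 0$ and $h(t) = 0$ otherwise, and from it build a $C^\infty$ function $\eta : \RR^d \to [0,1]$ that is identically $1$ on $\overline{B_{1/2}(0)}$ and vanishes off $B_1(0)$. Given $\xb \in \cM$, the local finiteness of $\{U_\alpha\}$ lets me pick some $U_{\alpha(\xb)}$ containing $\xb$ and a chart $(V, \varphi)$ with $\xb \in V \subset U_{\alpha(\xb)}$ and $\varphi(\xb)=0$; rescaling and pulling $\eta$ back through $\varphi$ yields a smooth nonnegative bump $\eta_{\xb}$ on $\cM$, compactly supported in $V$ and equal to $1$ on an open neighborhood $W_{\xb}$ of $\xb$. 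The collection $\{W_{\xb}\}_{\xb \in \cM}$ is an open cover of $\cM$, so by paracompactness I can pass to a locally finite refinement $\{W_{\xb_i}\}_{i \in \NN}$ with associated bumps $\eta_i := \eta_{\xb_i}$, each supported in some $U_{\alpha(i)}$. Grouping by the containing cover element, I define
$$
\psi_\alpha \;=\; \sum_{i :\, \alpha(i) = \alpha} \eta_i,
$$
which is a locally finite sum, hence $C^\infty$, and whose support lies in $U_\alpha$.

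Finally, $\Psi := \sum_\alpha \psi_\alpha = \sum_i \eta_i$ is likewise locally finite and smooth, and since every $\xb \in \cM$ lies in some $W_{\xb_i}$ where $\eta_i \equiv 1$, one has $\Psi(\xb) \geq 1$ everywhere. Thus $\rho_\alpha := \psi_\alpha / \Psi$ is smooth, nonnegative, supported in $U_\alpha$, and the $\rho_\alpha$ sum to $1$ pointwise, which is the partition of unity sought. The main technical obstacle is subordination: I must be sure the denominator $\Psi$ is bounded away from zero near every point so the quotient is smooth, which is precisely the role of choosing the $\eta_{\xb}$'s to be identically $1$ on an open neighborhood of $\xb$ rather than merely positive there; once that is arranged, smoothness and local finiteness follow routinely from the $C^\infty$ bump construction and paracompactness.
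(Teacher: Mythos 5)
Your overall strategy (bump functions built in charts, summed and normalized) is the standard argument for this fact — the paper itself does not prove the proposition but cites it from Spivak, and your outline is essentially that textbook proof. However, as written there is a genuine gap at the step where you sum the bumps. You first construct a bump $\eta_{\xb}$ for \emph{every} point, with $\eta_{\xb}\equiv 1$ on a small set $W_{\xb}$ but with support only known to be some compact subset of a chart $V\subset U_{\alpha(\xb)}$, and you then extract a locally finite family $\{W_{\xb_i}\}$ covering $\cM$. Local finiteness of the sets $W_{\xb_i}$ does \emph{not} give local finiteness of the family of supports $\{\supp(\eta_i)\}$: the supports can be much larger than the $W_{\xb_i}$'s and may accumulate at a point even when the $W_{\xb_i}$'s do not (e.g.\ on $\cM=\RR$ with the single cover element $U=\RR$, take $W_i=(i,i+2)$, which is locally finite, while each $\eta_i$ is supported on an interval containing $(0,1)$; then infinitely many $\eta_i$ are nonzero at a common point and $\sum_i\eta_i$ need not converge). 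Consequently the claims ``$\psi_\alpha=\sum_{i:\alpha(i)=\alpha}\eta_i$ is a locally finite sum, hence $C^\infty$'', ``$\Psi=\sum_i\eta_i$ is smooth'', and ``$\supp(\psi_\alpha)\subset U_\alpha$'' (which needs the union of the supports to be a closed set contained in $U_\alpha$, i.e.\ a locally finite union of closed sets) are not justified by what you have arranged. A secondary, smaller point: paracompactness yields a locally finite refinement by \emph{some} open sets, not a locally finite subfamily of the specific sets $\{W_{\xb}\}$; selecting such a subfamily uses second countability/local compactness of $\cM$ (a compact exhaustion), not paracompactness alone.

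The standard fix is to reverse the order of operations: refine first, then build the bumps subordinate to the refinement. Using a compact exhaustion of $\cM$, produce a countable, \emph{locally finite} refinement $\{V_i\}$ of $\{U_\alpha\}$ by coordinate balls together with smaller balls $W_i$ whose closures are compact, satisfy $\overline{W_i}\subset V_i$, and still cover $\cM$ (a shrinking argument). Then define $\eta_i$ in the chart of $V_i$ so that $\eta_i\equiv 1$ on $\overline{W_i}$ and $\supp(\eta_i)\subset V_i$. Now local finiteness of $\{V_i\}$ immediately gives local finiteness of the supports, so $\psi_\alpha$ and $\Psi\geq 1$ are smooth, $\supp(\psi_\alpha)\subset U_\alpha$, and your normalization $\rho_\alpha=\psi_\alpha/\Psi$ completes the proof exactly as you intended.
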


Let $\{(U_\alpha,\phi_\alpha)\}_{\alpha\in \cA}$ be a $C^\infty$ atlas of $\cM$. Proposition \ref{thm:parunity} guarantees the existence of a partition of unity $\{\rho_\alpha\}_{\alpha\in\cA}$ such that $\rho_{\alpha}$ is supported on $U_{\alpha}$.

%The reach of $\cM$ introduced by Federer \citep{federer1959curvature} is an important quantity defined below.
%%To describe the geometric complexity of a manifold, we introduce the concept of reach.
%Let $d(\xb,\cM)=\inf_{\yb\in \cM}\|\xb-\yb\|_2$ be the distance from $\xb$ to $\cM$.
%\begin{definition}[Reach \citep{federer1959curvature,niyogi2008finding}]
%	Define the set
%	\begin{align*}
%		G=\{& \xb\in \RR^D: \exists\mbox{ distinct } \pb,\qb\in\cM \mbox{ such that }\\
%		& d(\xb,\cM)=\|\xb-\pb\|_2=\|\xb-\qb\|_2\}.	
%	\end{align*}
%	
%	The closure of $G$ is
%	called the medial axis of $\cM$.
%	%Denote $d(\xb,G)=\inf_{\yb\in G}\|\xb-\yb\|_2$.
%	The reach of $\cM$ is defined as
%	$$
%	\tau=\inf_{\xb \in \cM} \ \inf_{\yb\in G}\|\xb-\yb\|_2.
%	$$
%\end{definition}
\section{Convolutional neural networks and multi-layer perceptions}
Our proofs are based on approximation theories of convolutional neural networks (CNN) and their relations to multi-layer perceptions (MLP). In this section, we introduce related notations and definitions. For the convenience of notation, we use $\cdot$ to denote $\otimes$, the sum of entrywise product.

We consider CNNs in the form of
\begin{align}\label{eq:convfCNN}
	f(\xb)=W\cdot\Conv_{\cW,\cB}(\xb),
\end{align}
where $\Conv_{\cW,\cB}(Z)$ is defined in (\ref{eq.conv}), $W$ is the weight matrix of the fully connected layer, $\cW,\cB$ are sets of filters and biases, respectively.
We define the class of CNNs as
\begin{equation*}
	\begin{aligned}
		\cF^{\rm CNN}(L,J,K,\kappa_1,\kappa_2) = \big\{f ~|& f(\xb) \textrm{ in the form \eqref{eq:convfCNN} with $L$ layers.}\\
		&\mbox{Each convolutional layer has filter size bounded by $K$.} \\
		&  \mbox{The number of channels of each layer is bounded  by $J$}.\\
		& \max_{l}\|\cW^{(l)}\|_{\infty} \vee \|B^{(l)}\|_{\infty} \leq \kappa_1,\  \|W\|_{\infty}  \leq \kappa_2\big\}.\label{eqcFCNN}
	\end{aligned}
\end{equation*}

For MLP, we consider the following form
\begin{align}\label{eq:reluf}
	f(\xb) = W_L \cdot \textrm{ReLU}(W_{L-1} \cdots \textrm{ReLU}(W_1 \xb + \bbb_1) \cdots + \bbb_{L-1}) + \bbb_L,
\end{align}
where $W_1, \dots, W_L$ and $\bbb_1, \dots, \bbb_L$ are weight matrices and bias vectors of proper sizes, respectively. 
The class of MLP is defined as
\begin{equation*}
	\begin{aligned}
		\cF^{\rm MLP}(L,J,\kappa) = \big\{f ~|& f(\xb) \textrm{ in the form \eqref{eq:reluf} with $L$-layers and width bounded by $J$}. \\
		&  \norm{W_i}_{\infty, \infty} \leq \kappa, \norm{\bbb_i}_\infty \leq \kappa ~\textrm{for}~ i = 1, \dots, L \big\}.\label{eqcF}
	\end{aligned}
\end{equation*}
In some cases it is necessary to enforce the output of the MLP to be bounded. We define such a class as
\begin{equation*}
	\begin{aligned}
		\cF^{\rm MLP}(L,J,\kappa,R) = \left\{f ~| f(\xb)\in \cF^{\rm MLP}(L,J,\kappa) \mbox{ and } \|f\|_{\infty}\leq R\right\}.
	\end{aligned}
\end{equation*}
In some case we do not need the constraint on the output, we denote such MLP class as $\cF^{\rm MLP}(L,J,\kappa)$.

\section{Proof of Theorem \ref{thm.D}}\label{sec.proof.D}

Before we prove Theorem \ref{thm.D}, we define the Sobolev semi-norm:
\begin{definition}
	For any integers $0\leq k\leq \alpha$, $1\leq p<\infty$ and function $f\in W^{\alpha,p}(\Omega)$, we define its Sobolev semi-norm as
	\begin{align*}
		&|f|_{W^{k,p}(\Omega)}=\Big(\sum_{|\bm{\alpha}|=k} \|D^{\bm{\alpha}}f\|_{L^p(\Omega)}^p\Big)^{1/p},\\
		&|f|_{W^{k,\infty}(\Omega)}=\max_{|\bm{\alpha}|=k} \|D^{\bm{\alpha}}f\|_{L^{\infty}(\Omega) },
	\end{align*}
\end{definition}

Now we prove Theorem \ref{thm.D}.
\begin{proof}[Proof of Theorem \ref{thm.D}]
	We prove Theorem \ref{thm.D} in four steps.
	%The proof of Theorem \ref{thm.D} is similar to that of \citet[Theorem 4.1]{guhring2020error}.
	\paragraph{Step 1: Decompose $(0,1)^D$ using locally supported functions.} We define 
	\begin{align*}
		\psi(x)=\begin{cases}
			1 & |x|<1,\\
			0 & 2<|x|,\\
			2-|x| & 1\leq |x|\leq 2
		\end{cases}
	\end{align*}
	and 
	\begin{align*}
		\phi_{\mb}(\xb)=\prod_{k=1}^D \psi\left(3N\left(x_k-\frac{m_k}{N}\right)\right)
	\end{align*}
	with $\mb=(m_1,m_2,...,m_D)\in \{0,...,N\}^D$. We have $\sum_{\mb} \phi_{\mb}=1$ on $(0,1)^D$ and $\phi_{\mb}$ is supported on $B_{\frac{2}{3}N,\|\cdot\|_{\infty}}(\frac{\mb}{N})\subset B_{1/N,\|\cdot\|_{\infty}}(\frac{\mb}{N})$. We denote $\cS_N=\{0,1,...,N\}^D$. The following lemma shows that each $\psi\left(3N\left(x_k-\frac{m_k}{N}\right)\right)$ can be realized by a CNN (see a proof in Appendix \ref{sec.psi}).
	\begin{lemma}\label{lem.psi}
		There exists a CNN architecture $\cF^{\rm CNN}(L,J,K,\kappa_1,\kappa_2)$ such that for any $N,m$, such an architecture yields a CNN $\widetilde{\psi}$ with
		\begin{align}
			&\widetilde{\psi}_{m,N}(x)=\psi\left(3N\left(x_k-\frac{m}{N}\right)\right), \label{eq.psi.CNN}\\
			&\|\widetilde{\psi}_{m,N}\|_{W^{k,\infty}(0,1)}\leq (3N)^k.
			\label{eq.psi.sobolev}
		\end{align}
		Such an architecture has
		$$
		L=2, \ J=16, \  K=2, \ \kappa_1=\kappa_2=O(N).
		$$
		Further more, the weight matrix in the fully connected layer of $\cF^{\rm CNN}$ has nonzero entries only in the first row.
	\end{lemma}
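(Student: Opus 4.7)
The plan is to exploit the fact that $\psi$ is piecewise linear with only four breakpoints and therefore admits an \emph{exact} finite representation as a linear combination of ReLU functions. I would first verify, by case analysis on $y$, the identity
\begin{align*}
\psi(y) = \ReLU(y+2) - \ReLU(y+1) - \ReLU(y-1) + \ReLU(y-2).
\end{align*}
Setting $y = 3N(x - m/N) = 3Nx - 3m$ then expresses $\widetilde{\psi}_{m,N}$ as a shallow two-layer ReLU network in a single scalar variable.

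Next I would cast this representation into a CNN of the prescribed shape. After padding, the input tensor stores $x_i$ at position $(i,1)$ and zeros elsewhere. The first convolutional layer would use four output channels with size-$2$ filters having only $\cW^{(1)}_{j,1,1} = 3N$ nonzero (for $j=1,\ldots,4$) and biases $c_j - 3m$ with $c_j \in \{2,1,-1,-2\}$; after $\ReLU$, channel $j$ at position $i$ holds $\ReLU(3Nx_i - 3m + c_j)$. The second convolutional layer uses a single output channel whose size-$2$ filter has weights $(1,-1,-1,1)$ on slot $1$ across the four input channels and zero on slot $2$; at every position this yields the target value $\psi(3N(x_i - m/N))$, and the trailing $\ReLU$ is the identity because $\psi \geq 0$. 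Finally, the fully connected output matrix extracts the desired entry by placing a single $1$ in its first row, so that the CNN outputs $\psi(3N(x_k - m/N))$ once coordinate $k$ is identified with the extraction position.

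For the Sobolev norm estimate I would invoke that $\psi$ is $1$-Lipschitz with $\|\psi\|_{L^\infty} = 1$. By the chain rule, $\|\widetilde{\psi}_{m,N}\|_{L^\infty} = 1 = (3N)^0$ and its essential Lipschitz constant equals $3N$, which yields the claimed $(3N)^k$ bound for $k \in \{0,1\}$ once one recalls that $\|f\|_{W^{1,\infty}} = \max\{\|f\|_{L^\infty},\|f'\|_{L^\infty}\}$. The stated architectural parameters follow by inspection: depth $L = 2$, filter size $K = 2$, a small constant number of channels (the bound $J = 16$ is loose, since only four channels are genuinely used), and magnitudes $\kappa_1, \kappa_2 = O(N)$ dictated by the filter weight $3N$ and the bias $|3m| \leq 3N$.

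The only mild subtlety is reconciling the ``first row only nonzero'' stipulation with the index $k$ appearing on the right-hand side of (\ref{eq.psi.CNN}); since the CNN is instantiated independently for each coordinate in the later product construction $\phi_{\mb}(\xb) = \prod_k \psi(3N(x_k - m_k/N))$, this amounts to relabeling positions in the padded input rather than a genuine architectural constraint, and so it does not affect the counts $L$, $J$, $K$, $\kappa_1$, $\kappa_2$.
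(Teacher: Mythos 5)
Your proof is correct and takes essentially the same route as the paper: both hinge on the exact representation $\psi(y)=\ReLU(y+2)-\ReLU(y+1)-\ReLU(y-1)+\ReLU(y-2)$ (the paper encodes it as a two-layer ReLU MLP with the same weights $(1,-1,-1,1)$ and breakpoints $\pm 1,\pm 2$, the outer ReLU being harmless since $\psi\geq 0$), followed by the same trivial $L^\infty$/Lipschitz computation giving the $(3N)^k$ bound. The only difference is that you write out the CNN filters explicitly, whereas the paper delegates that step to the MLP-to-CNN realization result (Lemma \ref{lem.cnnRealization}); the ``first row'' extraction subtlety you flag is treated no more carefully in the paper's own proof, which reads the lemma in the univariate sense consistent with the norm over $(0,1)$.
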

	We then decompose $f$ as
	\begin{align*}
		f=\sum_{\mb} \phi_{\mb}f.
	\end{align*}
	
	%\begin{lemma}
	%	For $k=0,1$, we have 
	%	$$
	%	\|\phi_{\mb}\|_{W^{k,\infty}(\RR^D)}\leq CN^k ,
	%	$$
	%	where $C$ is a constant.
	%\end{lemma}
	
	\paragraph{Step 2: Approximate each $\phi_{\mb}f$ using averaged Taylor polynomials.}
	On each $B_{1/N,\|\cdot\|_{\infty}}(\frac{\mb}{N})$, we approximate $\phi_{\mb}f$ by an averaged Taylor polynomial. The averaged Taylor polynomial is defined as follows:
	\begin{definition}[Averaged Taylor polynomials]
		Let $\alpha>0, 1\leq p\leq +\infty$ be integers and $f\in W^{\alpha-1,p}(\Omega)$. For $\xb_0\in \Omega, r>0$ such that $\overline{B_{r,\|\cdot\|}(\xb_0)}$ is compact in $\Omega$, the corresponding Taylor polynomial of order $\alpha$ of $f$ averaged over $B_{r,\|\cdot\|}(\xb_0)$ is defined as
		\begin{align*}
			Q_{\xb_0}^{\alpha}f(\xb)=\int_{B_{r,\|\cdot\|}(\xb_0)} T^{\alpha}f(\xb,\zb)\phi(\zb)d\zb
		\end{align*}
		with
		\begin{align*}
			T^{\alpha}f(\xb,\yb)=\sum_{|\vb|\leq \alpha-1} \frac{1}{\vb\!} \partial^{\vb}f(\zb)(\xb-\zb)^{\vb}
		\end{align*}
		and $\phi$ being arbitrary cut-off function satisfying
		\begin{align*}
			&\phi\in C_c^{\infty}(\RR^D) \mbox{ with } \phi(\xb)\geq 0 \mbox{ for all } \xb\in \RR^D, \nonumber\\
			&\supp(\phi)=\overline{B_{r,\|\cdot\|}(\xb_0)} \mbox{ and } \int_{\RR^D}\phi(\xb)d\xb=1,
		\end{align*}
		where $C_c^{\infty}(\RR^D)$ denotes the space of infinitely differentiable functions on $\RR^D$ with compact support.
	\end{definition}
	Under proper assumptions, the averaged Taylor polynomial can approximate $f$ and its partial derivatives well. We first define the star-shaped sets and chunkiness parameter, which are used in the error estimation result.
	\begin{definition}[Star-shaped sets, Definition 4.2.2 of \cite{brenner2008mathematical}]\label{def.starshape}
		Let $\Omega,\widetilde{\Omega}\subset \RR^D$. Then $\Omega$ is called star-shaped with respect to $\widetilde{\Omega}$ if for all $\xb\in\Omega$, we have
		\begin{align*}
			\overline{\convex(\{\xb\}\cup \widetilde{\Omega})}\subset \Omega.
		\end{align*}
	\end{definition}
	
	\begin{definition}[Chunkiness parameter, Definition 4.2.16 of \cite{brenner2008mathematical}]\label{def.chunkiness}
		Let $\Omega\subset\RR^D$ be bounded. Define
		\begin{align*}
			\cR=\big\{ &r>0: \mbox{ there exists } \xb\in \Omega \mbox{ such that } \Omega \mbox{ is star-shaped with respect to } B_{r,|\cdot|}(\xb) \big\}.
		\end{align*}
		
		For $\cR\neq \emptyset$, we define
		$$
		r_{\max}^*=\sup \cR \quad  \mbox{ and } \quad \gamma=\frac{\diam(\Omega)}{r_{\max}^*},
		$$
		where $\gamma$ is called the chunkiness parameter of $\Omega$.
	\end{definition}
	The following lemma gives an error estimation of averaged Taylor polynomials:
	\begin{lemma}[Bramble-Hilbert, Lemma 4.3.8 of \cite{brenner2008mathematical}]
		\label{lem.ATE.error}
		Let $\Omega\subset \RR^D$ be open and bounded, $\xb\in \Omega$ and $r>0$ such that $\Omega$ is star-shaped with respect to $B_{r,\|\cdot\|}(\xb_0)$ and $r>\frac{1}{2} r^*_{\max}$, with $r^*_{\max}$ defined in Definition \ref{def.chunkiness}. Let $n>0,1\leq p \leq +\infty$ be integers and $\gamma$ be the chunkiness parameter of $\Omega$. Then we have
		\begin{align*}
			|f-Q^{\alpha}_{\xb_0}f|_{W^{\alpha,p}(\Omega)}\leq Ch^{\alpha-k}|f|_{W^{\alpha,p}(\Omega)}
		\end{align*}
		for $k=0,1,...,\alpha$, where $h=\diam(\Omega)$ and $C$ is a constant depending on $D,\alpha,\gamma$.
	\end{lemma}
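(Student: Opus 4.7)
The plan is to follow the classical finite-element route, combining Taylor's theorem with integral remainder, a Riesz-potential-type kernel bound, and a scaling argument that extracts the $h^{\alpha-k}$ factor. I would first establish the base case $k=0$ for smooth $f$. Writing Taylor's theorem with integral remainder,
\begin{align*}
f(\xb) - T^\alpha f(\xb,\yb) = \alpha \sum_{|\bm{\beta}|=\alpha} \frac{(\xb-\yb)^{\bm{\beta}}}{\bm{\beta}!} \int_0^1 (1-t)^{\alpha-1} D^{\bm{\beta}} f(\yb + t(\xb-\yb))\, dt,
\end{align*}
the star-shapedness of $\Omega$ with respect to $B_{r,\|\cdot\|}(\xb_0)$ guarantees that the entire segment from $\xb \in \Omega$ to $\yb \in \overline{B_{r,\|\cdot\|}(\xb_0)}$ lies in $\Omega$, so $D^{\bm{\beta}} f$ is always evaluated inside $\Omega$. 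Integrating against the cutoff $\phi(\yb)$ expresses $f - Q^\alpha_{\xb_0} f$ as an integral operator acting on the top-order derivatives $\{D^{\bm{\beta}} f\}_{|\bm{\beta}|=\alpha}$. After the change of variables $\zb = \yb + t(\xb-\yb)$, the resulting kernel has Riesz-potential form $|\xb - \zb|^{\alpha - D}$ times a bounded factor involving $\|\phi\|_\infty$ and simple Jacobian terms. A Schur-test style $L^p$ bound on this integral operator then yields $\|f - Q^\alpha_{\xb_0} f\|_{L^p(\Omega)} \leq C h^{\alpha} |f|_{W^{\alpha,p}(\Omega)}$.

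For $k \geq 1$, I would use the commutation identity that differentiating an averaged Taylor polynomial of order $\alpha$ produces an averaged Taylor polynomial of order $\alpha - |\bm{\gamma}|$ applied to the derivative, up to boundary terms in which derivatives are transferred onto $\phi$:
\begin{align*}
D^{\bm{\gamma}} Q^\alpha_{\xb_0} f = Q^{\alpha - |\bm{\gamma}|}_{\xb_0}(D^{\bm{\gamma}} f), \quad |\bm{\gamma}| \leq \alpha - 1.
\end{align*}
This is verified by direct computation on the integral definition of $Q^\alpha_{\xb_0}$ combined with integration by parts in the $\yb$ variable against $\phi$. Applying the $k=0$ estimate to $D^{\bm{\gamma}} f \in W^{\alpha-k, p}(\Omega)$ for each multi-index $\bm{\gamma}$ of length $k$ then produces the stated $h^{\alpha-k}$ rate on the $W^{k,p}$ seminorm.

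Finally, I would pin down the dependence of the constant on the geometry by a scaling argument: dilate $\Omega$ to unit diameter, noting that the chunkiness parameter $\gamma = h / r^*_{\max}$ is scale invariant, and that the hypothesis $r > r^*_{\max}/2$ yields a uniform lower bound on the mass of the cutoff $\phi$ after rescaling. Restoring the original scale recovers the $h^{\alpha-k}$ factor with a constant depending only on $D, \alpha, \gamma$. Extension from smooth $f$ to general $f \in W^{\alpha,p}(\Omega)$ proceeds by density. I expect the main technical obstacle to be the uniform-in-geometry kernel estimate: controlling the Riesz-potential-type operator on an arbitrary star-shaped domain purely in terms of $\gamma$ requires a covering of $\Omega$ anchored at $\xb_0$, since the domain may be highly irregular away from the fattening ball even when $\gamma$ is bounded.
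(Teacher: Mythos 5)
This lemma is not actually proved in the paper: it is imported verbatim as Lemma 4.3.8 of Brenner and Scott and used as a black box, so there is no in-paper argument to compare against. Your sketch reconstructs essentially the standard textbook proof of the Bramble--Hilbert lemma (the same route as in Brenner--Scott): the $k=0$ case by averaging the integral-remainder Taylor formula against the cutoff $\phi$, using star-shapedness to keep the segments inside $\Omega$, obtaining a kernel representation of $f-Q^{\alpha}_{\xb_0}f$ with a Riesz-potential bound of the form $C|\xb-\zb|^{\alpha-D}$ (the constant absorbing $\|\phi\|_{\infty}$, which is of order $r^{-D}$ and is controlled through the hypothesis $r>\tfrac12 r^*_{\max}=\tfrac{h}{2\gamma}$), followed by a Schur/Young bound producing the factor $h^{\alpha}$; then the cases $k\geq 1$ via the commutation property $D^{\bm{\gamma}}Q^{\alpha}_{\xb_0}f=Q^{\alpha-|\bm{\gamma}|}_{\xb_0}(D^{\bm{\gamma}}f)$ applied to each $|\bm{\gamma}|=k$. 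This is correct in outline. Two points to tighten: first, the commutation identity is exact --- since $\supp(\phi)$ is compact in the averaging ball, the integration by parts produces no boundary terms, so the phrase ``up to boundary terms'' should be dropped; second, the density step at the end only covers $1\leq p<\infty$ (Meyers--Serrin), whereas the lemma is also claimed for $p=\infty$, where smooth functions are not dense in norm --- there you should instead read the bound directly off the kernel representation, pulling out $\|D^{\bm{\beta}}f\|_{L^{\infty}}$ and using $\int_{\Omega}|\xb-\zb|^{\alpha-D}\,d\zb\leq Ch^{\alpha}$. Note also that the displayed estimate in the statement contains a typo (the left-hand seminorm should be $|\cdot|_{W^{k,p}(\Omega)}$, not $W^{\alpha,p}$); your argument correctly targets the corrected form.
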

	Lemma \ref{lem.aveTaylor.err} below shows that $Q^{\alpha}f$ can be written as a weighted sum of polynomials.
	\begin{lemma}[Lemma B.9 of \cite{guhring2020error}]\label{lem.aveTaylor.err}
		Let $\alpha>0, 1\leq p\leq +\infty$ be integers and $f\in W^{\alpha-1,p}(\Omega)$. Let $\xb_0\in\Omega, r>0$ such that $\overline{B_{r,\|\cdot\|}(\xb_0)}$ is compact in $\Omega$, and there exists $\widetilde{r}>0$ with  $B_{r,\|\cdot\|}(\xb_0)\subset B_{\widetilde{r},\|\cdot\|_{\infty}}(0)$. Then the averaged Taylor polynomial $Q_{\xb_0}^{\alpha}(f)$ can be written as
		\begin{align}
			Q_{\xb_0}^{\alpha}f(\xb)=\sum_{|\vb|\leq \alpha-1} c_{\vb}\xb^{\vb}
			\label{eq.aveTayloer.poly}
		\end{align}
		for $\xb\in \Omega$. There exists a constant $C$ depending on $\alpha,D,\widetilde{r}$ such that 
		$$
		|c_{\vb}|\leq Cr^{-D/p}\|f\|_{W^{\alpha-1,p}}(\Omega)
		$$
		for all $|\vb|\leq \alpha-1$.
	\end{lemma}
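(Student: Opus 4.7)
The plan is to read the coefficients $c_{\vb}$ off the definition of $Q_{\xb_0}^{\alpha}f$ by expanding the Taylor polynomial $T^{\alpha}f(\xb,\zb)$ in powers of $\xb$, and then bound each coefficient by Hölder's inequality together with a careful choice of the cut-off $\phi$.

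First I would expand $(\xb-\zb)^{\vb}=\prod_{i=1}^{D}(x_i-z_i)^{v_i}$ using the binomial theorem componentwise, obtaining
$(\xb-\zb)^{\vb}=\sum_{\kb\le \vb}\binom{\vb}{\kb}\xb^{\kb}(-\zb)^{\vb-\kb}$, where $\kb\le\vb$ means $k_i\le v_i$ for each $i$. Substituting into the definition of $T^{\alpha}f$ and interchanging the order of summation with the integral over $B_{r,\|\cdot\|}(\xb_0)$ yields the representation $Q_{\xb_0}^{\alpha}f(\xb)=\sum_{|\kb|\le\alpha-1}c_{\kb}\xb^{\kb}$ with
\begin{align*}
c_{\kb}=\sum_{\vb:\,\kb\le\vb,\ |\vb|\le\alpha-1}\frac{1}{\vb!}\binom{\vb}{\kb}\int_{B_{r,\|\cdot\|}(\xb_0)}\partial^{\vb}f(\zb)\,(-\zb)^{\vb-\kb}\phi(\zb)\,d\zb.
\end{align*}
This gives the polynomial structure \eqref{eq.aveTayloer.poly} immediately; the work is in the coefficient bound.

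Next I would estimate each integral above by Hölder's inequality with conjugate exponent $p'$. Since $B_{r,\|\cdot\|}(\xb_0)\subset B_{\widetilde r,\|\cdot\|_{\infty}}(0)$, we have $|(-\zb)^{\vb-\kb}|\le\widetilde r^{|\vb-\kb|}$ on the support of $\phi$, so
\begin{align*}
\Big|\int \partial^{\vb}f(\zb)(-\zb)^{\vb-\kb}\phi(\zb)\,d\zb\Big|\le \widetilde r^{|\vb-\kb|}\,\|\partial^{\vb}f\|_{L^{p}(\Omega)}\,\|\phi\|_{L^{p'}(B_{r,\|\cdot\|}(\xb_0))}.
\end{align*}
The key quantitative step is to choose $\phi$ so that $\|\phi\|_{L^{p'}}\le C r^{-D/p}$. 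I would fix, once and for all, a reference nonnegative bump $\eta\in C_c^{\infty}(\RR^D)$ supported in the unit ball and with $\int\eta=1$, and then take $\phi(\zb)=r^{-D}\eta((\zb-\xb_0)/r)$ (adapted to the chosen norm $\|\cdot\|$, whose balls are comparable to Euclidean balls up to dimensional constants). A change of variables gives $\|\phi\|_{L^{p'}}=r^{-D+D/p'}\|\eta\|_{L^{p'}}=r^{-D/p}\|\eta\|_{L^{p'}}$, with the $p=\infty$ case handled directly by $\|\phi\|_{L^{1}}=1$. Summing over the finitely many multi-indices $\vb$ with $\kb\le\vb$ and $|\vb|\le\alpha-1$, and absorbing the combinatorial factors $\frac{1}{\vb!}\binom{\vb}{\kb}$ and powers of $\widetilde r$ into a single constant depending on $\alpha,D,\widetilde r$, yields
$|c_{\kb}|\le C r^{-D/p}\|f\|_{W^{\alpha-1,p}(\Omega)}$, which is the desired bound.

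The main obstacle, as far as there is one, is keeping the combinatorial bookkeeping clean and making the $r^{-D/p}$ scaling of $\|\phi\|_{L^{p'}}$ sharp; here the rescaled bump construction is what drives the correct exponent and ensures the constant depends only on $\alpha,D,\widetilde r$ as claimed. Everything else is an application of Hölder and the triangle inequality to the explicit expression for $c_{\kb}$.
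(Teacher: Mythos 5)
Your proposal is correct and is essentially the standard argument behind this result; note that the paper itself does not prove this lemma at all but imports it verbatim as Lemma B.9 of \cite{guhring2020error}, so there is no internal proof to diverge from. Your expansion of $(\xb-\zb)^{\vb}$, interchange of sum and integral, and H\"older estimate with $\|\partial^{\vb}f\|_{L^p(\Omega)}\|\phi\|_{L^{p'}}$ reproduce exactly the mechanism by which the cited lemma is obtained, and your scaling computation $\|\phi\|_{L^{p'}}=r^{-D/p}\|\eta\|_{L^{p'}}$ (with the $p=\infty$ case via $\|\phi\|_{L^1}=1$) delivers the correct exponent. One point worth making explicit: as the definition in the paper allows an \emph{arbitrary} cut-off $\phi$, the stated bound with a constant depending only on $\alpha,D,\widetilde{r}$ cannot hold uniformly over all admissible $\phi$ (a cut-off concentrated on a tiny sub-ball has arbitrarily large $L^{p'}$ norm); the lemma implicitly fixes the canonical normalized choice $\phi(\zb)=r^{-D}\eta((\zb-\xb_0)/r)$, i.e.\ $\|\phi\|_{L^{\infty}}\lesssim r^{-D}$, which is precisely what your construction supplies, so your proof is the right reading of the statement rather than a gap.
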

	
	Using averaged Taylor polynomials, we approximate $\phi_{\mb}f$ by
	\begin{align}
		\phi_{\mb}f\approx (\phi_{\mb}Q_{\mb/N}^{\alpha}f)(\xb)&=\phi_{\mb}\sum_{|\vb|\leq \alpha-1} c_{\mb,\vb}\xb^{\vb} =\sum_{|\vb|\leq \alpha-1} c_{\mb,\vb}\phi_{\mb}\xb^{\vb}.	
		\label{eq.f.aveTaylor}
	\end{align}
	Define 
	\begin{align}
		\widehat{f}=\sum_{\mb\in S_N}\sum_{|\vb|\leq \alpha-1} c_{\mb,\vb}\phi_{\mb}\xb^{\vb},
		\label{eq.f.aveTaylor1}
	\end{align}
	where $c_{\mb,\vb}$'s are the coefficients in (\ref{eq.aveTayloer.poly}).
	Then $\widehat{f}$ is an approximation of $f$. The following lemma gives an upper bound on the approximation error
	\begin{lemma}[Lemma C.4 of \cite{guhring2020error}]
		\label{lem.ATE.approx.error}
		Let $\alpha\geq 2$ be an integer and $1\leq p\leq \infty$. For any $s\in[0,1]$ and $f\in W^{\alpha,p}((0,1)^D)$, one has
		\begin{align*}
			\|\widehat{f}-f\|_{W^{s,p}((0,1)^D)}\leq C\left( \frac{1}{N}\right)^{\alpha-s}\|f\|_{W^{\alpha,p}((0,1)^D)},
		\end{align*}
		where $C$ is a constant depending on $\alpha,p,D$. Furthermore, the coefficients in $\widehat{f}$ satisfies
		\begin{align*}
			|c_{\mb,\vb}|\leq C_1N^{D/p}\|f\|_{W^{\alpha,p}((0,1)^D)}
		\end{align*} 
		for some constant $C_1$ depending on $D,\alpha,p$.
	\end{lemma}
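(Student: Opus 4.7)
The plan is to reduce the global $W^{s,p}$ estimate to a sum of local estimates on the hypercubes where the $\phi_{\mb}$ live, apply Bramble--Hilbert (Lemma \ref{lem.ATE.error}) combined with the Leibniz rule on each local piece, sum using the finite-overlap structure of the partition, and finally interpolate between the integer endpoints $s=0$ and $s=1$. The construction of $\phi_{\mb}$ in Step 1 supplies two key features: (i) derivative bounds $\|D^{\bm{\beta}}\phi_{\mb}\|_{L^\infty}\le CN^{|\bm{\beta}|}$ for every multi-index $\bm{\beta}$ with $|\bm{\beta}|\le\alpha$, inherited from \eqref{eq.psi.sobolev} and the tensor-product form of $\phi_{\mb}$; and (ii) a bounded-overlap property, that each $\xb\in(0,1)^D$ lies in only a dimensional constant number of the supports. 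With these, the work reduces to controlling each local summand $\phi_{\mb}(Q_{\mb/N}^{\alpha}f-f)$ on its supporting cube $\Omega_{\mb}$ of diameter $O(1/N)$.

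For integer $k\in\{0,1\}$ I would fix $|\bm{\alpha}|=k$, expand
\[
D^{\bm{\alpha}}\bigl[\phi_{\mb}(Q_{\mb/N}^{\alpha}f-f)\bigr]=\sum_{\bm{\beta}\le\bm{\alpha}}\binom{\bm{\alpha}}{\bm{\beta}}D^{\bm{\beta}}\phi_{\mb}\cdot D^{\bm{\alpha}-\bm{\beta}}(Q_{\mb/N}^{\alpha}f-f)
\]
via Leibniz, and apply Bramble--Hilbert with $h=O(1/N)$ to get $|Q_{\mb/N}^{\alpha}f-f|_{W^{k-|\bm{\beta}|,p}(\Omega_{\mb})}\le CN^{-(\alpha-k+|\bm{\beta}|)}|f|_{W^{\alpha,p}(\Omega_{\mb})}$. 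The $N$-factors from the Leibniz and Bramble--Hilbert steps cancel to produce $\|\phi_{\mb}(Q_{\mb/N}^{\alpha}f-f)\|_{W^{k,p}(\Omega_{\mb})}\le CN^{-(\alpha-k)}|f|_{W^{\alpha,p}(\Omega_{\mb})}$, and summing $p$-th powers over $\mb$ (or taking the supremum when $p=\infty$) with bounded overlap delivers $\|\widehat{f}-f\|_{W^{k,p}((0,1)^D)}\le CN^{-(\alpha-k)}\|f\|_{W^{\alpha,p}((0,1)^D)}$ for $k\in\{0,1\}$. For fractional $s\in(0,1)$ I would invoke the real-interpolation identity $W^{s,p}((0,1)^D)=(L^p,W^{1,p})_{s,p}$ on Lipschitz domains (as deferred to Appendix \ref{sec.interpolation}), which combines the endpoints into $\|\widehat{f}-f\|_{W^{s,p}}\le C\|\widehat{f}-f\|_{L^p}^{1-s}\|\widehat{f}-f\|_{W^{1,p}}^{s}\le CN^{-(\alpha-s)}\|f\|_{W^{\alpha,p}}$. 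The coefficient bound follows by applying Lemma \ref{lem.aveTaylor.err} on the ball of radius $r=O(1/N)$ about $\mb/N$ and using the monotonicity $\|f\|_{W^{\alpha-1,p}(\Omega_{\mb})}\le\|f\|_{W^{\alpha,p}((0,1)^D)}$.

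The main obstacle is the fractional-$s$ step. The Slobodeckij-to-interpolation equivalence is classical on Lipschitz domains but requires care on two fronts: when $p=\infty$ the seminorm coincides with a H\"older seminorm and the relevant endpoint for $K$-functional interpolation is a Lipschitz space rather than $W^{1,\infty}$ in the strict Sobolev sense, and the equivalence constant must be shown independent of $N$ so the claimed rate is not degraded. A cleaner self-contained alternative is to bound the Slobodeckij double integral directly by splitting the region of integration into $\{\|\xb-\yb\|_2\le 1/N\}$, where writing $(\widehat{f}-f)(\xb)-(\widehat{f}-f)(\yb)=\int_0^1\nabla(\widehat{f}-f)(\yb+t(\xb-\yb))\cdot(\xb-\yb)\,dt$ yields control by $|\widehat{f}-f|_{W^{1,p}}$, and its complement, where the numerator is crudely bounded by $2\|\widehat{f}-f\|_{L^p}$; the two contributions balance and deliver exactly the same rate $N^{-(\alpha-s)}$.
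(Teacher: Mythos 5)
The paper does not prove this lemma itself: it is imported verbatim as Lemma C.4 of \cite{guhring2020error}, and your plan is essentially a correct reconstruction of that cited proof --- local Bramble--Hilbert estimates on the cubes supporting $\phi_{\mb}$, a Leibniz expansion exploiting $\|D^{\bm\beta}\phi_{\mb}\|_{L^\infty}\le CN^{|\bm\beta|}$, summation with bounded overlap for the integer cases $k=0,1$, and interpolation (or your direct split of the Slobodeckij double integral at scale $1/N$, which also works and neatly sidesteps the $p=\infty$ endpoint issue) for fractional $s$, with the coefficient bound read off from Lemma \ref{lem.aveTaylor.err} with $r=O(1/N)$. The only loose end is for indices $\mb$ with $\mb/N$ on or near $\partial(0,1)^D$: the ball over which $Q^{\alpha}_{\mb/N}f$ is averaged is then not compactly contained in $(0,1)^D$, so before applying Bramble--Hilbert you must either replace $f$ by a Stein extension $\bar f$ to $\RR^D$ (exactly as the paper does in the proof of Lemma \ref{lem.net.error}, via \citet[Theorem VI.3.1.5]{stein1970singular}) or recenter the averaging ball inside the domain; this is a routine fix and does not affect the rate.
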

	
	\paragraph{Step 3: Network approximation}
	Note that $\widehat{f}$ is a sum of functions in the form of $\phi_{\mb}\xb^{\vb}$ with weights $c_{\mb,\vb}$'s. We next approximate each $\phi_{\mb}\xb^{\vb}$ by a CNN.
	
	\begin{lemma}\label{lem.product}
		For any $0<\varepsilon<1, \xb\in (0,1)^D, N>0, \mb\in \{0,1,...,N\}^D,|\vb|<\alpha$, there exists a CNN architecture $\cF^{\rm CNN}(L,J,K,\kappa,\kappa)$ that yields a CNN $\widetilde{g}$ with
		\begin{align}
			&\|\widetilde{g}_{\mb,\vb}(\xb)-\phi_{\mb}\xb^{\vb}\|_{W^{k,\infty}((0,1)^D}\leq C_2N^k\varepsilon,
			\label{eq.product}\\
			&\widetilde{g}_{\mb,\vb}(\xb)=0 \mbox{ if } \phi_{\mb}\xb^{\vb}=0
			\label{eq.product0}
		\end{align}
		for $k=0,1$, where $C_2$ is a constant depending on $\alpha,k$. Such an architecture has
		\begin{align*}
			L=O\left(D\log \frac{1}{\varepsilon}\right),\ J=O(D),\ \kappa=3N.
		\end{align*}
		The constants hidden in $O$ depends on $\alpha,k$. 	Further more, the weight matrix in the fully connected layer of $\cF^{\rm CNN}$ has nonzero entries only in the first row.
	\end{lemma}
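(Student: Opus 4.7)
The plan is to realize $\phi_{\mb}(\xb)\xb^{\vb}$ as an approximate product of $D + |\vb| \le D + \alpha - 1$ univariate factors: the $D$ factors $\psi_k(x_k) := \psi(3N(x_k - m_k/N))$ comprising $\phi_{\mb}$, and the $|\vb|$ coordinate monomials in $\xb^{\vb}$. Each factor has $L^\infty$ norm $\le 1$ on $(0,1)$; the $\psi_k$'s satisfy $\|\psi_k\|_{W^{1,\infty}} \le 3N$ by Lemma \ref{lem.psi}, while the coordinate monomials have derivative $1$. Since $\alpha$ is a fixed constant, the total number of factors is $O(D)$.

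First I would build each factor: extract $x_k$ through a trivial identity channel and feed it through the CNN of Lemma \ref{lem.psi} to obtain $\widetilde{\psi}_{m_k,N}(x_k)$ exactly, at constant depth/width and weight bound $O(N)$; each coordinate monomial $x_{j_i}$ is simply read off the input. I would then chain the approximate multiplication $\widetilde{\times}$ from Lemma \ref{lem.multiplication.CNN} with internal precision $\varepsilon' := \varepsilon/C_{\alpha}$ for a constant $C_{\alpha}$ to be fixed, setting
\begin{align*}
\widetilde{g}_{\mb,\vb}(\xb) := \widetilde{\times}\!\bigl(\ldots\widetilde{\times}(\widetilde{\psi}_{m_1,N}(x_1),\widetilde{\psi}_{m_2,N}(x_2)),\ldots, x_{j_{|\vb|}}\bigr).
\end{align*}
The vanishing property (\ref{eq.product0}) is immediate: $\widetilde{\psi}_{m_k,N}$ vanishes off the support of $\psi_k$ by Lemma \ref{lem.psi}, and $\widetilde{\times}(u,v) = 0$ whenever $u = 0$ or $v = 0$, a standard feature of the ReLU-based multiplication gadget.

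The $W^{k,\infty}$ error bound (\ref{eq.product}) I would establish by induction on the running product. Writing $P_n$ for the exact product of the first $n$ factors, $\widetilde{P}_n$ for its network approximation, and $\delta_n^{(k)} := \|\widetilde{P}_n - P_n\|_{W^{k,\infty}}$, introducing one further factor $f_{n+1}$ contributes
\begin{align*}
\delta_{n+1}^{(0)} &\le \|f_{n+1}\|_\infty\,\delta_n^{(0)} + \varepsilon' \le \delta_n^{(0)} + \varepsilon', \\
\delta_{n+1}^{(1)} &\le \|f_{n+1}\|_\infty\,\delta_n^{(1)} + \|f_{n+1}\|_{W^{1,\infty}}\,\delta_n^{(0)} + C\,N\,\varepsilon',
\end{align*}
using $\|f_{n+1}\|_\infty \le 1$, $\|f_{n+1}\|_{W^{1,\infty}} \le 3N$, together with a first-order approximation guarantee of the form $\|\widetilde{\times}(u,v) - uv\|_{W^{1,\infty}} \lesssim \varepsilon'\,(1 + \|u\|_{W^{1,\infty}} + \|v\|_{W^{1,\infty}})$ supplied by Lemma \ref{lem.multiplication.CNN}. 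Unrolling over $n = O(D)$ steps and choosing $C_{\alpha}$ appropriately yields $\delta^{(0)} \le C_2\,\varepsilon$ and $\delta^{(1)} \le C_2\,N\,\varepsilon$, which is (\ref{eq.product}).

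The main obstacle will be this $W^{1,\infty}$ bookkeeping: although each $\psi_k$ has derivative of magnitude $3N$, the Leibniz rule ensures that the derivative of the full product accumulates only one factor of $N$, so the correct error target is $N\varepsilon$ rather than $N^D\varepsilon$. Closing the induction at this rate requires a genuinely \emph{first-order} (linear in $\|u\|_{W^{1,\infty}}$ and $\|v\|_{W^{1,\infty}}$) error bound for $\widetilde{\times}$, not just a pointwise one; this is the strengthened content of Lemma \ref{lem.multiplication.CNN}. The size bookkeeping is then routine: each $\widetilde{\psi}$ subnetwork has constant depth/width, each $\widetilde{\times}$ has depth $O(\log 1/\varepsilon)$ and constant width, and the $O(D)$-long multiplication chain (with the $O(D)$ input factors carried in parallel channels) gives $L = O(D\log 1/\varepsilon)$, $J = O(D)$, and $\kappa = 3N$; since the final layer reads off a single product channel, its weight matrix has nonzero entries only in the first row, as required.
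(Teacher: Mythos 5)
Your proposal is correct and follows essentially the same route as the paper's proof: decompose $\phi_{\mb}\xb^{\vb}$ into $O(D+\alpha)$ univariate factors realized exactly via Lemma \ref{lem.psi}, chain the approximate multiplication of Lemma \ref{lem.multiplication.CNN}, propagate the $W^{0,\infty}$ and $W^{1,\infty}$ errors by induction (picking up only one factor of $N$ through the chain rule), use $\widetilde{\times}(\cdot,0)=0$ for the vanishing property, and assemble the size bounds by composition. The only points to make explicit when writing it up are the auxiliary inductive invariants your recursion tacitly uses, namely a uniform $L^{\infty}$ bound on the partial products so they remain in the gadget's domain $[-B,B]^2$ (the paper takes $B=\alpha+D+1$) and the $O(N)$ bound on $\|\widetilde{P}_n\|_{W^{1,\infty}}$ that justifies the $CN\varepsilon'$ term.
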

	Lemma \ref{lem.product} is proved in Appendix \ref{sec.product}. By Lemma \ref{lem.product}, each $\phi_{\mb}\xb^{\vb}$ can be approximated by a CNN. Denote the network approximation of $\phi_{\mb}\xb^{\vb}$ by $\widetilde{g}_{\mb,\vb}(\xb)$. We approximate $\widehat{f}$ by $\widetilde{f}$ defined as
	\begin{align}
		\widetilde{f}=\sum_{\mb} \sum_{|\vb|\leq \alpha-1} c_{\mb,\vb}\widetilde{g}_{\mb,\vb}(\xb).
		\label{eq.ftilde}
	\end{align}

	The following lemma gives an upper bound of the approximation error of $\widetilde{f}$ (see a proof in Appendix \ref{sec.net.error}).
	\begin{lemma}
		\label{lem.net.error}
		Let $\alpha\geq 2$ and $1\leq p\leq \infty$ be integers.
		For any $f\in W^{\alpha,p}((0,1)^D)$, let $\phi_{\mb}Q_{\mb/N}^{\alpha}f(\xb)$ be the averaged Taylor approximation of $\phi_{\mb}f$ defined in (\ref{eq.f.aveTaylor}). For any $0<\eta<1$, let $\widetilde{g}_{\mb,\vb}$ be the CNN approximation of $\phi_{\mb}Q_{\mb/N}^{\alpha}f(\xb)$ constructed in Lemma \ref{lem.product} with accuracy $\eta$.
		For $0\leq s \leq 1$, we have
		\begin{align}
			&\|\sum_{\mb\in \cS_N} \phi_{\mb}Q_{\mb/N}^{\alpha}f-\sum_{\mb\in \cS_N} \sum_{|\vb|\leq \alpha-1}c_{\mb,\vb}\widetilde{g}_{\mb,\vb}\|_{W^{s,p}((0,1)^D)} \leq C_3\|f\|_{W^{\alpha,p}((0,1)^D)}N^s\eta,
			\label{eq.neterr}
		\end{align}
		where $c_{\mb,\vb}$'s are coefficients defined in (\ref{eq.f.aveTaylor}), $C_3$ is a constant depending on $D,\alpha,s,p$.
	\end{lemma}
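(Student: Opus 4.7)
The strategy is to bound the $W^{s,p}$ error first at the integer endpoints $s=0$ and $s=1$, and then obtain the fractional-order case $s \in (0,1)$ by interpolation between $L^p$ and $W^{1,p}$ (the identification of the Sobolev--Slobodeckij scale as an interpolation scale is deferred to Appendix \ref{sec.interpolation}). Three structural ingredients will drive the argument: the per-term bound from Lemma \ref{lem.product}, the localization and bounded-overlap property of the partition of unity $\{\phi_{\mb}\}$, and a summed $\ell^p$ estimate on the averaged Taylor coefficients $c_{\mb,\vb}$.

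First I would write the total error as $\sum_{\mb,\vb} c_{\mb,\vb} e_{\mb,\vb}$ with $e_{\mb,\vb} := \widetilde{g}_{\mb,\vb} - \phi_{\mb}\xb^{\vb}$. Property \eqref{eq.product0} of Lemma \ref{lem.product} forces $\supp(e_{\mb,\vb}) \subseteq \supp(\phi_{\mb})$, a cube of volume $O(N^{-D})$, while the partition-of-unity construction ensures that every point lies in at most $C(D)$ of these cubes. Combining this finite overlap with the power-mean inequality gives, for $k \in \{0,1\}$,
\begin{align*}
\Big\|\sum_{\mb,\vb} c_{\mb,\vb} e_{\mb,\vb}\Big\|_{W^{k,p}((0,1)^D)}^p \leq C(D,\alpha) \sum_{\mb,\vb} |c_{\mb,\vb}|^p \|e_{\mb,\vb}\|_{W^{k,p}((0,1)^D)}^p,
\end{align*}
and since weak differentiation commutes with finite sums, the $k=1$ case is handled in exactly the same way.

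Next I would estimate each factor. Lemma \ref{lem.product} yields $\|e_{\mb,\vb}\|_{W^{k,\infty}} \leq C_2 N^k \eta$, which, combined with the $O(N^{-D})$ support volume, upgrades to $\|e_{\mb,\vb}\|_{W^{k,p}}^p \leq C(N^k\eta)^p N^{-D}$. To control the coefficients I would complement the pointwise bound of Lemma \ref{lem.ATE.approx.error} by the summed estimate $\sum_{\mb \in \cS_N} |c_{\mb,\vb}|^p \leq C N^D \|f\|_{W^{\alpha,p}((0,1)^D)}^p$, derived directly from the integral representation of $c_{\mb,\vb}$: H\"{o}lder applied to the averaging integral gives $|c_{\mb,\vb}|^p \leq C N^D \|\partial^{\vb'} f\|_{L^p(B_{1/N}(\mb/N))}^p$ for some $|\vb'| \leq \alpha-1$, and summing in $\mb$ uses the bounded overlap of the balls $\{B_{1/N}(\mb/N)\}$ to recover $C N^D \|f\|_{W^{\alpha-1,p}}^p$. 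Substituting these two bounds into the display produces the endpoint estimate
\begin{align*}
\Big\|\sum_{\mb,\vb} c_{\mb,\vb} e_{\mb,\vb}\Big\|_{W^{k,p}((0,1)^D)} \leq C N^k \eta \|f\|_{W^{\alpha,p}((0,1)^D)}, \quad k \in \{0,1\}.
\end{align*}

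Finally, for fractional $s \in (0,1)$ I would invoke the interpolation identification $W^{s,p} \simeq [L^p, W^{1,p}]_{s,p}$ from Appendix \ref{sec.interpolation}, giving $\|e\|_{W^{s,p}} \leq C \|e\|_{L^p}^{1-s}\|e\|_{W^{1,p}}^s$, and hence the target rate $CN^s\eta \|f\|_{W^{\alpha,p}}$. The main obstacle I anticipate is the upgrade from the pointwise coefficient bound to the $\ell^p$-summed bound: a naive application of $|c_{\mb,\vb}| \leq C_1 N^{D/p}\|f\|$ to $N^D$ terms would inflate the rate by an unwanted factor of $N^{D/p}$, so the integral-representation and bounded-overlap argument above is essential to restore the correct scaling. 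A secondary, mainly bookkeeping issue is verifying that the overlap/power-mean step goes through for first-order weak derivatives, but this reduces to the finite-sum interchange of differentiation and the inclusion $\supp(\partial e_{\mb,\vb}) \subseteq \supp(\phi_{\mb})$.
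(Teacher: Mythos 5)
Your proposal is correct and follows essentially the same route as the paper's proof: endpoint estimates at $k=0,1$ obtained by combining the $W^{k,\infty}$ bound of Lemma \ref{lem.product} with the $O(N^{-D})$ support volume, the \emph{localized} coefficient bound $|c_{\mb,\vb}|\leq CN^{D/p}\|f\|_{W^{\alpha-1,p}(\Omega_{\mb,N})}$ (which the paper takes from Lemma \ref{lem.aveTaylor.err} applied to the Stein extension, rather than re-deriving it via H\"{o}lder) together with bounded overlap of the cubes so that the $N^{D/p}$ and $N^{-D/p}$ factors cancel, and finally interpolation via Lemma \ref{lem.interpolation.norm}. The only detail you gloss over is that boundary cubes require working with an extension $\bar f$ of $f$ to $\RR^D$ satisfying $\|\bar f\|_{W^{\alpha,p}(\RR^D)}\leq C\|f\|_{W^{\alpha,p}((0,1)^D)}$, which is a minor bookkeeping point handled explicitly in the paper.
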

	Note the $\widetilde{f}$ is the sum of no more than $N^D(D+1)^{\alpha-1}$ CNNs of which the width is of $J=O(D)$. The following lemma shows that under appropriate conditions, the sum of $n_0$ CNNs with width in the same order can be realized by the sum of $n_1$ CNNs with a proper width (see a proof in Appendix \ref{sec.CNN.adap}):
	\begin{lemma}\label{lem.CNN.adap}
		Let $\{f_i\}_{i=1}^{n_0}$ be a set of CNNs with architecture $\cF^{\rm CNN}(L_0,J_0,K_0,\kappa_0,\kappa_0)$. For any integers $1\leq n\leq n_0$ and $\widetilde{J}$ satisfying $n\widetilde{J}=O(n_0J_0)$ and $\widetilde{J}\geq J_0$, there exists a CNN architecture $\cF^{\rm CNN}(L,J,K,\kappa,\kappa)$ that gives a set of CNNs $\{g_i\}_{i=1}^{n}$  such that 
		\begin{align*}
			\sum_{i=1}^{n} g_i(\xb)= \sum_{i=1}^{n_0} f_i(\xb).
		\end{align*}
		Such an architecture has 
		\begin{align*}
			L=O(L_0), J=O(\widetilde{J}), K=K_0, \kappa=\kappa_0.
		\end{align*}
		Furthermore, the fully connected layer of $f$ has nonzero elements only in the first row.
	\end{lemma}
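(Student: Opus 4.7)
The plan is to group the $n_0$ input CNNs into $n$ batches of size at most $k := \lceil n_0/n \rceil$, and combine each batch into one wider CNN by running its members in parallel across disjoint channel blocks. The sizing constraint $n\widetilde{J} = O(n_0 J_0)$ together with $\widetilde{J} \ge J_0$ guarantees $k J_0 = O(\widetilde{J})$, so each batch fits inside a CNN of width $O(\widetilde{J})$.

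For the $i$-th batch with members $f_{i,1},\ldots,f_{i,k}$ having convolutional filters $\cW_{i,j}^{(l)}$ and biases $\cB_{i,j}^{(l)}$, I would build $g_i$ by block-diagonal assembly. In each layer $l$, I form a combined filter whose input-to-output channel slice is block-diagonal with blocks $\cW_{i,j}^{(l)}$, and concatenate the biases along the channel axis. Because ReLU is applied entrywise, the forward pass respects this block structure and the combined convolutional stack simultaneously computes the features of all $k$ subnetworks, stacked along the channel dimension. The first layer is realized by a size-one filter that duplicates $\xb$ into $k$ independent channel copies, decoupling them from the shared zero-padding convention at the spatial boundary. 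If the original $f_{i,j}$'s differ in depth, I pad the shallower ones with a constant number of identity conv layers implemented via the standard $\ReLU(x) - \ReLU(-x)$ trick, keeping the combined depth at $O(L_0)$ and preserving the weight bound $\kappa_0$.

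For the final linear readout, I place each $W_{i,j}$ into the channel block corresponding to $f_{i,j}$ within a combined matrix $W_i$, yielding
\begin{align*}
W_i \cdot Z_i \;=\; \sum_{j=1}^{k} W_{i,j} \cdot Z_{i,j} \;=\; \sum_{j=1}^{k} f_{i,j}(\xb),
\end{align*}
hence $\sum_{i=1}^{n} g_i(\xb) = \sum_{i=1}^{n_0} f_i(\xb)$ as required. The ``nonzero entries only in the first row'' property propagates automatically: each $W_{i,j}$ supplied by Lemmas \ref{lem.psi} and \ref{lem.product} is first-row supported, and block-wise placement preserves this pattern. All auxiliary bookkeeping operations cost only constant factors in depth, width, and weight magnitude, giving the stated $L = O(L_0)$, $J = O(\widetilde{J})$, $K = K_0$, $\kappa = \kappa_0$.

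The main obstacle is guaranteeing that the $k$ channel blocks remain truly decoupled despite a shared spatial input, a shared boundary padding convention, and possibly heterogeneous depths among the $f_{i,j}$'s. Checking that none of these coupling sources leaks information between blocks, and that the constructions used to remove them (size-one duplication, identity padding, block-diagonal filters) respect all architectural constraints simultaneously, is the technical core of the proof.
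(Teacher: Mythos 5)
Your proposal is correct and follows essentially the same route as the paper: group the $n_0$ networks into $n$ batches of size roughly $n_0/n$, realize each batch by running its members in parallel along disjoint channel blocks of a single CNN of width $O(\widetilde{J})$, and sum them in the final linear readout (the paper delegates this parallelization to Lemma \ref{lem.CNN.sumseries}, which stacks the positive/negative parts of each $f_i$ in first-row channels and sums with a $\pm 1$ first-row matrix, rather than absorbing each $f_i$'s own readout block-diagonally as you do). The only cosmetic difference is that with your absorption of the individual readouts, the ``nonzero entries only in the first row'' claim requires the input networks' readouts to have that property, which the paper's construction gets automatically but which, as you note, holds for the networks supplied by Lemmas \ref{lem.psi} and \ref{lem.product} in all applications.
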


	By Lemma \ref{lem.CNN.adap}, for any $\widetilde{M},\widetilde{J}$ satisfying $\widetilde{M}\widetilde{J}=O(N^D)$, there exists a CNN architecture $\cF^{\rm CNN}(L,J,K,\kappa,\kappa)$ that gives rise to $\{g_i\}_{i=1}^n$ with 
	\begin{align*}
		\widetilde{f}=\sum_{i=1}^{\widetilde{M}} g_i,
	\end{align*}
	where
	\begin{align*}
		L=O\left(\log \frac{1}{\eta}\right),\ J=O(\widetilde{J}),\ \kappa=3N.
	\end{align*}
	
	The following lemma shows that the sum of CNNs can be realized by a ConvResNet:
	\begin{lemma}[Lemma 18 in \cite{liu2021besov}]\label{lem.cnn.convresnet}
		Let $\cF^{\rm CNN}(L,J,K,\kappa_1,\kappa_2)$ be any CNN architecture from $\RR^D$ to $\RR$. Assume the weight matrix in the fully connected layer of $\cF^{\rm CNN}(L,J,K,\kappa_1,\kappa_2)$ has nonzero entries only in the first row. Let $M$ be a positive integer. There exists a ConvResNet architecture $\cC(M,L,J,\kappa_1, \kappa_2(1\vee \kappa_1^{-1}))$ such that for any $\{f_i(\xb)\}_{i=1}^M\subset\cF^{\rm CNN}(L,J,K,\kappa_1,\kappa_2) $, there exists $\widetilde{f}\in \cC(M,L,J,\kappa_1, \kappa_2(1\vee \kappa_1^{-1}))$ with
		$$
		\widetilde{f}(\xb)=\sum_{i=1}^M f_i(\xb).
		$$
	\end{lemma}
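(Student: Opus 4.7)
The plan is to construct the ConvResNet block-by-block so that the $m$-th residual block mirrors the CNN $f_m$ on a designated input channel, deposits the scalar $f_m(\xb)$ at a single spatial position of a designated accumulator channel, and lets the identity shortcut perform the running sum. The padding layer produces $P(\xb)=[\xb,\mathbf{0},\mathbf{0},\mathbf{0},\ldots]\in\RR^{D\times C}$; I reserve channel $1$ to always carry $\xb$, and channels $2$ and $3$ to hold, at spatial position $1$, the partial sums $\sigma^{-1}\sum_{i\le m}\ReLU(f_i(\xb))$ and $\sigma^{-1}\sum_{i\le m}\ReLU(-f_i(\xb))$ respectively, for the scaling factor $\sigma:=\kappa_2(1\vee\kappa_1^{-1})$. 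This invariant holds trivially at $m=0$.

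Inside the $m$-th block, the first $L-1$ convolutional filters simply copy the $L-1$ convolutional layers of the CNN $f_m$, placing zero weight on channels $2$ and $3$ of the ResNet hidden state so they ignore the accumulators; by induction this reproduces the same intermediate tensor $Y_m^{(L-1)}$ that $f_m$ would produce from $\xb$. The last filter then absorbs the CNN's fully-connected readout into a single convolutional layer, which is where the first-row-only hypothesis on $W_m$ is crucial: since $f_m(\xb)=\sum_j[W_m]_{1,j}\,[Y_m^{(L-1)}]_{1,j}$ is a linear combination of position-$1$ entries, a $1$-wide filter with weights $[W_m]_{1,\cdot}/\sigma$ reproduces it at position $1$. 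The mandatory $\ReLU$ at the end of each block is handled by the sign-split identity $a=\ReLU(a)-\ReLU(-a)$: two output channels emit $\ReLU(f_m(\xb)/\sigma)$ and $\ReLU(-f_m(\xb)/\sigma)$ at position $1$, and the row of the final filter corresponding to channel $1$ is set identically to zero so the shortcut preserves $\xb$.

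By induction, after $M$ blocks, position $1$ of channel $2$ of $Z_{M+1}$ equals $\sigma^{-1}\sum_{i=1}^M\ReLU(f_i(\xb))$ and position $1$ of channel $3$ equals $\sigma^{-1}\sum_{i=1}^M\ReLU(-f_i(\xb))$. I then choose the ConvResNet's final fully-connected matrix to have only two nonzero entries, $W_{1,2}=\sigma$ and $W_{1,3}=-\sigma$, with bias $b=0$; this gives
\begin{align*}
\widetilde f(\xb)=\sigma\cdot\sigma^{-1}\sum_{i=1}^M\bigl[\ReLU(f_i(\xb))-\ReLU(-f_i(\xb))\bigr]=\sum_{i=1}^M f_i(\xb).
\end{align*}
For the weight budget, each conv weight is either a copy of a CNN conv weight (magnitude $\le\kappa_1$) or one of $\pm[W_m]_{1,\cdot}/\sigma$, whose magnitude is at most $\kappa_2/\sigma=\min(1,\kappa_1)\le\kappa_1$; the sole nonzero final-FC entry has magnitude $\sigma=\kappa_2(1\vee\kappa_1^{-1})$, matching the advertised bound.

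The main obstacle is cramming the CNN's $L-1$ convolutions plus its fully-connected layer into the $L$ convolutional layers of a single residual block without corrupting either the preserved copy of $\xb$ in channel $1$ or the accumulators in later blocks. The first-row-only structure of $W_m$ is exactly what makes the final fold fit into one convolutional layer, and the positive/negative split absorbs the mismatch between a linear FC and a ReLU-terminated convolution; once these two ingredients are in place, the remainder is routine bookkeeping on channel indices and weight magnitudes.
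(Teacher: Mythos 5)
Your construction is correct, and it is essentially the same argument as the one the paper invokes by citing Lemma 18 of \citet{liu2021besov}: fold the first-row-only fully connected readout of each CNN into a width-one convolution, split the signed output into $\ReLU(f_m/\sigma)$ and $\ReLU(-f_m/\sigma)$ stored in dedicated accumulator channels, let the identity shortcuts perform the summation while channel 1 carries $\xb$ unchanged, and recombine with final FC weights $\pm\sigma$, which is exactly where the bound $\kappa_2(1\vee\kappa_1^{-1})$ comes from. Your weight bookkeeping ($\kappa_2/\sigma=\min(1,\kappa_1)\le\kappa_1$ for the folded filter) matches the stated architecture parameters, so no gap beyond trivial channel/layer-count constants.
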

	By Lemma \ref{lem.cnn.convresnet}, there exits a ConvResNet architecture  $\cC(M,L,J,K,\kappa_1,\kappa_2)$ with
	\begin{align}
		&L=O(\log 1/\eta), \ J=O(\widetilde{J}), \ \kappa_1=O(3N), \ \kappa_2=O(3N), \ M=O(\widetilde{M})
		\label{eq.net.para.1}
	\end{align}
	and $\widetilde{J},\widetilde{M}$ satisfying
	\begin{align}
		\widetilde{M}\widetilde{J}=O(N^D),
		\label{eq.net.para.2}
	\end{align}
	that yields a ConvResNet realizing $\widetilde{f}$.

	\paragraph{Step 4: Error estimation.} We compute
	\begin{align}
		&\|f-\widetilde{f}\|_{W^{s,p}((0,1)^D)} \nonumber\\
		\leq &\left\|f-\left(\sum_{\mb} \phi_{\mb}Q_{\mb/N}^{\alpha}f\right)\right\|_{W^{s,p}((0,1)^D)} +\left\|\left(\sum_{\mb} \phi_{\mb}Q_{\mb/N}^{\alpha}f\right)-\widetilde{f}\right\|_{W^{s,p}((0,1)^D)} \nonumber\\
		\leq &C_4\left( \frac{1}{N}\right)^{\alpha-s}\|f\|_{W^{\alpha,p}((0,1)^D)}+ C_5N^{s}\eta \|f\|_{W^{\alpha,p}((0,1)^D)} \nonumber\\
		\leq & (C_4+C_5)N^{-(\alpha-s)},
		\label{eq.neterror}
	\end{align}
	where $C_4,C_5$ are two constants depending on $D,\alpha,s,p,R$. In the second inequality, we use Lemma \ref{lem.ATE.approx.error} and \ref{lem.net.error} for the first and second term, respectively. In the third inequality, we set $\eta=N^{-\alpha}$ to balance the two terms. Using the relation (\ref{eq.net.para.2}), we have 
	\begin{align}
		N=(\widetilde{M}\widetilde{J})^{1/D}, \quad \eta=(\widetilde{M}\widetilde{J})^{-\frac{\alpha}{D}}.
		\label{eq.net.para.3}
	\end{align}
	
	Substituting (\ref{eq.net.para.3}) into (\ref{eq.neterror}) gives rise to 
	\begin{align}
		\|f-\widetilde{f}\|_{W^{s,p}((0,1)^D)}\leq C_6(\widetilde{M}\widetilde{J})^{-\frac{\alpha-s}{D}}
		\label{eq.neterror.1}
	\end{align}
	for some constant $C_6$ depending on $D,\alpha,s,p,R$.
	Substituting (\ref{eq.net.para.3}) into (\ref{eq.net.para.1}) and (\ref{eq.net.para.2}) gives rise to the network architecture
	\begin{align*}
		&L=O(\log (\widetilde{M}\widetilde{J})), \ J=O(\widetilde{J}), \ \kappa_1=O((\widetilde{M}\widetilde{J})^{1/D}), \ \kappa_2=O((\widetilde{M}\widetilde{J})^{1/D}), \ M=O(\widetilde{M}).
	\end{align*}
\end{proof}

\section{Proof of Theorem \ref{thm.prob}}
\label{sec.proof.prob}
\begin{proof}[Proof of Theorem \ref{thm.prob}]
	By Theorem \ref{thm.D} and the choice of $\widetilde{M}\widetilde{J}$, there exits $\widetilde{f}\in \cC$ so that $\|\widetilde{f}-f\|_{\infty}\leq \varepsilon$ and 
	\begin{align}
		\max_j\left\|\frac{\partial \widetilde{f}}{\partial x_j}-\frac{\partial f}{\partial x_j}\right\|\leq \varepsilon^{\frac{\alpha-1}{\alpha}},
	\end{align}
	which implies
	\begin{align}
		\left\|\widetilde{f}\right\|_{\rm Lip}\leq 1+\sqrt{D}\varepsilon^{\frac{\alpha-1}{\alpha}}.
	\end{align}
	We have
	\begin{align}\textbf{}
		&\EE\left[(\widetilde{f}(\xb_1)-y_1)^2\right] \nonumber\\
		\leq &\EE\left[(\widetilde{f}(\xb_1)-f(\xb_1))^2\right] +\EE\left[(f(\xb_1)-y_1)^2\right] \nonumber\\
		\leq & \varepsilon^2 +\sigma^2.
	\end{align}
	Denote $X_i=\frac{1}{n} (\widetilde{f}(\xb_i)-y_i)^2-\EE\left[(\widetilde{f}(\xb_i)-y_i)^2\right]$. We have 
	\begin{align}
		|X_i|\leq \frac{2(\varepsilon^2+\sigma^2)}{n}, \ \EE[X_i]=0,
	\end{align}
	and 
	\begin{align}
		\EE[X_i^2]\leq \frac{8(\varepsilon^4+\sigma^4)}{n^2}.
	\end{align}
	By Bernstein inequality, we deduce
	\begin{align}
		\PP\left(\sum_{i=1}^n X_i\geq t\right)\leq &\exp\left( -\frac{\frac{1}{2}t^2}{\frac{8(\varepsilon^4+\sigma^4)}{n}+\frac{2(\varepsilon^2+\sigma^2)}{3n}t} \right) \nonumber\\
		=& \exp\left( -\frac{3nt^2}{48(\varepsilon^4+\sigma^4)+4(\varepsilon^2+\sigma^2)t} \right).
	\end{align} 
	Therefore
	\begin{align}
		&\PP\left(\frac{1}{n} \sum_{i=1}^n (\widetilde{f}(\xb_i)-y_i)^2\geq \varepsilon^2+\sigma^2+t \right) \nonumber\\
		\leq & \PP\left(\frac{1}{n} \sum_{i=1}^n (\widetilde{f}(\xb_i)-y_i)^2\geq \EE\left[(\widetilde{f}(\xb_1)-y_1)^2\right]+t \right) \nonumber\\
		\leq & \exp\left( -\frac{3nt^2}{48(\varepsilon^4+\sigma^4)+4(\varepsilon^2+\sigma^2)t} \right).
	\end{align}
	Setting $t=\varepsilon^2$ gives rise to
	\begin{align}
		&\PP\left(\frac{1}{n} \sum_{i=1}^n (\widetilde{f}(\xb_i)-y_i)^2\geq 2\varepsilon^2+\sigma^2 \right) \nonumber\\
		\leq & \exp\left( -\frac{3n\varepsilon^2}{104\sigma^4} \right).
	\end{align}
\end{proof}

\section{Proof of Theorem \ref{thm.adversarial}}
\label{sec.proof.adversarial}
\begin{proof}[Proof of Theorem \ref{thm.adversarial}]
	By Theorem \ref{thm.D} and the choice of $\widetilde{M}\widetilde{J}$, there exits $\widetilde{f}\in \cC$ so that $\|\widetilde{f}-f\|_{\infty}\leq \varepsilon$ and 
	\begin{align}
		\max_j\left\|\frac{\partial \widetilde{f}}{\partial x_j}-\frac{\partial f}{\partial x_j}\right\|\leq \varepsilon^{\frac{\alpha-1}{\alpha}}.
	\end{align}
	Since $\|f\|_{W^{\alpha,\infty}}\leq 1$, we have 
	\begin{align}
		\| \widetilde{f}\|_{\rm Lip}\leq 1+\sqrt{D}\varepsilon^{\frac{\alpha-1}{\alpha}}.
	\end{align}
	We have
	\begin{align}
		&R(\widetilde{f},\delta)-R(\widetilde{f},0) \nonumber\\
		=&\EE_{(\xb,y)\in \supp(\rho)} \left[\sup_{\xb'\in B_{\delta}(\xb)} \ell\left(\widetilde{f}(\xb'),y\right)\right]-\EE_{(\xb,y)\in \supp(\rho)} \left[\ell\left(\widetilde{f}(\xb'),y\right)\right]\nonumber\\
		\leq & \EE_{(\xb,y)\in \supp(\rho)} \left[\sup_{\xb'\in B_{\delta}(\xb)} \left|\ell\left(\widetilde{f}(\xb'),y\right)-\ell\left(\widetilde{f}(\xb),y\right)\right| \right]\nonumber\\
		\leq& \EE_{(\xb,y)\in \supp(\rho)} \sup_{\xb'\in B_{\delta}(\xb)} L_{\rm Lip}|\widetilde{f}(\xb')-\widetilde{f}(\xb)|\nonumber\\
		\leq &\EE_{(\xb,y)\in \supp(\rho)} \sup_{\xb'\in B_{\delta}(\xb)} L_{\rm Lip}\|\widetilde{f}\|_{\rm Lip}\|\xb'-\xb\|_2\nonumber\\
		\leq & L_{\rm Lip}(1+\sqrt{D}\varepsilon^{\frac{\alpha-1}{\alpha}})\delta
	\end{align}
	%		\begin{align}\textbf{}
	%			&\EE\left[(\widetilde{f}(\xb)-y)^2\right] \nonumber\\
	%			\leq &\EE\left[(\widetilde{f}(\xb')-f(\xb))^2\right] +\EE\left[(f(\xb)-y)^2\right] \nonumber\\
	%			= & \EE\left[(\widetilde{f}(\xb')-\widetilde{f}(\xb)+\widetilde{f}(\xb)-f(\xb))^2\right] +\EE\left[(f(\xb)-y)^2\right] \nonumber\\
	%			\leq & \left(\left(1+\sqrt{D}\varepsilon^{\frac{\alpha-1}{\alpha}}\right)\delta+\varepsilon\right)^2 +\sigma^2 \nonumber\\
	%			\leq & 2\varepsilon^2+\sigma^2+2(1+\sqrt{D}\varepsilon^{\frac{\alpha-1}{\alpha}})^2\delta.
	%		\end{align}
\end{proof}

\section{Proof of Theorem \ref{thm.M}}
\label{sec.proof.M}

\begin{proof}[Proof of Theorem \ref{thm.M}]
	We prove Theorem \ref{thm.M} in three steps. \\
	\textbf{Step 1: Decomposition of $f$}\\
	$\bullet$ \textbf{Construct an atlas on $\cM$.} According to Assumption \ref{assum.M}, $\cM$ is bounded. Therefore, for any given $0<r<\tau/2$, we can find a finite collection of points $\{\cbb_i\}_{i=1}^{C_{\cM}}\subset \cM$ such that 
	$$\cM\subset \bigcup_{i=1}^{C_{\cM}}B_r(\cbb_i). $$ 
	Denote $U_i=B_r(\cbb_i)\cap \cM$. Then $\{U_i\}_{i=1}^{C_{\cM}}$ form an open cover of $\cM$ and each $U_i$ is diffeomorphic to an open subset of $\RR^d$. The total number of partitions if bounded by $C_{\cM}\leq \left\lceil \frac{\mathrm{SA}(\cM)}{r^d}T_d\right\rceil$, where $\mathrm{SA}(\cM)$ is the surface area of $\cM$ and $T_d$ is the average number of $U_i$'s that contain a given point on $\cM$. 
	
	On each $U_i$, we define a transformation $\phi_i$ that projects any $\xb\in U_i$ to $T_{\cbb_i}(\cM)$, the tangent space of $\cM$ at $\cbb_i$. Let $V_i\in \RR^{D\times d}$ be an orthogonal matrix whose columns form an orthonomal basis of $T_{\cbb_i}(\cM)$. Define 
	\begin{align}
		\varphi_i(\xb)=a_iV_i^{\top}(\xb-\cbb_i)+\bb_i \mbox{ for } \xb\in U_i,
		\label{eq.transformation}
	\end{align}
	where $a_i\in \RR$ is a scaling factor and $\bb)i\in \RR^d$ is a shifting vector that ensure $\varphi_i(U_i)\subseteq [0,1]^d$. Then $\{(U_i,\varphi_i)\}_{i=1}^{C_{\cM}}$ form an atlas of $\cM$.
	
	\noindent$\bullet$\textbf{Decomposition of $f$ by a partition of unity.}
	The following lemma shows that under proper assumption, there exists a partition of unity $\{\rho_i\}_{i=1}^{C_{\cM}}$ subordinate to  $\{(U_i,\varphi_i)\}_{i=1}^{C_{\cM}}$ (see Appendix \ref{sec.proof.parunity} for a proof).
	\begin{lemma}\label{lem.parunity}
		Let $\{(U_i,\varphi_i)\}_{i=1}^{C_{\cM}}$ be the atlas of $\cM$ defined above with $r<\tau/4$. There exist a finite number $C_{\cM}$ and a $C^{\infty}$ partition of unity $\{\rho_i\}_{i=1}^{C_{\cM}}$ satisfying
		\begin{enumerate}
			\item[(i)] $\supp(\rho_i)$ is compact in $U_i$.
			\item[(ii)] $\sum_{i=1}^{C_{\cM}} \rho_i(\xb)=1 $ for any $\xb\in\cM$.
			\item[(iii)] There exists a constant $c>0$ depending on $r$ such that for any $i$, we have 
			\begin{align*}
				\inf_{\xb\in \supp(\rho_i),\ \widetilde{\xb}\in \partial U_i} \|\xb-\widetilde{\xb}\|_{2}\geq c.
			\end{align*}
		\end{enumerate}
		Here $C_{\cM}$ depends on the surface area of $\cM$ and the average number of $U_i$'s that contain a given point on $\cM$.
	\end{lemma}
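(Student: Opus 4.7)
The plan is to construct the partition of unity through a standard ``shrink then expand'' device: first build a refined chart cover at radius $r/2$ so that the resulting partition functions sit well inside the intended charts, then re-expand to radius $r$ so that supports carry a quantitative buffer away from the chart boundaries. This keeps properties (i)--(ii) as immediate consequences of a standard existence theorem on smooth manifolds, while property (iii) drops out of the triangle inequality.

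\textbf{Step 1 (Finite subcover at half radius).} Since $\cM$ is compact, the open cover $\{B_{r/2}(\cbb)\cap\cM\}_{\cbb\in\cM}$ has a finite subcover, which I index as $\widetilde U_i=B_{r/2}(\cbb_i)\cap\cM$ for $i=1,\dots,C_\cM$. A packing argument bounds $C_\cM$: by the reach condition $r<\tau/4$, each $\widetilde U_i$ has $d$-dimensional volume comparable to $r^d$ up to a dimension-only constant, so summing these volumes and accounting for the average overlap multiplicity $T_d$ gives $C_\cM\leq \lceil \mathrm{SA}(\cM)\,T_d/r^d\rceil$.

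\textbf{Step 2 (Partition of unity on the shrunken cover).} Apply Lemma \ref{lem.parunity.exist} (the existence of a $C^\infty$ partition of unity subordinate to a locally finite open cover on a smooth manifold) to $\{\widetilde U_i\}_{i=1}^{C_\cM}$. This yields $\{\rho_i\}_{i=1}^{C_\cM}\subset C^\infty(\cM)$ with $\supp(\rho_i)\subset \widetilde U_i$ and $\sum_{i=1}^{C_\cM}\rho_i\equiv 1$ on $\cM$. Since $\widetilde U_i$ has compact closure contained in the enlarged chart $U_i=B_r(\cbb_i)\cap\cM$, property (i) follows; property (ii) is the partition identity.

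\textbf{Step 3 (Buffer between $\supp(\rho_i)$ and $\partial U_i$).} For any $\xb\in\supp(\rho_i)\subset B_{r/2}(\cbb_i)$ and any $\widetilde\xb\in\partial U_i\subset \partial B_r(\cbb_i)\cap\cM$, the reverse triangle inequality yields
\begin{align*}
\|\xb-\widetilde\xb\|_2 \;\geq\; \|\widetilde\xb-\cbb_i\|_2 - \|\xb-\cbb_i\|_2 \;\geq\; r - r/2 \;=\; r/2,
\end{align*}
so property (iii) holds with $c=r/2$, depending only on $r$ as claimed.

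\textbf{Anticipated obstacle.} The partition-of-unity construction itself is routine once Lemma \ref{lem.parunity.exist} is in hand; the real care lies in ensuring that $\widetilde U_i$ and $U_i$ are genuine coordinate neighborhoods on $\cM$, which is where the reach hypothesis $r<\tau/4$ enters: it guarantees that the orthogonal projection $V_i^\top(\xb-\cbb_i)$ onto $T_{\cbb_i}(\cM)$ used in \eqref{eq.transformation} is a diffeomorphism on $U_i$, and that the scaling $a_i$ and translation $\bb_i$ can be chosen so that $\varphi_i(U_i)\subset(0,1)^d$. This reach-based local chart property, while standard, is the only non-cosmetic ingredient and is what ties the geometric assumptions of Section 4 to the purely analytic partition-of-unity machinery used here.
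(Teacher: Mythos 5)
Your proposal is correct and follows essentially the same route as the paper: cover $\cM$ by balls of radius $r/2$, invoke the smooth partition-of-unity existence result (Lemma \ref{lem.parunity.exist}) on that shrunken cover, then double the radius to define $U_i$ so that $\supp(\rho_i)\subset\widetilde U_i\subset U_i$ stays at distance at least $r/2$ from $\partial U_i$. The only cosmetic difference is your explicit triangle-inequality step and a $2^d$ factor in the chart-count bound, neither of which affects the statement.
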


	Let $\{\rho_i\}_{i=1}^{C_{\cM}}$ be the partition of unity from Lemma \ref{lem.parunity}. Since for each $i$, $\varphi_i$ is a bijection from $U_i$ to a subset of $[0,1]^d$, $\varphi^{-1}$ exists and is a linear operator. We decompose $f$ as
	\begin{align*}
		f=\sum_{i=1}^{C_{\cM}} f_i \quad \mbox{ with } \quad f_i=(f\rho_i).
	\end{align*}
	Here each $f_i$ is compactly supported on $U_i$ and each $f_i\circ \varphi_i^{-1}$ is compactly supportedin $\varphi_i(U_i)\subseteq [0,1]^d$. We extend $f_i\circ \varphi_i^{-1}$ by $0$ on $[0,1]^d\backslash \varphi_i(U_i)$. The extended function is in $W^{\alpha,k}([0,1]^d)$. To simplify the notation, we still use $f_i\circ \varphi_i^{-1}$ to denote the extended function. For each $i$, we use averaged Taylor polynomials to approximate $f_i\circ\varphi^{-1}$ on $[0,1]^d$ as in (\ref{eq.f.aveTaylor1}):
	\begin{align*}
		f_i\circ\varphi_i^{-1}\approx \widehat{f}_i=\sum_{\mb,\vb} c_{i,\mb,\vb}\phi_{\mb}\xb^{\vb}.
	\end{align*}
	\noindent\textbf{Step 2: Network approximation}\\
	\noindent$\bullet$\textbf{Approximate $\widehat{f}_i$ by CNNs.} Since each $\widehat{f}_i$ is the averaged Taylor polynomial approximation of $f_i\circ\varphi_i^{-1}$, by Lemma \ref{lem.product}, it can be approximated by a sum of $(d+1)^{\alpha-1}N^d$ CNNs. Denote the approximation accuracy by $\eta$ as in Lemma \ref{lem.product}, each CNN has depth $O(\log (1/\eta))$, width $O(1)$, all weight parameters are of $O(N)$.
	
	\noindent$\bullet$\textbf{Chart determination}
	For any input $\xb$, to determine the chart it belongs to, we are going to construct an indicator function. With our construction of charts, we have $\xb\in U_i$ if and only if $\|\xb-\cbb_i\|_2^2\leq r^2$. Define the indicator function
	\begin{align*}
		\mone_{[0,r^2]}(a)=\begin{cases}
			1 & \mbox{ if } a\leq r^2,\\
			0 & \mbox{ otherwise,}
		\end{cases}
	\end{align*}
	and the	squared distance function
	\begin{align}
		d_i^2(\xb)=\|\xb-\cbb_i\|_2^2=\sum_{j=1}^D (x_j-c_{i,j})^2,
		\label{eq.dis}
	\end{align}
	where we used the expression $\xb=[x_1,...,x_D]^{\top}$ and $\cbb_i=[c_{i,1},...,c_{i,D}]^{\top}$. The composition $\mone_i=\mone_{[0,r^2]}\circ d_i^2$ outputs $1$ if $\xb\in U_i$ and outputs $0$ otherwise. We are going to construct a CNN to approximate $\mone_i$.
	
	 In (\ref{eq.dis}), the function $d_i^2$ is a sum of $D$ square functions. By Lemma \ref{lem.multiplication.CNN}, For any $0<\theta<1/2$, $x\in[-B,B]$, and $K\geq 2$, there is a CNN architecture $\cF^{\rm CNN}(L,J,K,\kappa,\kappa)$ that yields a CNN, denoted by $\widetilde{d}^2$, such that
	$$
	\|\widetilde{d}^2(x)-x^2\|_{W^{1,\infty}([-B,B])}<\theta,\ \widetilde{d}(0)=0.
	$$
	Such a network has 
	$$
	L=O\left(\log \frac{1}{\theta}\right),\ J=24,\ \kappa=1.
	$$
	Furthermore, one has
	\begin{align}
		\|\widetilde{d}^2\|_{W^{1,\infty}((-B,B))}\leq C_7B
		\label{eq.square.partial}
	\end{align}
	for some absolute constant $C_7$. We approximate $d_i$ by
	\begin{align*}
		\widetilde{d}^2_i(\xb)=\sum_{j=1}^D \widetilde{d}^2(x_j-c_{i,j}).
	\end{align*}
	According to Lemma \ref{lem.CNN.adap}, $\widetilde{d}_i$ can be realized by a CNN with $O\left(\log \frac{1}{\theta}\right)$ layers, $O(D)$ width and all weight parameters of $O(1)$. The approximation error is bounded as
	\begin{align*}
		\|\widetilde{d}_i^2-d_i^2\|_{L^{\infty}}\leq 4B^2D\theta.
	\end{align*}

	The following Lemma shows that $\mone_{[0,r^2]}$ can be approximated by a CNN:
	\begin{lemma}[Lemma 9 of \citet{liu2021besov}]\label{lem.M.indicator}
		For any $0<\theta<1$ and $\Delta \geq 8B^2D\theta$, there exists a CNN $\mtoned$ approximating $\mone_{[0,\omega^2]}$ with
		\begin{align*}
			&\mtoned(\xb)=\begin{cases}
				1,&\mbox{ if } a\leq(1-2^{-w})(r^2-4B^2D\theta),\\
				0,&\mbox{ if } a\geq r^2-4B^2D\theta,\\
				2^w((r^2-4B^2D\theta)^{-1}a-1),&\mbox{ otherwise}
			\end{cases}
		\end{align*}
		for $\xb\in\cM$, where $w=\left\lceil\log(r^2/\Delta)\right\rceil$ such that $(1-2^{-k})(\omega^2-4B^2D\theta)\geq \omega^2-\Delta +4B^2D\theta$. Such a CNN has $\left\lceil\log(r^2/\Delta)\right\rceil+D$ layers, $2$ channels. All weight parameters are of $O(1)$.
	\end{lemma}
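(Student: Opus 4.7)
The plan is to construct $\mtoned$ as the composition of a distance-evaluation sub-network with $w$ copies of a single doubling gadget, both implementable with bounded weights and a constant number of channels.

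Write $s = r^2 - 4B^2 D\theta$ so the target behavior, expressed as a function of the scalar $a$, is $1$ on $[0, (1-2^{-w})s]$, affine-linear on $[(1-2^{-w})s, s]$, and $0$ for larger values. Equivalently the target equals $\min\{1, 2^w \max(0, 1 - a/s)\}$ for $a \ge 0$. The obstacle to a direct single-layer implementation is the prefactor $2^w$, which is exponentially large in $w = \lceil \log(r^2/\Delta) \rceil$ and incompatible with the $O(1)$ weight bound imposed on the CNN class.

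To circumvent this I would use an iterative doubling identity. Define the base function $g_0(a) = \ReLU(1 - a/s)$ and the doubling gadget $\Phi(t) = 2t - \ReLU(2t-1)$. A direct case check shows $\Phi(t) = \min\{1, 2t\}$ for $t \in [0,1]$, and an induction on $k$ gives $\Phi^k(g_0(a)) = \min\{1, 2^k g_0(a)\}$. Taking $k = w$ recovers the target function exactly. Each application of $\Phi$ requires only two neurons and operations of magnitude at most $2$; together with the base layer, which uses coefficients $1/s$ and $1$ (both $O(1)$ since $r \in (0, \tau/4)$ is fixed), all weights stay bounded by an absolute constant.

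To cast this MLP as a CNN with two channels I would first use $D$ convolutional layers with filter size $K \ge 2$ to funnel the $D$-dimensional input $\xb$ into the scalar $a$ (a pre-computed approximate squared distance, represented as a linear functional of $\xb$) located at one designated spatial position, using the second channel as scratch to accumulate partial sums across the $D$ positions. The base layer and the $w$ iterations of $\Phi$ are then executed one after another at that single spatial position, with the two channels carrying $2g_k$ and $\ReLU(2g_k - 1)$ and combined by a final affine contraction. The total depth is $D + \lceil \log(r^2/\Delta) \rceil$ and the channel count remains $2$, matching the claim.

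The main obstacle I anticipate is bookkeeping in the convolutional embedding rather than any novel idea: one must verify that the two-channel convolutional layout realizes $\Phi$ without leaking signal into neighboring spatial positions, so that the $w$-fold composition is preserved, and that the $D$ preparatory layers correctly route the distance-related scalar into the doubling pipeline while keeping filters bounded. Both reduce to exhibiting explicit filter patterns, which, once $g_0$ and $\Phi$ are in place, is mechanical. A final remark: the hypothesis $\Delta \ge 8B^2 D \theta$ is precisely what makes the ramp width $2^{-w}s$ fit inside the tolerance $\Delta - 8B^2 D\theta$ predicted by the condition on $w$, so the elementary inequality $2^{-w}(r^2 - 4B^2D\theta) \le \Delta - 8B^2 D\theta$ closes the construction.
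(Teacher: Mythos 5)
Your construction is correct and is essentially the same argument behind the result this paper only imports by citation (Lemma 9 of \citet{liu2021besov}): a clipped ramp $g_0(a)=\ReLU(1-a/s)$ with $s=r^2-4B^2D\theta$, followed by $w=\lceil\log(r^2/\Delta)\rceil$ applications of the doubling gadget $t\mapsto 2t-\ReLU(2t-1)=\min\{1,2t\}$ carried in two channels (e.g.\ $\ReLU(2g_k)$ and $\ReLU(2g_k-1)$) so that all weights stay $O(1)$ instead of the forbidden magnitude $2^w\approx r^2/\Delta$, with the extra $D$ layers exactly the cost of the convolutional realization of the scalar input routing. One caveat on your closing remark: the inequality $2^{-w}(r^2-4B^2D\theta)\le \Delta-8B^2D\theta$ does not follow from $2^{-w}\le \Delta/r^2$ alone (that only gives $2^{-w}(r^2-4B^2D\theta)\le\Delta$), but this clause belongs to the statement's (typo-laden) specification of $w$ inherited from the cited lemma rather than to your construction, which delivers the displayed piecewise-linear function, depth, channel count, and weight bounds as required.
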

	Let $\mone_{\Delta}$ be the CNN defined in Lemma \ref{lem.M.indicator}. We have 
	\begin{align}
		\frac{\partial 	\widetilde{\mone}_{\Delta}(a)}{\partial a}=\begin{cases}
			0, &\mbox{ if } a\leq (1-2^{-w})(r^2-4B^2D\theta) \mbox{ or } a\geq r^2-4B^2D\theta,\\
			C_8/\Delta, & \mbox{ otherwise }
		\end{cases}
		\label{eq.indi.partial}
	\end{align}
	for some constant $C_8$ depending on $r$.

	The function $\mone_i$ is approximated by
	\begin{align*}
		\widetilde{\mone}_i(\xb)=\widetilde{\mone}_{\Delta}\circ \widetilde{d}^2_i(\xb).
	\end{align*}
	Combining (\ref{eq.indi.partial}) and (\ref{eq.square.partial}) gives rise to
	\begin{align*}
		\left|\frac{\partial \widetilde{\mone}_i}{\partial x_j}\right|=\left|\left.\frac{\partial \widetilde{\mone}_{\Delta}}{a}\right|_{\widetilde{d}^2_i(\xb)}\right|\left| \frac{\widetilde{d}_i}{\partial x_j}\right|\leq \begin{cases}
			0, &\mbox{ if } d_i(\xb)^2\geq r^2 \mbox{ or } d_i^2(\xb)\leq r^2-\Delta,\\
			CB/\Delta, &\mbox{ otherwise}.
		\end{cases}
	\end{align*}

	\noindent\textbf{Step 3: Error analysis.} Our network approximation of $f$ is 
	\begin{align}
		\widetilde{f}=\sum_{i=1}^{C_{\cM}}\widetilde{f}_i \quad \mbox{ with } \quad \widetilde{f}_i(\xb)=\sum_{\mb,\vb} c_{i,\mb,\vb}(\widetilde{g}_{\mb,\vb}\circ\varphi_i(\xb))\widetilde{\times} \widetilde{\mone}_i(\xb),
		\label{eq.tildef.1}
	\end{align}
	where $\widetilde{g}_{\mb,\vb}$ is the CNN approximation of $\phi_{\mb}\zb^{\vb}$ for $\zb\in[0,1]^d$ as in (\ref{eq.ftilde}).
	We decompose the error as
	\begin{align}
		\|\widetilde{f}-f\|_{W^{k,\infty}(\cM)}\leq &\sum_{i=1}^{C_{\cM}} \|\widetilde{f}_i-f_i\|_{W^{k,\infty}(U_i)} \nonumber\\
		=&\sum_{i=1}^{C_{\cM}} \|\widetilde{f}_i\circ\varphi_i^{-1}\circ\varphi_i-f_i\circ\varphi_i^{-1}\circ\varphi_i\|_{W^{k,\infty}(U_i)}\nonumber\\
		\leq & \sum_{i=1}^{C_{\cM}} \|\widetilde{f}_i\circ\varphi_i^{-1}(\zb)-f_i\circ\varphi_i^{-1}(\zb)\|_{W^{k,\infty}(\varphi_i(U_i))} \tag{set $\zb=\varphi_i(\xb)$}\\
		\leq & \sum_{i=1}^{C_{\cM}} \|\widetilde{f}_i\circ\varphi_i^{-1}(\zb)-f_i\circ\varphi_i^{-1}(\zb)\|_{W^{k,\infty}(\varphi_i(U_i))} \nonumber\\
		\leq & \sum_{i=1}^{C_{\cM}} \|\widetilde{f}_i\circ\varphi_i^{-1}(\zb)-\widehat{f}_i(\zb)\|_{W^{k,\infty}(\varphi_i(U_i))} + \|\widehat{f}_i(\zb)-f_i\circ\varphi_i^{-1}(\zb)\|_{W^{k,\infty}([0,1]^d)}.
		\label{eq.M.err}
	\end{align}
	
	The second term can be bounded using Lemma \ref{lem.ATE.approx.error}. We next focus on the first term
	\begin{align}
		&\|\widetilde{f}_i\circ\varphi_i^{-1}(\zb)-\widehat{f}_i(\zb)\|_{W^{k,\infty}(\varphi_i(U_i))} \nonumber\\
		\leq &  \left\|\sum_{\mb,\vb}c_{i,\mb,\vb}\left[(\widetilde{g}_{\mb,\vb}(\zb))\widetilde{\times} (\widetilde{\mone}_i\circ\varphi_i^{-1}(\zb))-\phi_{\mb}(\zb)\zb^{\vb}\right]\right\|_{W^{k,\infty}(\varphi_i(U_i))} \nonumber\\
		\leq& \left\|\sum_{\mb,\vb}c_{i,\mb,\vb}\left[(\widetilde{g}_{\mb,\vb}(\zb))\widetilde{\times} (\widetilde{\mone}_i\circ\varphi_i^{-1}(\zb))-(\widetilde{g}_{\mb,\vb}(\zb))\times (\widetilde{\mone}_i\circ\varphi_i^{-1}(\zb))\right]\right\|_{W^{k,\infty}(\varphi_i(U_i))} \nonumber\\
		&+\left\|\sum_{\mb,\vb}c_{i,\mb,\vb}\left[(\widetilde{g}_{\mb,\vb}(\zb))\times (\widetilde{\mone}_i\circ\varphi_i^{-1}(\zb))-(\widetilde{g}_{\mb,\vb}(\zb))\times ({\mone}_i\circ\varphi_i^{-1}(\zb))\right]\right\|_{W^{k,\infty}(\varphi_i(U_i))} \nonumber\\
		&+\left\|\sum_{\mb,\vb}c_{i,\mb,\vb}\left[(\widetilde{g}_{\mb,\vb}(\zb))\times ({\mone}_i\circ\varphi_i^{-1}(\zb))-\phi_{\mb}(\zb)\zb^{\vb}\right]\right\|_{W^{k,\infty}(\varphi_i(U_i))} \nonumber\\
		=& \left\|A_1\right\|_{W^{k,\infty}(\varphi_i(U_i))} 
		+\left\|A_2\right\|_{W^{k,\infty}(\varphi_i(U_i))}
		+\left\|A_3\right\|_{W^{k,\infty}(\varphi_i(U_i))}
		\label{eq.M.net.err}
	\end{align}
	with
	\begin{align}
		&A_1=\sum_{\mb,\vb}c_{i,\mb,\vb}\left[(\widetilde{g}_{\mb,\vb}(\zb))\widetilde{\times} (\widetilde{\mone}_i\circ\varphi_i^{-1}(\zb))-(\widetilde{g}_{\mb,\vb}(\zb))\times (\widetilde{\mone}_i\circ\varphi_i^{-1}(\zb))\right], \label{eq.M.A1}
		\\
		&A_2=\sum_{\mb,\vb}c_{i,\mb,\vb}\left[(\widetilde{g}_{\mb,\vb}(\zb))\times (\widetilde{\mone}_i\circ\varphi_i^{-1}(\zb))-(\widetilde{g}_{\mb,\vb}(\zb))\times ({\mone}_i\circ\varphi_i^{-1}(\zb))\right], \label{eq.M.A2}\\
		&A_3=\sum_{\mb,\vb}c_{i,\mb,\vb}\left[\widetilde{g}_{\mb,\vb}(\zb)-\phi_{\mb}(\zb)\zb^{\vb}\right].\label{eq.M.A3}
	\end{align}
	Denote the $W^{1,\infty}$ error of $\widetilde{\times}$ by $\delta$.
	We first derive an upper bound for $A_1$. We can show that $\|\widetilde{g}_{\mb,\vb}\|_{\infty}\leq \alpha+d$ (see (\ref{eq.gt.W0})) and $\|\widetilde{\mone}_i\circ\varphi_i^{-1}\|_{L^{\infty}}=1$. Therefore by Lemma \ref{lem.multiplication}, we have for $k=0$
	\begin{align}
		|A_1|_{W^{0,\infty}([-\alpha-d,\alpha+d])}\leq& \sum_{\mb,\vb}c_{i,\mb,\vb}|\widetilde{\times}(a,b)-ab|_{W^{0,\infty}([-\alpha-D,\alpha+D])} \nonumber\\
		\leq& C_9N^{d}\delta,
		\label{eq.A1.0}
	\end{align}
	and for $k=1$
	\begin{align}
		&|A_1|_{W^{1,\infty}([-\alpha-d,\alpha+d])} \nonumber\\
		\leq& \sum_{\mb,\vb}c_{i,\mb,\vb}C'|\widetilde{\times}(a,b)-ab|_{W^{1,\infty}([-\alpha-D,\alpha+D])}|\widetilde{g}_{\mb,\vb}|_{W^{1,\infty}(\varphi_i(U_i))}\left|\widetilde{\mone}_i\circ\varphi_i^{-1}\right|_{W^{1,\infty}(\varphi_i(U_i))} \nonumber\\
		\leq& C_{10}N^{d+1}\delta/\Delta
		\label{eq.A1.1}
	\end{align}
	for some constants $C_9,C_{10},C'$ depending on $r,\alpha,d$, where we used Lemma \ref{lem.ATE.approx.error} and (\ref{eq.product.3}) in the last inequality.
	Combining (\ref{eq.A1.0}) and (\ref{eq.A1.1}) gives rise to
	\begin{align}
		\|A_1\|_{W^{k,\infty}([-\alpha-d,\alpha+d])}\leq C_{11}N^{d+k}\delta/\Delta
		\label{eq.A1}
	\end{align}
	for $k=0,1$ and a constant $C_{11}$ depending on $d,\alpha,r$.
	
	Before we derive upper bounds for $A_2$ and $A_3$, we define some sets which will be used in our following proof.
	
	Define the set 
	\begin{align*}
		\widetilde{\Omega}_{i,1}=\left\{\xb\in U_i: \min_{\widetilde{\xb}\in \partial U_i} \|\xb-\widetilde{\xb}\|_2\leq c\right\},
	\end{align*}
	where $c$ is the constant from Lemma \ref{lem.parunity}. Denote $\Omega_{i,1}=\varphi_i(\widetilde{\Omega}_{i,1})$. According to Lemma \ref{lem.parunity}, we have $f_i|_{\widetilde{\Omega}_{i,1}}=f_i\circ\phi_i|_{\Omega_{i,1}}=0$. Since $\varphi_i$ is a bijection, both $\Omega_{i,1}$ and $\widetilde{\Omega}_{i,1}$ have two disjoint boundaries. Denote the two boundaries of $\Omega_{i,1}$ by $\lambda_{i,1,1}$ and $\lambda_{i,1,2}$. We define the thickness of $\Omega_{i,1}$ as
	\begin{align*}
		\chi_{i,1}=\min_{\zb\in \lambda_{i,1,1},\ \widetilde{\zb}\in \lambda_{i,1,2}} \|\zb-\widetilde{\zb}\|_2.
	\end{align*}
	Since each $\varphi_i$ is a bijection, there exists 
	a constant $c_1$ depending on $c$ and the atlas such that $\chi_{i,1}\geq c_1$ for all $i$'s.
	Again since $\phi_i$ is a linear bijection, its inverse exists and is linear, and there exists a constant $c_2$ such that 
	\begin{align}
		\|\varphi_i^{-1}(\zb)-\varphi_i^{-1}(\widetilde{\zb})\|_2\geq c_2\|\zb-\widetilde{\zb}\|_2.
		\label{eq.lowLip}
	\end{align}
	
	We will choose $\theta$ and $\Delta$ small enough such that 
	\begin{align}
		\frac{8B^2D\theta}{c_2} \leq \frac{\Delta}{c_2r} \leq \frac{c_1}{2}.
		\label{eq.setDelta}
	\end{align}

	Define the region
	\begin{align}
		\Omega_{i,2}=\left\{\zb\in \varphi_i(U_i): \min_{\widetilde{\zb}\in \phi_i(\partial U_i)} \|\zb-\widetilde{\zb}\|_2\leq \frac{\Delta}{c_2r}\right\}.
		\label{eq.Omega2}
	\end{align}
	According to (\ref{eq.lowLip}), (\ref{eq.setDelta}) and the definition of $\Omega_{i,1}$, we have $\Omega_{i,2}\subset \Omega_{i,1}$. %We further denote $\Omega_{i,3}=\Omega_{i,1}\backslash\Omega_{i,2}$.
	For any $\zb\in \varphi_i(U_i)\backslash\Omega_{i,2}$, denote $\zb^*=\argmin_{\widetilde{\zb}\in \varphi_i(\partial U_i)} \|\zb-\widetilde{\zb}\|_2$. We have 
	$$
	\min_{\widetilde{\xb}\in \partial U_i}\|\varphi_i^{-1}(\zb)-\widetilde{\xb}\|_2\geq c_2\|\zb-\zb^*\|_2\geq \Delta/r. 
	$$
	Therefore 
	$$
	\|\varphi_i^{-1}(\zb)-\cbb_i\|_2^2\leq (r-\Delta/r)^2=r^2+\left(\frac{\Delta}{r}\right)^2-2\Delta\leq r^2-\Delta
	$$ 
	when $\Delta\leq r^2$ and 
	$$
	\widetilde{\mone}_i\circ\varphi^{-1}(\zb)=1, \quad \left.\frac{\partial \widetilde{\mone}_i\circ\varphi^{-1}}{z_j}\right|_{\varphi_i(U_i)\backslash \Omega_{i,2}}=0
	$$
	for $j=1,...,d$, where we used the notation $\zb=[z_1,...,z_d]^{\top}$.
	
	Note that each $\widetilde{g}_{\mb,\vb}$ and $\phi_{\mb}\zb^{\vb}$ is supported on $B_{1/N,\|\cdot\|_{\infty}}(\mb/N)$, a hyper cube with edge length $2/N$. 
%	For each $i$, We divide the set of index $\mb$ into two subsets:
%	\begin{align}
%		&\cS_{i,1}=\{\mb:B_{1/N,\|\cdot\|_{\infty}}(\mb/N)\cap (\varphi_i(U_i)\backslash \Omega_{i,1})\neq \emptyset\},\\
%		&\cS_{i,2}=\{0,1,...,N\}^d\backslash \cS_{i,1}.
%	\end{align}
We will choose $N$ large enough such that 
	$$
	\frac{2}{N}\leq \frac{\Delta}{4c_2r}\leq \frac{c_1}{8}.
	$$
	Such a choice of $N$ ensures that along any directions of $z_j$ for $j=1,...,d$, there are at least 2 hypercubes that entirely locate inside $\Omega_{i,2}$.
	% and at least 2 hypercubes that entirely locate inside $\Omega_{i,1}\backslash\Omega_{i,2}$. As a result, $c_{i,\mb,\vb}=0$ for any $\mb\in \cS_{i,2}$. Denote the boundary of $\phi_i(U_i)\backslash \Omega_{i,2}$ by $\Gamma_{i}$. 
	Since any $\zb\in [0,1]^d$ is only covered by 2 hypercubes along each coordinate direction, we have 
	\begin{align}
		\{c_{i,\mb,\vb}: \mbox{ there exits }\zb\in \varphi_i(\partial U_i) \mbox{ such that } \zb\in B_{1/N,\|\cdot\|_{\infty}}(\mb/N)\}=0
		\label{eq.cis0}
	\end{align}
	and $\widetilde{f_i}\circ\varphi_i(\zb)=0$ for any $\zb\in \varphi_i(\partial U_i)$.
	See Figure \ref{fig.Omega} for an illustration.
	\begin{figure}[ht!]
		\centering
		\includegraphics[width=0.4\textwidth]{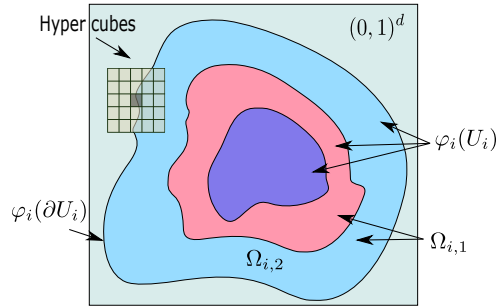}
		\caption{Illustration of the relations of $\Omega_{i,1},\ \Omega_{i,2}$ and $\varphi_i(U_i)$.}
		\label{fig.Omega}
	\end{figure}
	
	We have the following lemma on the bound of $\|A_2\|_{W^{k,\infty}(\varphi_i(U_i))}$ (see Appendix \ref{sec.proof.A2} for a proof):
		\begin{lemma}\label{lem.A2}
			Let $A_2$ be defined as in (\ref{eq.M.A2}). Assume $\Delta\leq r^2$. We have 
			\begin{align}
				\|A_2\|_{W^{1,\infty}(\varphi_i(U_i))}\leq C_{12}N\eta\Delta^{1-k}
				\label{eq.A2}
			\end{align}
		for $k=1,2$.
		\end{lemma}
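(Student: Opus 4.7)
\textbf{Proof proposal for Lemma \ref{lem.A2}.} Writing $A_2(\zb) = g(\zb) \cdot h(\zb)$ with $g(\zb) := \sum_{\mb,\vb} c_{i,\mb,\vb}\,\widetilde{g}_{\mb,\vb}(\zb)$ and $h(\zb) := (\widetilde{\mone}_i - \mone_i)\circ\varphi_i^{-1}(\zb)$, the plan is to reduce everything to the narrow band $\Omega_{i,2}$ and then invoke a Leibniz/Taylor calculation. I would first observe that $h$ and its first weak derivatives are supported inside $\Omega_{i,2}$: outside $\Omega_{i,2}$, by the construction preceding (\ref{eq.Omega2}) and the $\Delta\leq r^2$ hypothesis, one has $\widetilde{\mone}_i\circ\varphi_i^{-1}=1=\mone_i\circ\varphi_i^{-1}$, so $h\equiv 0$ off $\Omega_{i,2}$ and $A_2$ together with $\nabla A_2$ can be controlled by what happens on $\Omega_{i,2}$ alone. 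On $\Omega_{i,2}$ itself, $|h|\leq 1$ and by (\ref{eq.indi.partial}) combined with (\ref{eq.square.partial}) and the chain rule through $\varphi_i^{-1}$ (which is an affine map, hence $\|\nabla\varphi_i^{-1}\|\leq 1/a_i \lesssim 1$), we have $\|\nabla h\|_{L^\infty(\Omega_{i,2})} \lesssim 1/\Delta$.

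Next I would bound $g$ on $\Omega_{i,2}$. The key point is that $g$ should be nearly vanishing there: by (\ref{eq.cis0}) every cube $B_{1/N,\|\cdot\|_\infty}(\mb/N)$ that touches $\varphi_i(\partial U_i)$ contributes a zero coefficient, and by the choice $2/N\leq \Delta/(4c_2 r)$ together with $\Omega_{i,2}\subset\Omega_{i,1}$, every cube whose support meets $\Omega_{i,2}$ lies well within the zero-set of $f_i\circ\varphi_i^{-1}$, forcing its averaged Taylor coefficients to vanish as well. Consequently $\widehat{f}_i\equiv 0$ and hence, on $\Omega_{i,2}$, $g$ is simply the CNN implementation error of the zero polynomial. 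Invoking Lemma~\ref{lem.product} termwise yields $\|g\|_{W^{1,\infty}(\Omega_{i,2})} \lesssim N\eta$, and in particular $\|\nabla g\|_{L^\infty(\Omega_{i,2})}\lesssim N\eta$. Since $\widetilde{f}_i\circ\varphi_i^{-1}=g$ vanishes on $\varphi_i(\partial U_i)$ (as observed after (\ref{eq.cis0})), a one-dimensional Taylor expansion along the normal direction into $\Omega_{i,2}$ (a slab of width $\Delta/(c_2 r)$) improves this to
\begin{align*}
|g(\zb)| \;\leq\; \|\nabla g\|_{L^\infty(\Omega_{i,2})}\cdot \mathrm{dist}(\zb,\varphi_i(\partial U_i)) \;\lesssim\; N\eta\cdot \Delta \qquad \text{for } \zb\in\Omega_{i,2}.
\end{align*}

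Finally I would assemble the two pieces via the Leibniz rule. For $k=0$, $|A_2|\leq |g|\cdot|h|\lesssim (N\eta\Delta)\cdot 1 = N\eta\Delta$. For $k=1$, $|\nabla A_2|\leq |\nabla g||h| + |g||\nabla h| \lesssim N\eta \cdot 1 + (N\eta\Delta)\cdot(1/\Delta) = N\eta$. Both estimates match the claimed bound $C_{12}N\eta\Delta^{1-k}$, with the constant absorbing $r$, $c_2$, $\alpha$, $d$, and the atlas-dependent constants.

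The main obstacle is the pointwise bound $|g(\zb)|\lesssim N\eta\Delta$ on $\Omega_{i,2}$: the naive bound $|g|\lesssim\eta$ is too weak because it would force the $k=1$ estimate to blow up like $\eta/\Delta$ through the $1/\Delta$ factor from $\nabla h$. Upgrading to an extra factor of $\Delta$ requires carefully combining (i) the vanishing of $\widetilde{f}_i\circ\varphi_i^{-1}$ on $\varphi_i(\partial U_i)$, which is a consequence of (\ref{eq.cis0}) plus property (\ref{eq.product0}) of the CNN multiplication building blocks, and (ii) the bound on $\|\nabla g\|_\infty$ coming from termwise control of the CNN derivative error — the Taylor argument then bridges the two. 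The verification that only finitely many cubes can simultaneously hit $\Omega_{i,2}$ and carry nonzero $c_{i,\mb,\vb}$, so that summing termwise CNN errors does not introduce an $N^d$ blow-up, is the other bookkeeping point I would need to handle carefully.
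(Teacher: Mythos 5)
Your proposal is correct and follows essentially the same route as the paper's proof: reduce to the band $\Omega_{i,2}$ where $h\neq 0$, use the vanishing of $\widehat f_i$ (and of $\widetilde f_i\circ\varphi_i^{-1}$ on $\varphi_i(\partial U_i)$) plus the $W^{1,\infty}$ implementation error $O(N\eta)$ and a Taylor/distance argument across the width-$\Delta/(c_2 r)$ band to get $|g|\lesssim N\eta\Delta$ there, then conclude by the Leibniz rule with $|\nabla h|\lesssim 1/\Delta$. The only cosmetic difference is that the paper obtains the bound on $g$ directly from Lemma \ref{lem.net.error} (which already aggregates the sum over cubes), whereas you argue termwise via Lemma \ref{lem.product} with bounded overlap, which amounts to the same estimate.
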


	The term $A_3$ can be bounded using Lemma \ref{lem.net.error}:
	\begin{align}
		\|A_3\|_{W^{k,\infty}(\varphi_i(U_i))}=&\left|\sum_{\mb,\vb}c_{i,\mb,\vb}\left[\widetilde{g}_{\mb,\vb}(\zb)-\phi_{\mb}(\zb)\zb^{\vb}\right]\right|_{W^{k,\infty}(\varphi_i(U_i))} \nonumber\\
		\leq & \left|\sum_{\mb,\vb}c_{i,\mb,\vb}\left[\widetilde{g}_{\mb,\vb}(\zb)-\phi_{\mb}(\zb)\zb^{\vb}\right]\right|_{W^{k,\infty}([0,1]^d)} \nonumber\\
		\leq &C_{13}N^k\eta
		\label{eq.A3}
	\end{align}
for some constant $C_{13}$ depending on $d,\alpha,R$.
	Substituting (\ref{eq.A1}), (\ref{eq.A2}) and (\ref{eq.A3}) into (\ref{eq.M.net.err}) gives rise to
	\begin{align}
		&\|\widetilde{f}_i\circ\varphi_i^{-1}(\zb)-\widehat{f}_i(\zb)\|_{W^{k,\infty}(\varphi_i(U_i))} \leq   C_{11}N^{d+k}\delta/\Delta+ C_{12}N^k\eta +C_{13}N\eta\Delta^{1-k}.
		\label{eq.M.net.err.1}
	\end{align}
	The second term in (\ref{eq.M.err}) can be bounded by Lemma \ref{lem.ATE.approx.error} as
	\begin{align}
		\|\widehat{f}_i(\zb)-f_i\circ\varphi_i^{-1}(\zb)\|_{W^{k,\infty}([0,1]^d)}\leq C_{14}N^{-(\alpha-k)}.
		\label{eq.M.taylor.err.1}
	\end{align}
	Substituting (\ref{eq.M.net.err.1}) and (\ref{eq.M.taylor.err.1}) into (\ref{eq.M.err}) gives rise to
	\begin{align*}
		&\|\widetilde{f}-f\|_{W^{k,\infty}(\cM)} \leq C_{\cM}C_{11}N^{d+k}\delta/\Delta+ C_{\cM}C_{12}N^k\eta +C_{\cM}C_{13}N\eta\Delta^{1-k} +  C_{\cM}C_{14}N^{-(\alpha-k)}.
	\end{align*}
	Setting 
	\begin{align*}
		\eta=N^{-\alpha},\ \Delta=8c_2rN^{-1},\ \delta=N^{-(\alpha+d+1)},\ \theta=(8B^2D)^{-1}\Delta,
	\end{align*}
	we have
	\begin{align}
		\|\widetilde{f}-f\|_{W^{k,\infty}(\cM)}\leq C_{15}N^{-(\alpha-k)}
		\label{eq.M.err.1}
	\end{align}
	for $k=0,1$ and a constant $C_{15}$ depending on $d,\alpha,\tau$ and the surface area of $\cM$.
	
	\noindent$\bullet$\textbf{Network size}
	We analyze the network size for each $\widetilde{f}_i$:
	\begin{itemize}
		\item $\widetilde{\mone}_i$: The chart dermination network is the composition of $\widetilde{d}_i$ and $\widetilde{\mone}_{\Delta}$, where $\widetilde{d}_i$ has $O(\log \frac{1}{\theta})=O(\log N+\log D)$ layers and $O(D)$ width, $\widetilde{\mone}_{\Delta}$ has $O(\log \frac{1}{\delta})+D=O(\log N)+D$ layers and $O(1)$ width. In both subnetworks, all parameters are of $O(1)$. By Lemma \ref{lem.cnn.composition}, the chart dermination network has $O(\log N+\log D)+D$ layers, $O(D)$ width and all weight parameters are of $O(1)$.
		\item $\widetilde{\times}$: The multiplication network has $O(\log \frac{1}{\delta})=O(\log N)$ layers, $O(1)$ width. All weight parameters are bounded by $2(\alpha+d+1)$.
		\item $\varphi_i$: the projection $\varphi_i$ can be realized by a single layer with width $d$. All parameters are of $O(1)$.
		\item $\widetilde{g}_{i,\mb,\vb}$: By Lemma \ref{lem.product}, each $\widetilde{g}_{i,\mb,\vb}$ has $O(\log N)$ layers and $O(d)$ width. All parameters are of $O(N)$.
		\item $c_{i,\mb,\vb}$: By Lemma \ref{lem.ATE.approx.error} with $p=\infty$, each $c_{i,\mb,\vb}$ is of $O(1)$.
	\end{itemize}
	By Lemma \ref{lem.cnn.composition}, each $c_{i,\mb,\vb}(\widetilde{g}_{\mb,\vb}\circ\varphi_i(\xb))\widetilde{\times} \widetilde{\mone}_i(\xb)$ is a CNN with $O(\log N+\log D)+D$ layers, $O(D)$ width and all parameters of $O(N)$. According to (\ref{eq.tildef.1}), $\widetilde{f}$ can be written as a sum of $C_{\cM}N^d(d+1)^{\alpha}$ CNNs
	\begin{align}
		\widetilde{f}=\sum_{i=1}^{C_{\cM}}\sum_{\mb,\vb} c_{i,\mb,\vb}(\widetilde{g}_{\mb,\vb}\circ\varphi_i(\xb))\widetilde{\times} \widetilde{\mone}_i(\xb).
		\label{eq.tildef.2}
	\end{align}
	By Lemma \ref{lem.CNN.adap}, for any $\widetilde{M},\widetilde{J}$ satisfying $\widetilde{M}\widetilde{J}=O(N^d)$, there exists a CNN architecture $\cF^{\rm CNN}(L,J,K,\kappa,\kappa)$ that gives rise to $\{g_i\}_{i=1}^n$ with 
	\begin{align*}
		\widetilde{f}=\sum_{i=1}^{\widetilde{M}} g_i
	\end{align*}
	and 
	\begin{align*}
		L=O\left(\log N+\log D\right)+D,\ J=O(D\widetilde{J}),\ \kappa=O(N).
	\end{align*}

By Lemma \ref{lem.cnn.convresnet}, there exits a ConvResNet architecture  $\cC(M,L,J,K,\kappa_1,\kappa_2)$ with
\begin{align}
	L=O(\log N)+D, J=O(D\widetilde{J}), \kappa_1=\kappa_2=O(N), M=O(\widetilde{M})
	\label{eq.M.net.para.1}
\end{align}
and $\widetilde{J},\widetilde{M}$ satisfying
\begin{align}
	\widetilde{M}\widetilde{J}=O(N^d),
	\label{eq.M.net.para.2}
\end{align}
that yields a ConvResNet realizing $\widetilde{f}$.
Setting $N=O((\widetilde{M}\widetilde{J})^{1/d})$ in (\ref{eq.M.err.1}) and (\ref{eq.M.net.para.1}) gives rise to
\begin{align}
	\|\widetilde{f}-f\|_{W^{k,\infty}(\cM)}\leq C_{15}(\widetilde{M}\widetilde{J})^{-\frac{\alpha-k}{d}}
	\label{eq.M.err.2}
\end{align}
and the network size
\begin{align*}
	L=O\left(\log (\widetilde{M}\widetilde{J})+\log D\right)+D,\ J=O(D\widetilde{J}),\ \kappa=O((\widetilde{M}\widetilde{J})^{1/d}).
\end{align*}
\end{proof}

\section{Definitions, Lemmas and their proofs used in Section \ref{sec.proof.D}}\label{sec.support}
\subsection{Existing lemmas on CNNs}
Lemma \ref{lem.cnnRealization} shows that any MLP can be realized by a CNN.
\begin{lemma}[Theorem 1 in \citet{oono2019approximation}]
	\label{lem.cnnRealization}
	Let $D$ be the dimension of the input. Let $L,J$ be positive integers and $\kappa>0$. For any $2\leq K'\leq D$, any MLP architectures $\cF^{\rm MLP}(L,J,\kappa)$ can be realized by a CNN architecture $\cF^{\rm CNN}(L',J',K',\kappa_1',\kappa_2')$ with
	$$
	L'=L+D, J'=4J, \kappa'_1=\kappa'_2=\kappa.
	$$
	Specifically, any $\bar{f}^{\rm MLP}\in \cF^{\rm MLP}(L,J,\kappa)$ can be realized by a CNN $\bar{f}^{\rm CNN}\in \cF^{\rm CNN}(L',J',K',\kappa_1',\kappa_2')$.
	Furthermore, the weight matrix in the fully connected layer of $\bar{f}^{\rm CNN}$ has nonzero entries only in the first row.
\end{lemma}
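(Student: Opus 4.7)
The plan is to exhibit an explicit channel-wise and position-wise simulation in which every MLP layer is replaced by one or more convolutional layers, with only the first MLP layer requiring multiple convolutions in order to aggregate information from all $D$ input coordinates. The central issue is that a single convolution with filter size $K'\le D$ only couples $K'$ adjacent positions, whereas the first MLP weight matrix $W_1$ may mix all $D$ input coordinates. The additional $D$ layers in the bound $L'=L+D$ are exactly the cost of performing this aggregation through a sequence of local convolutions, while the factor $4$ in $J'=4J$ absorbs bookkeeping channels needed to carry raw inputs forward, simulate signed linear maps via the identity $x=\mathrm{ReLU}(x)-\mathrm{ReLU}(-x)$, and implement biases without exceeding the weight bound $\kappa$.

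First I would encode the input vector $\xb\in\RR^D$ as a tensor $Z^{(0)}\in\RR^{D\times 4J}$ by placing $\xb$ in the first channel (padding channels $2,\ldots,4J$ with zeros). I would then construct the first group of $\lceil(D-1)/(K'-1)\rceil\le D$ convolutional layers so that, at layer $t$ within this group, the $j$-th row of the channel dedicated to the partial sum holds $\sum_{l=1}^{\min(D,j+t(K'-1))} (W_1)_{\cdot,l}x_l$, while a parallel set of channels continues to shift a copy of $\xb$ so that fresh coordinates enter the filter window. After this aggregation phase the first row of the tensor carries the complete vector $W_1\xb+\bb_1$ distributed across output channels; applying $\mathrm{ReLU}$ pointwise then yields $\mathrm{ReLU}(W_1\xb+\bb_1)$ stored at position $1$ in up to $J$ channels. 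Throughout, each filter entry and bias is either an entry of $W_1,\bb_1$ or a $0/1$ shift coefficient, so the weight bound $\kappa_1'=\kappa$ is preserved.

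For the remaining $L-1$ MLP layers, the state needed for $\mathrm{ReLU}(W_m\,\zb+\bb_m)$ is already localized at position $1$ of the tensor in at most $J$ channels. Hence a single convolution of filter size $K'$ (in fact only the $k=1$ slice of the filter is nonzero, while the remaining slices are zero) realizes the affine map $\zb\mapsto W_m\zb+\bb_m$ exactly, and pointwise $\mathrm{ReLU}$ finishes the layer. This uses one convolutional layer per MLP layer beyond the first and keeps the channel count at $4J$, yielding a total of $\lceil(D-1)/(K'-1)\rceil+(L-1)\le L+D$ convolution layers in total. Finally, since the MLP's final activation is a scalar stored at position $(1,1)$ of the output tensor, the CNN's fully connected layer $W\otimes Q(\xb)+b$ only needs a nonzero entry in the first row of $W$ to read out this scalar, which establishes the ``nonzero only in the first row'' conclusion and gives $\kappa_2'=\kappa$ by taking $W_{1,1}=1$ (or more generally matching the output coefficient).

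The main obstacle is bookkeeping: ensuring simultaneously that (i) the convolution-based aggregation in the first phase recovers $W_1\xb$ exactly despite the sliding filter structure, (ii) no intermediate tensor entry forces a weight to exceed $\kappa$, and (iii) the same $4J$ channels suffice for both the aggregation channels and the identity/sign channels needed for subsequent layers. The standard resolution, which I would follow, is to split the $4J$ channels into four blocks: one block of $J$ channels stores the accumulating partial sums of $W_1\xb$; one block of $J$ channels shifts the remaining raw input coordinates; and two blocks of $J$ channels serve as positive/negative ReLU buffers so that signed affine maps can be implemented without scaling weights. Correctness is then a direct induction on the layer index, and the weight, channel, depth, and output-weight bounds can all be read off from the construction.
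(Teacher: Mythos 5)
The paper does not prove this lemma at all: it is quoted as Theorem~1 of \citet{oono2019approximation}, so there is no in-paper proof to compare against. Your sketch reconstructs essentially the construction from that reference, and it is correct in outline: the receptive field of position $1$ grows by $K'-1$ per one-sided convolution, so at most $D$ extra layers realize the first (global) affine map $W_1\xb+\bbb_1$; the later MLP layers act on data already localized at position $1$ and are realized by convolutions whose only nonzero filter slice is $k=1$; the factor $4$ in the channel count comes from carrying positive/negative copies (via $x=\ReLU(x)-\ReLU(-x)$) of every quantity that must survive the ReLU attached to each convolutional layer during the aggregation phase; and reading the answer off position $1$ is what forces the fully connected matrix to need nonzero entries only in its first row. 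Two points you gloss over, both minor: (i) the final MLP layer is affine with no ReLU, and the CNN class here has no output bias, so the (possibly negative) value $W_L\zb+\bbb_L$ must itself be stored as a positive/negative pair in two channels by the last convolution and recombined with $\pm 1$ entries in the first row of the fully connected matrix, rather than read out by a single entry $W_{1,1}=1$; and (ii) the $0/1$ shift coefficients and the $\pm 1$ readout weights implicitly require $\kappa\ge 1$ (otherwise the stated bound should be $\max(\kappa,1)$), which is harmless in this paper since the lemma is only invoked with $\kappa=O(N)$.
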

Lemma \ref{lem.cnn.composition} shows that the composition of two CNNs can be realized by a CNN.
\begin{lemma}[Lemma 13 in \citet{liu2021besov}]\label{lem.cnn.composition}
	Let $\cF_1^{\rm CNN}(L_1,J_1,K_1,\kappa_1,\kappa_1)$ be a CNN architecture from $\RR^D\rightarrow \RR$ and $\cF_2^{\rm CNN}(L_2,J_2,K_2,\kappa_2,\kappa_2)$ be a CNN architecture from $\RR\rightarrow\RR$. Assume the weight matrix in the fully connected layer of $\cF_1^{\rm CNN}(L_1,J_1,K_1,\kappa_1,\kappa_1)$ and $\cF_2^{\rm CNN}(L_2,J_2,K_2,\kappa_2,\kappa_2)$ has nonzero entries only in the first row. Then there exists a CNN architecture $\cF^{\rm CNN}(L,J,K,\kappa,\kappa)$ from $\RR^D\rightarrow \RR$ with
	\begin{align*}
		L=L_1+L_2, \ J=\max(J_1,J_2),\ K=\max(K_1,K_2), \kappa=\max(\kappa_1,\kappa_2)
	\end{align*}
	such that for any $f_1\in \cF^{\rm CNN}(L_1,J_1,K_1,\kappa_1,\kappa_1)$ and $f_2\in \cF^{\rm CNN}(L_2,J_2,K_2,\kappa_2,\kappa_2)$, there exists $f\in \cF^{\rm CNN}(L,J,K,\kappa,\kappa)$ such that $f(\xb)=f_2\circ f_1(\xb)$.
	Furthermore, the weight matrix in the fully connected layer of $\cF^{\rm CNN}(L,J,K,\kappa,\kappa)$ has nonzero entries only in the first row.
	%	Let $f_1\in \cF^{\rm CNN}(L_1,J_1,K_1,\kappa_1,\kappa_1)$ be a CNN from  and $f_2\in \cF^{\rm CNN}(L_2,J_2,K_2,\kappa_2,\kappa_2)$ be a CNN from $\RR\rightarrow\RR$. Assume the weight matrix in the fully connected layer of $f_1$ and $f_2$ only have nonzero entries in the first row. Then there exists a CNN $f\in \cF^{\rm CNN}(L,J,K,\kappa,\kappa)$ from $\RR^D\rightarrow \RR$ with
	%	\begin{align*}
	%		L=L_1+L_2, \ J=\max(K_1,J_2),\ K=\max(K_1,K_2), \kappa=\max(\kappa_1,\kappa_2).
	%	\end{align*}
	%	such that $f(\xb)=f_2\circ f_1(\xb)$.
	%	Furthermore, the weight matrix in the fully connected layer of $f$ has nonzero entries only in the first row.
\end{lemma}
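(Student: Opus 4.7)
The plan is to splice the convolutional backbones of $f_1$ and $f_2$ together, absorbing the fully connected (FC) layer of $f_1$ into the first convolutional layer of $f_2$ by exploiting the hypothesis that both FC weight matrices have only the first row nonzero. This first-row-only structure is crucial: it ensures that the scalar output of $f_1$ is a linear combination of values living at a single spatial position (position $1$), which is exactly what $f_2$'s first conv layer sees when its scalar input is zero-padded into $\RR^{1 \times C_2}$. Thus the transition from $f_1$ to $f_2$ can be realized inside a single conv$+$ReLU step, without spending an extra layer.

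The construction I would use is as follows. Let $Q_1(\xb) \in \RR^{D \times C_1}$ denote the output of $f_1$'s $L_1$ convolutional layers (after ReLU). By the first-row-only assumption, $f_1(\xb) = \sum_{l} (W_1)_{1,l}\,[Q_1(\xb)]_{1,l} + b_1$. The composite CNN begins by padding to $\RR^{D \times \max(J_1, J_2)}$, and then has $L_1+L_2$ conv layers: layers $1,\dots,L_1$ replicate $f_1$'s conv layers (zero-padding the channel dimension up to $\max(J_1,J_2)$); layer $L_1+1$ merges $f_1$'s FC with $f_2$'s first conv layer into a single operation, with filter entries $\tilde{\cW}_{j,1,l} = (\cW_2^{(1)})_{j,1,1}(W_1)_{1,l}$ and $\tilde{\cW}_{j,k,l} = 0$ for $k \geq 2$, and bias $\tilde{B}_j = (\cW_2^{(1)})_{j,1,1}\,b_1 + (B_2^{(1)})_j$; layers $L_1+2,\dots,L_1+L_2$ implement $f_2$'s conv layers $2,\dots,L_2$, each with filter entries at spatial shifts $k \geq 2$ set to zero. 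The terminal FC layer is inherited verbatim from $f_2$, preserving the first-row-only property needed for iterated application of the lemma.

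To verify, I would argue by induction on the layer index from $L_1+1$ onward that the value of the composite network at spatial position $1$ exactly matches the corresponding value in $f_2$'s computation on $P_2(f_1(\xb))$. The base case for layer $L_1+1$ follows from the explicit choice of $\tilde{\cW}, \tilde{B}$ above. For the inductive step, zeroing the $k\geq 2$ entries of $f_2$'s original filters is harmless because $f_2$'s unit-spatial input is already zero at those shifts, and in the composite, those shifts at position $1$ only ever read from the composite's positions $2,\dots,K$, which carry values that never propagate back (they influence later outputs only at positions $\geq 2$, discarded by the final FC). Hence $f(\xb) = f_2(f_1(\xb))$, and the bounds on depth ($L_1+L_2$), width ($\max(J_1,J_2)$), and filter size ($\max(K_1,K_2) \geq 2$) all hold by construction.

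The step I expect to be the main obstacle is the weight-magnitude bound for the merged layer: the filter entries $\tilde{\cW}_{j,1,l} = (\cW_2^{(1)})_{j,1,1}(W_1)_{1,l}$ have magnitude at most $\kappa_1 \kappa_2$, which may exceed the claimed bound $\kappa = \max(\kappa_1,\kappa_2)$, and the same applies to $\tilde{B}_j$. Handling this cleanly---either by a rescaling argument that redistributes magnitude between the merged layer and an adjacent one, by restricting attention to a regime where $\kappa_1 \kappa_2 \lesssim \max(\kappa_1,\kappa_2)$ (as is effectively the case whenever the lemma is invoked in Appendix \ref{sec.proof.D} and \ref{sec.proof.M}, where one of the factors is $O(1)$), or by a more careful accounting absorbed into the $O(\cdot)$ of downstream results---is the technical point requiring the most care; the rest of the argument is combinatorial bookkeeping.
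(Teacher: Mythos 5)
Your splice is functionally sound: with the first-row-only hypothesis, the merged layer you define does compute $\ReLU\bigl(\sum_l (\cW_2^{(1)})_{j,1,1}(W_1)_{1,l}[Q_1(\xb)]_{1,l} + (\cW_2^{(1)})_{j,1,1}b_1 + (B_2^{(1)})_j\bigr)$ at spatial position $1$, and zeroing the $k\geq 2$ filter entries of $f_2$'s layers correctly confines the computation to position $1$, so $f(\xb)=f_2\circ f_1(\xb)$ with depth $L_1+L_2$, width $\max(J_1,J_2)$, and filter size at most $\max(K_1,K_2)$. (Note the paper does not prove this lemma itself; it imports it from \citet{liu2021besov}, so the comparison is with the intended proof there.)

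The gap is the one you flagged and did not close: your merged layer has entries of size up to $\kappa_1\kappa_2$ (and bias up to $\kappa_1\kappa_2+\kappa_2$), whereas the lemma asserts $\kappa=\max(\kappa_1,\kappa_2)$, and none of your proposed workarounds (rescaling hand-waving, assuming one factor is $O(1)$, or pushing the blow-up into downstream $O(\cdot)$'s) proves the statement as written --- they prove a weaker lemma with $\kappa=O(\kappa_1\kappa_2)$. The missing idea is to use the first-row-only structure \emph{without} multiplying $W_1$ into $f_2$'s filter: realize $f_1$'s fully connected layer itself as a convolution whose filter entries are exactly the entries of $W_1$ and $b_1$ (hence bounded by $\kappa_1$), writing the scalar output through its positive and negative parts $\ReLU(f_1(\xb))$ and $\ReLU(-f_1(\xb))$ stored in two channels at position $1$; then $f_2$'s first convolutional layer is modified to read the difference of these two channels, i.e.\ its filter entries are $\pm(\cW_2^{(1)})_{j,1,1}$, still bounded by $\kappa_2$. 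Every weight in the composite is then bounded by $\max(\kappa_1,\kappa_2)$, at the cost of replacing $f_1$'s fully connected layer by one convolutional layer rather than merging it --- this is exactly the bookkeeping the lemma's depth count is designed to absorb, and it is why the first-row-only assumption is imposed (it makes the FC layer expressible as a position-$1$, size-$1$ convolution at all). Without this step, the claimed bound $\kappa=\max(\kappa_1,\kappa_2)$ simply fails for your construction, e.g.\ already when $L_1=L_2=1$ and both networks have a single weight equal to $\kappa_1=\kappa_2>1$.
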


\subsection{Interpolation spaces}\label{sec.interpolation}
\begin{definition}[Interpolation spaces]
	Let $(B_0,B_1)$ be an interpolation couple. For any $u\in B_1$, define 
	$$
	K(t,u,B_0,B_1)=\inf_{v\in B_1} \left(\|u-v\|_{B_0}+t\|v\|_{B_1}\right) 
	$$
	and the norm
	\begin{align*}
		\|u\|_{(B_0,B_1)_{\theta,p}}=\begin{cases}
			\left(\int_{0}^{\infty}  t^{-\theta p}K(t,u,B_0,B_1)^p \frac{dt}{t} \right)^{1/p}, &\mbox{ for } 1\leq p < \infty,\\
			\sup_{0<t<\infty}  t^{-\theta} K(t,u,B_0,B_1), &\mbox{ for } p=\infty.
		\end{cases}
	\end{align*}
	Then the interpolation space $(B_0,B_1)_{\theta,p}$  is defined by
	\begin{align*}
		(B_0,B_1)_{\theta,p}=\left\{ u\in B_0: \|u\|_{(B_0,B_1)_{\theta,p}} < \infty\right\}.
	\end{align*}
\end{definition}

The following lemma shows that the fractional Sobolev space is an interpolation space:
\begin{lemma}[Theorem 14.2.3 of \citet{brenner2008mathematical}]
	Let $\Omega\in \RR^D$ be an Lipschitz domain. Then for any $0<s<1$ and $1\leq p\leq \infty$, we have
	\begin{align*}
		W^{s,p}(\Omega)=(L^p(\Omega),W^{1,p}(\Omega))_{s,p}.
	\end{align*}
\end{lemma}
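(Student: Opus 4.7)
The plan is to prove the two continuous embeddings $W^{s,p}(\Omega) \hookrightarrow (L^p(\Omega), W^{1,p}(\Omega))_{s,p}$ and its reverse, establishing equivalence of the norms. Both directions hinge on the $K$-functional $K(t,u,L^p,W^{1,p}) = \inf_{v\in W^{1,p}}\bigl(\|u-v\|_{L^p} + t\|v\|_{W^{1,p}}\bigr)$, which must be controlled by (or used to control) the Slobodeckij seminorm
\[
[u]_{W^{s,p}(\Omega)}^p = \int_\Omega\int_\Omega \frac{|u(\xb)-u(\yb)|^p}{\|\xb-\yb\|_2^{sp+D}}\,d\xb\,d\yb.
\]
Because $\Omega$ is a Lipschitz domain, a standard extension operator $E\colon W^{s,p}(\Omega)\to W^{s,p}(\RR^D)$ exists (Stein's extension), so it suffices to work on $\RR^D$ modulo bounded constants; I will state and use this reduction. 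For the $p=\infty$ endpoint, the integral in the Slobodeckij seminorm is replaced by an essential supremum, and the $L^p$ integral in the $K$-functional norm by a sup over $t>0$, and the argument below adapts line-by-line.

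\textbf{Step 1: $W^{s,p}\hookrightarrow(L^p,W^{1,p})_{s,p}$ via mollification.} Fix $u\in W^{s,p}(\Omega)$ and extend to $Eu\in W^{s,p}(\RR^D)$. Choose a standard nonnegative mollifier $\eta$ with $\int\eta=1$, $\supp\eta\subset B_1(0)$, and set $\eta_t(\xb)=t^{-D}\eta(\xb/t)$, $v_t = (Eu)*\eta_t$. The first claim is
\[
\|Eu-v_t\|_{L^p(\RR^D)}^p \;\leq\; C\,t^{sp}\,[Eu]_{W^{s,p}(\RR^D)}^p,
\qquad \|v_t\|_{W^{1,p}(\RR^D)}^p \;\leq\; C\bigl(\|Eu\|_{L^p}^p + t^{(s-1)p}[Eu]_{W^{s,p}}^p\bigr).
\]
Both estimates follow by writing the difference (resp.\ the derivative) as an average against $\eta_t$, applying Minkowski's integral inequality, and then using the Slobodeckij integral with the change of variables $\yb = \xb + tz$. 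Restricting $v_t$ to $\Omega$ and plugging into the definition of $K$ yields $K(t,u,L^p(\Omega),W^{1,p}(\Omega)) \leq C t^s \|u\|_{W^{s,p}(\Omega)}$ for small $t$, and $\leq \|u\|_{L^p}$ trivially for large $t$. Computing the weighted integral $\int_0^\infty t^{-sp}K(t,u)^p\,dt/t$ by splitting at $t=1$ then gives $\|u\|_{(L^p,W^{1,p})_{s,p}}\leq C\|u\|_{W^{s,p}(\Omega)}$.

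\textbf{Step 2: $(L^p,W^{1,p})_{s,p}\hookrightarrow W^{s,p}$ via a telescoping decomposition.} Given $u\in(L^p,W^{1,p})_{s,p}$, for each $t>0$ choose a near-optimal splitting $u = u_t^{(0)} + u_t^{(1)}$ with $\|u_t^{(0)}\|_{L^p} + t\|u_t^{(1)}\|_{W^{1,p}}\leq 2K(t,u)$. The key inequality is the pointwise/integral estimate
\[
\int_\Omega\int_\Omega\frac{|u(\xb)-u(\yb)|^p}{\|\xb-\yb\|_2^{sp+D}}\,d\xb\,d\yb
\;\leq\; C\int_0^\infty t^{-sp}K(t,u,L^p,W^{1,p})^p\,\frac{dt}{t},
\]
which is obtained by parametrizing the integral over pairs $(\xb,\yb)$ by $t:=\|\xb-\yb\|_2$ (using polar coordinates to absorb the $t^{D-1}$ Jacobian), bounding $|u(\xb)-u(\yb)|\leq |u_t^{(0)}(\xb)|+|u_t^{(0)}(\yb)|+|u_t^{(1)}(\xb)-u_t^{(1)}(\yb)|$, and controlling the $W^{1,p}$ piece by Morrey/Poincar\'e on the segment $[\xb,\yb]\subset\Omega$ (here Lipschitzness of $\Omega$ is used to guarantee interior chains of balls when the segment leaves $\Omega$). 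The $L^p$ part bounds by $\|u\|_{L^p}$ analogously.

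\textbf{Anticipated obstacle.} The most delicate point is Step 2, specifically the passage from an $L^p$ bound on $u_t^{(1)}$'s first derivatives to a pointwise control of differences $|u_t^{(1)}(\xb)-u_t^{(1)}(\yb)|$ integrated against $\|\xb-\yb\|^{-sp-D}$: this is where I must pay for the Lipschitz-boundary hypothesis, since the integral representation requires segments (or chains) inside $\Omega$ of length comparable to $\|\xb-\yb\|_2$. One handles this by localizing to boundary charts where $\Omega$ is the epigraph of a Lipschitz function, reflecting across the boundary to get an extension, and then reducing to the $\RR^D$ calculation; alternatively one invokes Stein's extension up front (as in Step 1) so that both inclusions are proved for $\RR^D$ and then transferred back to $\Omega$ using boundedness of restriction $W^{s,p}(\RR^D)\to W^{s,p}(\Omega)$ and of the extension $E$. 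Putting the two embeddings together yields equivalence of the two norms and hence equality of the two spaces.
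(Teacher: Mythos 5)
You should first note that the paper does not prove this lemma at all: it is quoted as Theorem 14.2.3 of Brenner and Scott, so there is no internal proof to compare against, and what you have written is a proof attempt for a cited textbook fact. Your overall $K$-functional strategy is the standard one, but Step 1 contains a genuine gap for $1\le p<\infty$. The two displayed mollification bounds amount to the uniform estimate $K(t,u,L^p,W^{1,p})\le C\,t^{s}\|u\|_{W^{s,p}}$ for $t\le 1$, and inserting this into the interpolation norm gives
\begin{align*}
\int_0^1 t^{-sp}\,K(t,u)^p\,\frac{dt}{t}\;\le\; C^p\|u\|_{W^{s,p}}^p\int_0^1\frac{dt}{t}\;=\;\infty,
\end{align*}
so the asserted conclusion $\|u\|_{(L^p,W^{1,p})_{s,p}}\le C\|u\|_{W^{s,p}}$ does not follow from what you wrote; a pointwise-in-$t$ bound of this form only proves the embedding into $(L^p,W^{1,p})_{s,\infty}$, i.e.\ into the Besov space $B^{s}_{p,\infty}$, which strictly contains $W^{s,p}=B^{s}_{p,p}$ when $p<\infty$. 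The repair is to keep the $t$-dependence in averaged form rather than collapsing it: use $\|u-u*\eta_t\|_{L^p}^p\le C\,t^{-D}\int_{\|h\|_2\le t}\|u(\cdot+h)-u\|_{L^p}^p\,dh$ (and the analogous bound for $t\|\nabla(u*\eta_t)\|_{L^p}$), insert this into $\int_0^\infty t^{-sp}K(t,u)^p\,dt/t$, and apply Fubini in $(h,t)$ via $\int_{t\ge\|h\|_2}t^{-sp-D-1}\,dt=C\|h\|_2^{-sp-D}$, which reproduces exactly the Slobodeckij double integral. (For $p=\infty$ the uniform bound does suffice, since the interpolation norm is then a supremum over $t$.)

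In Step 2 the plan is sound in outline, but the proposed pointwise Morrey/Poincar\'e control of $|u_t^{(1)}(x)-u_t^{(1)}(y)|$ along segments is not available for $p\le D$, since $W^{1,p}$ functions need not be continuous there. The correct tool is the $L^p$ translation estimate $\|v(\cdot+h)-v\|_{L^p(\RR^D)}\le \|h\|_2\,\|\nabla v\|_{L^p(\RR^D)}$, applied to the $W^{1,p}$ piece of a near-optimal splitting after the Stein extension, which yields $\sup_{\|h\|_2\le t}\|u(\cdot+h)-u\|_{L^p}\le C\,K(t,u)$; the same Fubini computation as above then converts $\int_0^\infty t^{-sp}K(t,u)^p\,dt/t$ into an upper bound for the Slobodeckij seminorm. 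With these two replacements your argument closes, and the Lipschitz hypothesis enters only through the existence and boundedness of the extension operator, as you anticipated.
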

The following lemma shows that the norm of the interpolation space of $(B_0,B_1)_{\theta,p}$ can be bounded using $\|\cdot\|_{B_0}$ and $\|\cdot\|_{B_1}$:
\begin{lemma}\label{lem.interpolation.norm}
	Let $(B_0,B_1)$ be an interpolation couple. Moreover, let $0<\theta<1$ and $1\leq p \leq \infty$. Then there exists a constant $C$ depending on $\theta$ and $p$ such that for all $u\in B_1$, we have
	\begin{align*}
		\|u\|_{B_{\theta,p}}\leq C\|u\|_{B_0}^{1-\theta}\|u\|_{B_1}^{\theta}.
	\end{align*}
	In particular, when $p=\infty$, we have $C=1$.
\end{lemma}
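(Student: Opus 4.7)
The proof is a standard interpolation-theoretic argument based on splitting the defining integral (or supremum) at an optimally chosen scale. The plan is to use the two trivial upper bounds on the $K$-functional: taking $v = 0$ in the infimum gives $K(t, u, B_0, B_1) \leq \|u\|_{B_0}$, and taking $v = u$ gives $K(t, u, B_0, B_1) \leq t \|u\|_{B_1}$. Thus $K(t, u, B_0, B_1) \leq \min(\|u\|_{B_0}, t\|u\|_{B_1})$, and the two branches coincide at the scale $t_0 := \|u\|_{B_0} / \|u\|_{B_1}$ (assuming $\|u\|_{B_1} > 0$; if $\|u\|_{B_1} = 0$ the claim is trivial since $u = 0$).

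First I would handle the case $p = \infty$. For every $t$, $t^{-\theta} K(t,u,B_0,B_1) \leq t^{-\theta} \|u\|_{B_0}$ if $t \geq t_0$, and $t^{-\theta} K(t,u,B_0,B_1) \leq t^{1-\theta} \|u\|_{B_1}$ if $t \leq t_0$; both bounds attain their supremum at $t = t_0$, yielding exactly $\|u\|_{B_0}^{1-\theta} \|u\|_{B_1}^\theta$. So the constant is $C = 1$ when $p = \infty$, as required.

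Next, for $1 \leq p < \infty$, I would split the integral defining $\|u\|_{(B_0, B_1)_{\theta, p}}^p = \int_0^\infty t^{-\theta p} K(t, u, B_0, B_1)^p \, \frac{dt}{t}$ at $t_0$ and plug in the two sharp bounds on the appropriate ranges. On $(0, t_0]$ the integrand is bounded by $t^{(1-\theta)p - 1} \|u\|_{B_1}^p$, whose integral equals $\frac{t_0^{(1-\theta)p}}{(1-\theta)p} \|u\|_{B_1}^p$. On $[t_0, \infty)$ the integrand is bounded by $t^{-\theta p - 1} \|u\|_{B_0}^p$, whose integral equals $\frac{t_0^{-\theta p}}{\theta p} \|u\|_{B_0}^p$. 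Substituting $t_0 = \|u\|_{B_0} / \|u\|_{B_1}$ in each term produces $\|u\|_{B_0}^{(1-\theta)p} \|u\|_{B_1}^{\theta p}$ times a constant $\frac{1}{(1-\theta)p} + \frac{1}{\theta p}$; taking the $p$-th root gives the claim with $C = \bigl( \frac{1}{(1-\theta)p} + \frac{1}{\theta p} \bigr)^{1/p}$.

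There is no real obstacle here: the entire argument is elementary once one recognizes that the two trivial upper bounds on $K(t, u, B_0, B_1)$ are comparable precisely at $t_0$, and that the elementary integrals $\int_0^{t_0} t^{(1-\theta)p - 1} dt$ and $\int_{t_0}^\infty t^{-\theta p - 1} dt$ converge thanks to the assumption $0 < \theta < 1$. The only mild care needed is the degenerate case $\|u\|_{B_1} = 0$, which forces $u = 0$ and makes the inequality trivial, and verifying the $p = \infty$ constant explicitly to confirm $C = 1$ there.
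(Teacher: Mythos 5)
Your proof is correct, and it is the standard argument for this fact; the paper itself states the lemma without proof (it is a classical result about real interpolation via the $K$-functional), so your write-up simply supplies the textbook derivation. The two elementary bounds $K(t,u,B_0,B_1)\leq \|u\|_{B_0}$ (take $v=0$) and $K(t,u,B_0,B_1)\leq t\|u\|_{B_1}$ (take $v=u$), the split at $t_0=\|u\|_{B_0}/\|u\|_{B_1}$, the computation giving $C=\bigl(\tfrac{1}{(1-\theta)p}+\tfrac{1}{\theta p}\bigr)^{1/p}$ for finite $p$, and the exact constant $C=1$ for $p=\infty$ are all accurate. The only pedantic caveat worth adding is that the first bound presupposes $u\in B_0$; if $\|u\|_{B_0}=\infty$ the claimed inequality is vacuous, and the degenerate case $u=0$ you already handle.
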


\subsection{Proof of Lemma \ref{lem.psi}} \label{sec.psi}
\begin{proof}[Proof of Lemma \ref{lem.psi}]
	Note that $\psi(x)$ can be realized by a two-layer MLP
	\begin{align*}
		\psi(x)&=\ReLU(A_2\cdot \ReLU\left(A_1x+\bb_1 \right))
	\end{align*} 
	with 
	$$
	A_1=\begin{bmatrix}
		1 \\1 \\1 \\1
	\end{bmatrix},\ \bb_1= 	\begin{bmatrix}
		2\\ 1 \\-1\\-2
	\end{bmatrix},\ A_2=\begin{bmatrix}
		1 & -1 & -1 &1
	\end{bmatrix}.
	$$
	According to Lemma \ref{lem.cnnRealization}, for any $2\leq K$, such an MLP can be realized by a CNN in $\cF^{\rm CNN}(2,16,2,2,2)$. According to the expression of the right-hand-side of (\ref{eq.psi.CNN}), we have $\widetilde{\psi}_{m,N}(x)\in \cF^{\rm CNN}(2,16,2,3N,3N)$.
	
	To prove (\ref{eq.psi.sobolev}), the case $k=0$ follows by the definition of $\psi$. For $k=1$, we have 
	\begin{align*}
		\frac{d \widetilde{\psi}_{m,N}(x)}{d x}=\frac{d \psi\left(3N\left(x_k-\frac{m}{N}\right)\right)}{d x}=3N.
	\end{align*}
\end{proof}

\subsection{Proof of Lemma \ref{lem.product}}\label{sec.product}
\begin{proof}[Proof of Lemma \ref{lem.product}]
	For any given $\mb$ and $\vb$, $\phi_{\mb}\xb^{\vb}$ is a product of at most $\alpha+D$ quantities each of which can be realized by a CNN. The following lemma shows that the multiplication operator $\times$ can be well approximated by a CNN (see a proof in Appendix \ref{sec.multiplication.CNN}):
	%	\begin{lemma}[Proposition 3 in \cite{yarotsky2017error}]
	%		For any $C>0$ and $0<\eta<1$. If $|x|\leq C,|y|\leq C$, there is an MLP, denoted by $\ttimes(\cdot,\cdot)$, such that
	%		$$
	%		|\ttimes(x,y)-xy|<\eta,\ \ttimes(x,0)=\ttimes(y,0)=0.
	%		$$
	%		Such a network has $O\left(\log \frac{1}{\eta}\right)$ layers and parameters. The width of each layer is bounded by 6 and all parameters are bounded by $C^2$.
	%		\label{lem.multiplication}
	%	\end{lemma}
	
	\begin{lemma}\label{lem.multiplication.CNN}
		For any $0<\eta<1/2$, $x,y\in[-B,B]$, and $K\geq 2$, there is a CNN architecture $\cF^{\rm CNN}(L,J,K,\kappa,\kappa)$ that yields a CNN, denoted by $\ttimes(\cdot,\cdot)$, such that
		$$
		\|\ttimes(x,y)-xy\|_{W^{1,\infty}[-B,B]^2}<\eta,\ \ttimes(x,0)=\ttimes(y,0)=0.
		$$
		Such a network has 
		$$
		L=O\left(\log \frac{1}{\eta}\right),\ J=24,\ \kappa=1.
		$$
		Furthermore, one has
		$$
		\|\ttimes(x,y)\|_{W^{1,\infty}((-B,B)^2)}\leq CB
		$$
		for some absolute constant $C$.
		
	\end{lemma}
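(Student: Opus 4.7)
\medskip

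\noindent\textbf{Proof plan.} My plan is to reduce approximation of multiplication to approximation of the squaring function via a polarization identity, then use the classical sawtooth construction of Yarotsky to approximate $x \mapsto x^2$ by a ReLU network in $W^{1,\infty}$, and finally lift the resulting MLP to a CNN using Lemma~\ref{lem.cnnRealization}. Concretely, I will use the identity
\begin{equation*}
xy = \tfrac{1}{4}\bigl((x+y)^2 - (x-y)^2\bigr),
\end{equation*}
which has two appealing features. First, if $\widetilde{sq}$ is any approximation of $z \mapsto z^2$, defining $\widetilde{\times}(x,y) = \tfrac{1}{4}\bigl(\widetilde{sq}(x+y) - \widetilde{sq}(x-y)\bigr)$ automatically yields $\widetilde{\times}(x,0) = \widetilde{\times}(0,y) = 0$ by cancellation, matching the required vanishing condition exactly. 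Second, computing $x+y$ and $x-y$ costs only one affine layer, and the two squaring subnetworks can be executed in parallel as extra channels, keeping the width a constant (the budget $J=24$).

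\medskip
\noindent\textbf{Approximation of the square.} After rescaling by $1/(2B)$ so that all arguments lie in $[-1,1]$, the task reduces to approximating $z \mapsto z^2$ on $[-1,1]$ in $W^{1,\infty}$. For this I would use the Yarotsky sawtooth construction: let $g:[0,1] \to [0,1]$ be the tent map $g(t)=2\ReLU(t)-4\ReLU(t-\tfrac12)$ (realizable in one ReLU layer with weights bounded by a constant), and set
\begin{equation*}
f_m(t) = t - \sum_{k=1}^{m} \frac{g^{\circ k}(t)}{2^{2k}}.
\end{equation*}
A standard telescoping argument gives $\|f_m(t)-t^2\|_{L^\infty[0,1]} \le 2^{-2m-2}$, and a direct computation of the piecewise slopes shows $\|f_m'(t)-2t\|_{L^\infty} \le 2^{-m}$ at all but finitely many points; hence $\|f_m - t^2\|_{W^{1,\infty}[0,1]} \le C\,2^{-m}$. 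Taking $m = O(\log(1/\eta))$ (with the rescaling factor absorbing an additional $B^2$) ensures the derivative error for $\widetilde{sq}$ on $[-2B,2B]$ is at most $\eta/(4B)$, so that the product approximation $\widetilde{\times}$ achieves $W^{1,\infty}$ error at most $\eta$ on $[-B,B]^2$ (one uses the chain rule to pick up a single factor of $B$ from the outer rescaling). Symmetry in $\pm$ arguments is obtained by defining $\widetilde{sq}$ on $\RR$ as an even function of its input, e.g., $\widetilde{sq}(z) = f_m(|z|/(2B)) \cdot (2B)^2$ with $|z|$ itself implemented by $\ReLU(z)+\ReLU(-z)$.

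\medskip
\noindent\textbf{Sobolev norm control and CNN realization.} Boundedness of $\widetilde{\times}$ follows by triangle inequality: $\|\widetilde{\times}\|_{L^\infty[-B,B]^2} \le \|xy\|_{L^\infty} + \eta \le B^2+\eta$, and $\|\partial_x \widetilde{\times}\|_{L^\infty} \le B + \eta$, likewise for $\partial_y$, giving the stated $W^{1,\infty}$ bound (up to absorbing $B^2 + B + \eta$ into $CB$ under a mild normalization on $B$). Once the MLP $\widetilde{\times}$ is constructed with depth $O(\log(1/\eta))$, width $O(1)$ and weights of size $O(1)$, applying Lemma~\ref{lem.cnnRealization} with any $K \ge 2$ realizes it as a CNN with the claimed architecture parameters $L = O(\log(1/\eta))$, $J = 24$, $\kappa = 1$ (after rescaling weights).

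\medskip
\noindent\textbf{Main obstacle.} The principal difficulty is controlling the $W^{1,\infty}$ error of the sawtooth approximation rather than just the $L^\infty$ error, since $f_m$ is piecewise linear with slopes $\pm 1$ on tiny intervals and only approximates $2z$ in a coarse oscillatory sense. The key observation that saves the argument is that on dyadic intervals of scale $2^{-m}$, the slopes of $f_m$ are exactly the endpoint values of $2z$ at the centers of those intervals, so the derivative error is uniformly bounded by $2^{-m}$ despite $f_m'$ being a step function. A secondary subtlety is making the construction respect $\widetilde{\times}(x,0) = \widetilde{\times}(0,y) = 0$ \emph{identically}, not merely approximately; here the polarization identity is essential because it enforces the cancellation symbolically, before any approximation error enters.
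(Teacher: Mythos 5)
Your proposal is correct and, at bottom, follows the same route as the paper: the paper proves this lemma in two lines by citing the MLP multiplication result of G\"uhring et al.\ (Lemma~\ref{lem.multiplication}, whose underlying proof is exactly the polarization-plus-sawtooth argument you give) and then invoking Lemma~\ref{lem.cnnRealization} to realize that MLP as a CNN; your final step is identical, and your middle section simply re-derives the cited MLP lemma from first principles. What your version buys is self-containedness and explicit verification of the two points the citation hides: the $W^{1,\infty}$ (not merely $L^\infty$) control of the Yarotsky interpolant, which you justify correctly via the midpoint-slope computation $f_m'= (2k+1)2^{-m}$ on each dyadic interval, and the exact identities $\ttimes(x,0)=\ttimes(0,y)=0$, which you correctly enforce by making the squaring subnetwork an even function of its argument (without the evenization, $\ttimes(0,y)$ would only vanish approximately). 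What the citation buys the paper is the specific constants in the statement: the width-$6$, weight-bounded MLP of G\"uhring et al.\ is what produces $J=24$ and $\kappa=O(1)$ after the $4\times$ width blow-up in Lemma~\ref{lem.cnnRealization}, whereas your explicit construction (two parallel evenized squaring branches with carried channels and scalings by $(2B)^{\pm1}$) gives width and weights that are $O(1)$ but will not literally match $J=24$, $\kappa=1$ without the same bookkeeping --- a constant-level discrepancy, not a mathematical gap. Your parenthetical about absorbing $B^2+B+\eta$ into $CB$ flags an imprecision already present in the paper's own statement of Lemma~\ref{lem.multiplication} (the true $W^{1,\infty}$ norm of $xy$ on $[-B,B]^2$ is of order $B^2$ for $B\geq 1$), so it should not be counted against your argument; likewise your slightly conservative accuracy budget $\eta/(4B)$ only inflates the depth by an additive $O(\log B)$, which is harmless since $B$ is treated as a constant.
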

	
	For simplicity, we denote $\psi_{m_k}(x)=\psi\left(3N\left(x_k-\frac{m_k}{N}\right)\right)$ for $k=1,...,D$.
	Then we construct $\widetilde{g}_{\mb,\vb}(\xb)$ as
	\begin{align*}
		\widetilde{g}_{\mb,\vb}(\xb)=\ttimes(\ttimes(...\ttimes(\ttimes(\widetilde{p}_{\vb}(\xb),\psi_{m_1}(x_1)),\psi_{m_2}(x_2)),...,),\psi_{m_D}(x_D)),
	\end{align*}
	where $\widetilde{p}_{\vb}(\xb)$ is the network approximation of $\xb^{\vb}$ defined by
	\begin{align*}
		\widetilde{p}_{\vb}(\xb)=\ttimes(...\ttimes(x_1,x_1),...,x_D).
	\end{align*}

	The structure of $\widetilde{g}_{\mb,\vb}$ is visualized in Figure \ref{fig.multiplication}. Here  $\widetilde{g}_{\mb,\vb}$ consists of no  more than $\alpha+D-1$ compositions of $\ttimes$ and $2D$ additional channels. These additional channels are used to pass the information $\xb_+$ and $\xb_-$.
	
%	\begin{wrapfigure}{r}{0.5\textwidth}
%		\centering
%		\includegraphics[width=0.4\textwidth]{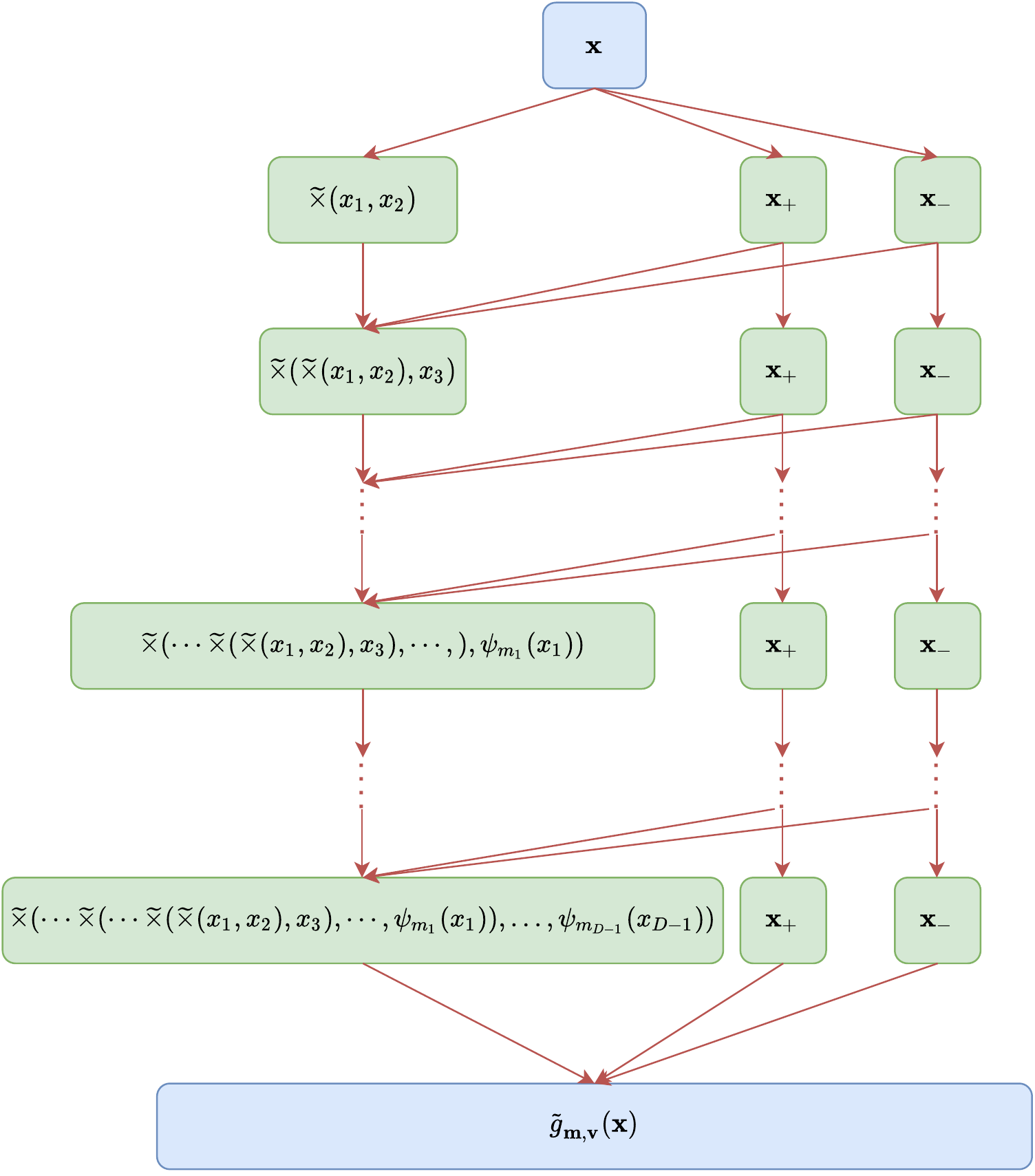}
%		\caption{Illustration of $\widetilde{g}_{\mb,\vb}$.}
%		\label{fig.multiplication}
%	\end{wrapfigure}

	\begin{figure}[ht!]
		\centering
		\includegraphics[width=0.6\textwidth]{}
		\caption{Illustration of $\widetilde{g}_{\mb,\vb}$.}
		\label{fig.multiplication}
	\end{figure}
	
	By applying Lemma \ref{lem.cnn.composition} $\alpha+D-2$ times, we have $\widetilde{g}_{\mb,\vb}\in \cF^{\rm CNN}(L,J,K,\kappa,\kappa)$ with 
	\begin{align*}
		L=O\left(D\log \frac{1}{\varepsilon}\right),\ J=O(D),\ \kappa=3N.
	\end{align*}
	
	We next prove (\ref{eq.product}) and (\ref{eq.product0}). First note that we can express
	\begin{align*}
		&\phi_{\mb}\xb^{\vb}=g_n\equiv\prod_{i=1}^n h_i(\xb),\\
		&\widetilde{g}_{\mb,\vb}(\xb)=\widetilde{g}_n\equiv \ttimes(\ttimes(\cdots\ttimes(h_1(\xb),h_2(\xb)),...),h_n(\xb))
	\end{align*}
	for some $n\leq \alpha+D$, where each $h_i$ can be realized by one layer and satisfies
	$$
	\|h_i(\xb)\|_{W^{k,\infty}(0,1)}\leq (3N)^k.
	$$
	
	To prove (\ref{eq.product}) and (\ref{eq.product0}), it is enough to show 
	\begin{align}
		&\|\widetilde{g}_n(\xb)-g_n(\xb)\|_{W^{k,\infty}((0,1)^D}\leq n^{1-k}c_n^kN^k\varepsilon
		\label{eq.product.1}\\
		&\widetilde{g}_n(\xb)=0 \mbox{ if } g_n(\xb)=0,
		\label{eq.product.2}\\
		&|\widetilde{g}_n(\xb)|_{W^{1,\infty}((0,1)^D)}\leq C_{16}N^k
		\label{eq.product.3}
	\end{align}
	for any $1\leq n\leq \alpha+D-1$, where $\{c_n\}_{n=1}^{\alpha+D-1}$ and $C_{16}$ are constants depending on $D$ and $\alpha$.
	%for any $n\leq \alpha+D$ and some constant $c$ depending on. We are going to prove (\ref{eq.product.1})--(\ref{eq.product0}) by mathematical induction. We will further show that
	%\begin{align}
	%	&|\widetilde{g}_n(\xb)|_{W^{1,\infty}((0,1)^D)}\leq C_2N^k,
	%	\label{eq.product.3}
	%\end{align}
	%We will prove (\ref{eq.product}) by mathematical induction. 
	%We will further show $|\widetilde{g}_n|_{W^{0,\infty}(0,1)^D}\leq n$. For simplicity and emphasis the role of $n$, we denote 
	%$$
	%g_n=\phi_{\mb}\xb^{\vb}, \mbox{ and } \widetilde{g}_n=\widetilde{g}_{\mb,\vb}(\xb).
	%$$
	
	For $n=1$, we have 
	\begin{align*}
		|\widetilde{g}_{n}-g_n|_{W^{k,\infty}((0,1)^D)}= |\ttimes(h_1,1)-h_1|_{W^{k,\infty}((0,1)^D)}.
	\end{align*}
	By Lemma \ref{lem.multiplication.CNN} with $B=\alpha+D+1$, we have for $k=0$, 
	\begin{align*}
		|\ttimes(h_1,1)-h_1|_{W^{0,\infty}((0,1)^D)}\leq \varepsilon.
	\end{align*}
	For $k=1$, by Lemma \ref{lem.multiplication.CNN}, we deduce
	\begin{align*}
		|\ttimes(h_1,1)-h_1|_{W^{1,\infty}((0,1)^D)}\leq C'|\ttimes(x,y)-x\cdot  y|_{W^{1,\infty}([0,1]^2)}|h_1|_{W^{1,\infty}([0,1]^2)}\leq 3C'N\varepsilon,
	\end{align*}
	where $C'$ is a constant depending on $D$. We set $c_1=3C'$. Furthermore, 
	\begin{align*}
		|\widetilde{g}_1(\xb)|_{W^{1,\infty}((0,1)^D)}=|\ttimes(h_1,1)|_{W^{1,\infty}((0,1)^D)}\leq C_4|\ttimes(x,y)-x\cdot y|_{W^{1,\infty}((0,1)^2)}|h_1|_{W^{1,\infty}((0,1)^2)}\leq C_5N,
	\end{align*}
	where $C_{17},C_{18}$ are constants depending on $D,\alpha$.
	
	Therefore, the inequalities (\ref{eq.product.1}) and (\ref{eq.product.3}) hold for $n=1$. 
	
	For (\ref{eq.product.2}), if $g_1(\xb)=0$, then $h_1(\xb)=0$. By Lemma \ref{lem.multiplication.CNN}, $\widetilde{g}_1(\xb)=0$. 
	%Note that we also have
	%\begin{align}
	%	|\widetilde{g}_1|_{W^{0,\infty}((0,1)^D)}=|\ttimes(h_1,1)|_{W^{0,\infty}((0,1)^D)}\leq 1.
	%\end{align}
	
	Assume (\ref{eq.product.1})--(\ref{eq.product.3}) hold for any $1\leq n\leq t$ for some integer $t$ satisfying $1\leq t\leq \alpha+D-2$, i.e., for any $1\leq n\leq t$, we have
	\begin{align}
		&|\widetilde{g}_n-g_n|_{W^{k,\infty}((0,1)^D)}\leq n^{1-k}c_n^kN^k\varepsilon, \label{eq.gt.W1}\\
		&\widetilde{g}_n=0 \mbox{ if } g_n=0,\\
		&|\widetilde{g}_n|_{W^{1,\infty}((0,1)^D)}\leq C_{19}N. \label{eq.gt.mag}
	\end{align}
	
	We also deduce that 
	\begin{align}
		|\widetilde{g}_{t}|_{W^{0,\infty}((0,1)^D)}=|\widetilde{g}_{t}-g_{t}|_{W^{0,\infty}((0,1)^D)}+|g_{t}|_{W^{0,\infty}((0,1)^D)}\leq t\varepsilon +1 \leq t+1.
		\label{eq.gt.W0}
	\end{align}
	
	For $n=t+1$, we have
	\begin{align}
		|\widetilde{g}_{t+1}-g_{t+1}|_{W^{k,\infty}((0,1)^D)}&= |\ttimes(\widetilde{g}_t,h_{t+1})-g_t\cdot h_{t+1}|_{W^{k,\infty}((0,1)^D)} \nonumber\\
		&\leq |\ttimes(\widetilde{g}_t,h_{t+1})-\widetilde{g}_t\cdot h_{t+1}|_{W^{k,\infty}((0,1)^D)}+|\widetilde{g}_t\cdot h_{t+1}-g_t\cdot h_{t+1}|_{W^{k,\infty}((0,1)^D)}.
		\label{eq.product.5}
	\end{align}
	
	Consider the first term in (\ref{eq.product.5}). For $k=0$, we have
	\begin{align}
		|\ttimes(\widetilde{g}_t,h_{t+1})-\widetilde{g}_t\cdot h_{t+1}|_{W^{0,\infty}((0,1)^D)}\leq |\ttimes(x,y)-x\cdot y|_{W^{0,\infty}([-t-1,t+1]^2)}\leq \varepsilon.
		\label{eq.product.1.1}
	\end{align}
	For $k=1$, we have
	\begin{align}
		&|\ttimes(\widetilde{g}_t,h_{t+1})-\widetilde{g}_t\cdot h_{t+1}|_{W^{1,\infty}((0,1)^D)} \nonumber\\
		\leq &C'|\ttimes(x,y)-x\cdot y|_{W^{1,\infty}([-t-1,t+1]^2)}|\widetilde{g}_t|_{W^{1,\infty}([-t-1,t+1]^2)}
		\leq 3C'c_t N\varepsilon,
		\label{eq.product.1.2}
	\end{align}
	where (\ref{eq.gt.W1}) with $k=1$ is used in the last inequality, $C'$ is a constant depending on $D$.
	
	For the second term in (\ref{eq.product.5}), we first consider $k=0$:
	\begin{align}
		|\widetilde{g}_t\cdot h_{t+1}-g_t\cdot h_{t+1}|_{W^{0,\infty}((0,1)^D)} \leq |h_{t+1}|_{\infty}|\widetilde{g}_t-g_t|_{\infty}\leq t\varepsilon,
		\label{eq.product.1.3}
	\end{align}
	where (\ref{eq.gt.W1}) with $k=0$ is used.
	
	For $k=1$, we have
	\begin{align}
		&|\widetilde{g}_t\cdot h_{t+1}-g_t\cdot h_{t+1}|_{W^{1,\infty}((0,1)^D)} \nonumber\\
		= & |h_{t+1}(\widetilde{g}_t-g_t)|_{W^{1,\infty}((0,1)^D)} \nonumber\\
		\leq & C_{20}|h_{t+1}|_{W^{1,\infty}((0,1)^D)}\|\widetilde{g}_t-g_t\|_{\infty} + C_{20}\|h_{t+1}\|_{\infty}|\widetilde{g}_t-g_t|_{W^{1,\infty}((0,1)^D)} \nonumber\\
		\leq & 3C_{20}Nt\varepsilon+C_{20}c_t N\varepsilon\leq C_{21}N\varepsilon,
		\label{eq.product.1.4}
	\end{align}
	where $C_{20},C_{21}$ are constants depending on $D$ and $\alpha$. In (\ref{eq.product.1.4}), (\ref{eq.gt.W1}) with $k=0$ and $k=1$ are used in the second inequality.
	
	Combining (\ref{eq.product.1.1})--(\ref{eq.product.1.4}) and setting $c_{t+1}=3C_{20}c_t+ C_{21}$ gives rise to
	\begin{align*}
		|\widetilde{g}_{t+1}-g_{t+1}|_{W^{k,\infty}((0,1)^D)}\leq (t+1)^{1-k}c_{t+1}^kN^k\varepsilon.
	\end{align*}
	Therefore, (\ref{eq.product.1}) holds for $n=t+1$.
	
	To prove (\ref{eq.product.2}), note that if $g_{t+1}=0$, then either $h_{t+1}=0$, or $g_t=0$. By our induction assumption, when $g_t=0$, we have $\widetilde{g}_t=0$. Since $\widetilde{g}_{t+1}=\ttimes(\widetilde{g}_t,h_{t+1})$, by Lemma \ref{lem.product}, we have $\widetilde{g}_{t+1}=0$ and (\ref{eq.product.2}) holds for $n=t+1$.
	
	For (\ref{eq.product.3}), we deduce
	\begin{align*}
		&|\widetilde{g}_{t+1}(\xb)|_{W^{1,\infty}((0,1)^D)} \nonumber\\
		=&|\ttimes(\widetilde{g}_t,h_{t+1})|_{W^{1,\infty}((0,1)^D)} \nonumber\\
		\leq &C'|\ttimes(x,y)-x\cdot y|_{W^{1,\infty}((-t-1,t+1)^2)}\max\left\{|\widetilde{g}_t|_{W^{1,\infty}((0,1)^2)},|h_{t+1}|_{W^{1,\infty}((0,1)^2)}\right\}\nonumber\\
		\leq &C_{22}N,
	\end{align*}
	where $C_{22}$ is a constant depending on $D$ and $\alpha$.

	Therefore, (\ref{eq.product.1})--(\ref{eq.product.3}) hold for $n=t+1$. By mathematical induction, (\ref{eq.product.1})--(\ref{eq.product.3}) hold for any $1\leq n\leq D+\alpha+1$, and (\ref{eq.product}) and (\ref{eq.product0}) are proved.
\end{proof}

\subsection{Proof of Lemma \ref{lem.multiplication.CNN}}\label{sec.multiplication.CNN}
\begin{proof}[Proof of Lemma \ref{lem.multiplication.CNN}]
	The proof of Lemma \ref{lem.multiplication.CNN} is based on the following lemma.
	\begin{lemma}[Proposition C.2 in \citet{guhring2020error}]
		For any $0<\eta<1/2$, $x,y\in[-B,B]$. There is an MLP, denoted by $\ttimes(\cdot,\cdot)$, such that
		$$
		\|\ttimes(x,y)-xy\|_{W^{1,\infty}[-B,B]^2}<\eta,\ \ttimes(x,0)=\ttimes(y,0)=0.
		$$
		Such a network has $O\left(\log \frac{1}{\eta}\right)$ layers and parameters. The width of each layer is bounded by 6 and all parameters are bounded by $2$. Furthermore, we have 
		$$
		\|\ttimes(x,y)\|_{W^{1,\infty}((-B,B)^2)}\leq CM,
		$$
		for some absolute constant $C$. 
		\label{lem.multiplication}
	\end{lemma}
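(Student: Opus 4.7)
My plan is to establish the MLP approximation of multiplication via the polarization identity
\[
xy = \tfrac{1}{4}\bigl[(x+y)^2 - (x-y)^2\bigr],
\]
which reduces the problem to constructing a single ReLU subnetwork $\widehat{\mathrm{sq}}$ that approximates the squaring map on $[-2B, 2B]$ in $W^{1,\infty}$. I then define
\[
\ttimes(x,y) := \tfrac{1}{4}\bigl[\widehat{\mathrm{sq}}(x+y) - \widehat{\mathrm{sq}}(x-y)\bigr].
\]
Observe that this construction automatically gives the zero-preserving property $\ttimes(x,0) = \ttimes(0,y) = 0$, since when one argument vanishes the two inputs fed into $\widehat{\mathrm{sq}}$ coincide and the subtraction cancels exactly. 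The linear pre- and post-processing uses only weights in $\{\pm 1/4, \pm 1\}$, all bounded by $2$.

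For $\widehat{\mathrm{sq}}$, I would use Yarotsky's iterated sawtooth (tent map) construction. After rescaling $u = x/(2B)\in[-1,1]$ and using $u \mapsto |u|$ (a single ReLU gate) to fold onto $[0,1]$, the key building block is the tent map $g(u) = 2\,\mathrm{ReLU}(u) - 4\,\mathrm{ReLU}(u - 1/2)$, realizable in one layer of width $\le 3$ (carrying the running partial sum alongside). Iterating $s$ times and forming $f_s(u) = u - \sum_{k=1}^s 4^{-k} g^{\circ k}(u)$ produces the canonical piecewise-linear interpolant of $u^2$ on the $2^{-s}$-grid. Hence depth $s = O(\log(1/\eta))$ and width $\le 3$ per squaring suffice, and running two squarings in parallel yields total width $\le 6$, matching the claimed complexity.

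The main technical step is the $W^{1,\infty}$ error analysis for $\widehat{\mathrm{sq}}$. The $L^\infty$ bound $\|f_s - (\cdot)^2\|_\infty \le 4^{-s-1}$ is classical. For the derivative, a simple but crucial observation is that $f_s$ linearly interpolates $u^2$ on each interval $[j 2^{-s}, (j{+}1)2^{-s}]$, so its (weak) derivative on that interval equals the secant slope $(2j+1)2^{-s}$, which differs from $2u$ by at most $2^{-s}$. Consequently $\|f_s' - 2\,\cdot\|_\infty \le 2^{-s}$, giving $W^{1,\infty}$ error $O(2^{-s})$. After the polarization and rescaling (which introduce a factor $B$), choosing $s \asymp \log(B/\eta)$ yields the desired uniform $W^{1,\infty}$ error $\eta$ on $[-B,B]^2$. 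The Lipschitz bound $\|\ttimes\|_{W^{1,\infty}} \le CB$ then follows because the exact function $xy$ has $W^{1,\infty}$ norm $O(B)$ on $[-B,B]^2$ (for $\eta<1/2$, say) and $\ttimes$ differs from it by at most $\eta$ in $W^{1,\infty}$.

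The main obstacle I anticipate is precisely the $W^{1,\infty}$ derivative control: naively, piecewise-linear approximations have discontinuous derivatives that might oscillate badly. The resolution, as sketched above, is the secant-slope identity which shows the piecewise-constant derivative of $f_s$ is in fact uniformly close to the true derivative $2u$ on the full interval, a somewhat delicate point that is not needed for the pure $L^\infty$ approximation theory. Once this is in hand, propagating the error through the affine polarization layer is routine.
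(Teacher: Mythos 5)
The paper offers no proof of this lemma at all --- it is imported verbatim as Proposition C.2 of G\"uhring et al.\ (2020) --- so your argument is a self-contained reconstruction, and it follows essentially the standard route underlying that citation: the polarization identity $xy=\frac14\left[(x+y)^2-(x-y)^2\right]$ plus Yarotsky's iterated-sawtooth interpolant of the square, with the secant-slope observation supplying the $W^{1,\infty}$ (rather than merely $L^\infty$) control. That key step is right: on each cell $[j2^{-s},(j+1)2^{-s}]$ the interpolant's slope equals the secant $(2j+1)2^{-s}$, which is within $2^{-s}$ of $2u$, so the derivative error decays geometrically, and after rescaling by $2B$ the choice $s\asymp\log(B/\eta)$ gives the stated accuracy with depth $O(\log(1/\eta))$ and width $6$ (two width-$3$ squaring chains in parallel). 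Three loose ends are worth tightening. First, your justification of the zero-preservation (``the two inputs coincide'') is literally true only when the second argument vanishes; for $x=0$ the inputs to $\widehat{\mathrm{sq}}$ are $y$ and $-y$, and the cancellation requires the squaring subnetwork to be exactly even --- your $|u|$-folding does guarantee this, but the argument should invoke it explicitly. Second, the claim that all parameters are bounded by $2$ is not verified by your sketch: the tent map as written carries a coefficient $4$, and the rescalings by $(2B)^2$ and $1/(2B)$ introduce $B$-dependent weights; these must be redistributed across additional layers (costing only $O(\log B)$ extra depth), which is handled explicitly in the cited construction. Third, your closing bound is slightly off: on $(-B,B)^2$ the values of $xy$ reach $B^2$, so ``the exact function has $W^{1,\infty}$ norm $O(B)$'' is valid only for the first-derivative (seminorm) part or when $B=O(1)$, which is in fact how the lemma is used in this paper; the statement itself is typo-laden here ($CM$ for $CB$), so this reading is the intended one. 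None of these issues affects the correctness of your core construction.
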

	Combing Lemma \ref{lem.multiplication} and \ref{lem.cnnRealization}, for any $\varepsilon>0$, $K\geq 2$, there exits a CNN $\ttimes\in \cF^{\rm CNN}(L,J,K,\kappa,\kappa)$ such that for any $|x|\leq B,|y|\leq B$, we have
	\begin{align*}
		&|\ttimes(x,y)-xy|<\varepsilon,\ \ttimes(x,0)=\ttimes(y,0)=0,\\
		&\|\ttimes(x,y)\|_{W^{1,\infty}((-B,B)^2)}\leq C_{23}B,
	\end{align*}
	where $C_{23}$ is an absolute constant.
	Such an architecture has
	$$
	L=O\left(\log \frac{1}{\varepsilon}\right),\ J=24,\ \kappa=1.
	$$
\end{proof}

\subsection{Proof of Lemma \ref{lem.net.error}}\label{sec.net.error}
%The lemma above can be proved by following that of \citet[Lemma C.6]{guhring2020error}.
\begin{proof}[Proof of Lemma \ref{lem.net.error}]
	Denote $\Omega_{\mb,N}=B_{\frac{1}{N},\|\cdot\|_{\infty}}\left(\frac{\mb}{N}\right)$. We have
	\begin{align}
		&\|\sum_{\mb\in \cS_N} \phi_{\mb}Q_{\mb/N}^{\alpha}f-\sum_{\mb\in \cS_N} \sum_{|\vb|\leq \alpha-1}c_{\mb,\vb}\widetilde{g}_{\mb,\vb}\|_{W^{k,p}((0,1)^D)} \nonumber\\
		=&\left\|\sum_{\mb\in \cS_N} \sum_{|\vb|\leq \alpha-1}c_{\mb,\vb}\phi_{\mb}\xb^{\vb}-\sum_{\mb\in \cS_N} \sum_{|\vb|\leq \alpha-1}c_{\mb,\vb}\widetilde{g}_{\mb,\vb}\right\|_{W^{k,p}((0,1)^D)}^p \nonumber\\
		=&\left\|\sum_{\mb\in \cS_N} \sum_{|\vb|\leq \alpha-1}c_{\mb,\vb}\left(\phi_{\mb}\xb^{\vb}-\widetilde{g}_{\mb,\vb}\right)\right\|_{W^{k,p}((0,1)^D)}^p \nonumber\\
		\leq & \sum_{\widetilde{\mb}\in \cS_N} \left\|\sum_{\mb\in \cS_N} \sum_{|\vb|\leq \alpha-1}c_{\mb,\vb}\left(\phi_{\mb}\xb^{\vb}-\widetilde{g}_{\mb,\vb}\right)\right\|_{W^{k,p}(\Omega_{\widetilde{\mb},N}\cap (0,1)^D)}^p,
		\label{eq.net.error}
	\end{align}
	where the first equality follows from (\ref{eq.f.aveTaylor}), the last inequality holds since $(0,1)^D\subset \cup_{\widetilde{\mb}\in \cS_N} \Omega_{\widetilde{\mb},N}$.
	
	For each $\widetilde{\mb}$, we have
	\begin{align}
		&\left\|\sum_{\mb\in \cS_N} \sum_{|\vb|\leq \alpha-1}c_{\mb,\vb}\left(\phi_{\mb}\xb^{\vb}-\widetilde{g}_{\mb,\vb}\right)\right\|_{W^{k,p}(\Omega_{\widetilde{\mb},N}\cap (0,1)^D)} \nonumber\\
		\leq & \sum_{\mb\in \cS_N} \sum_{|\vb|\leq \alpha-1}|c_{\mb,\vb}|\left\|\phi_{\mb}\xb^{\vb}-\widetilde{g}_{\mb,\vb}\right\|_{W^{k,p}(\Omega_{\widetilde{\mb},N}\cap (0,1)^D)} \nonumber\\
		\leq & C_{24}N^{d/p}\sum_{\mb\in \cS_N} \sum_{|\vb|\leq \alpha-1}\|\bar{f}\|_{W^{\alpha-1,p}(\Omega_{\mb,N})}\left\|\phi_{\mb}\xb^{\vb}-\widetilde{g}_{\mb,\vb}\right\|_{W^{k,p}(\Omega_{\widetilde{\mb},N}\cap (0,1)^D)},
		\label{eq.net.error.decom}
	\end{align}
	where $C_{21}$ is the constant in Lemma \ref{lem.aveTaylor.err}, $\bar{f}$ is the extension of $f$ to $\RR^D$ from \citet[Theorem VI.3.1.5]{stein1970singular}, which satisfies
	\begin{align}
		\|\bar{f}\|_{W^{\alpha,p}(\RR^D)}\leq C_{25} \|f\|_{W^{\alpha,p}((0,1)^D)}
		\label{eq.extension}
	\end{align}
	for some constant $C_{25}$ depending on $D,p,\alpha$.
	
	We next derive an upper bound of the summand of (\ref{eq.net.error.decom}). We first deduce that
	\begin{align}
		&\left\|\phi_{\mb}\xb^{\vb}-\widetilde{g}_{\mb,\vb}\right\|_{W^{k,p}(\Omega_{\widetilde{\mb},N}\cap (0,1)^D)} \nonumber\\
		\leq &\left|\Omega_{\widetilde{\mb},N}\cap (0,1)^D\right|^{1/p} (D+1)^{1/p}\left\|\phi_{\mb}\xb^{\vb}-\widetilde{g}_{\mb,\vb}\right\|_{W^{k,\infty}(\Omega_{\widetilde{\mb},N}\cap (0,1)^D)} \nonumber\\
		\leq &C_{26}\left(\frac{1}{N}\right)^{d/p}\left\|\phi_{\mb}\xb^{\vb}-\widetilde{g}_{\mb,\vb}\right\|_{W^{k,\infty}(\Omega_{\widetilde{\mb},N}\cap (0,1)^D)} \nonumber\\
		\leq & C_{27}\left(\frac{1}{N}\right)^{d/p}N^k\eta,
		\label{eq.net.error.decom.1}
	\end{align}
	where $\left|\Omega_{\widetilde{\mb},N}\cap (0,1)^D\right|$ denotes the volume of $\Omega_{\widetilde{\mb},N}\cap (0,1)^D$, $C_{26},C_{27}$ are constants depending on $D,\alpha$ and $p$. We used Lemma \ref{lem.product} in the last inequality. Substituting (\ref{eq.net.error.decom.1}) into (\ref{eq.net.error.decom}) gives rise to
	\begin{align}
		&\left\|\sum_{\mb\in \cS_N} \sum_{|\vb|\leq \alpha-1}c_{\mb,\vb}\left(\phi_{\mb}\xb^{\vb}-\widetilde{g}_{\mb,\vb}\right)\right\|_{W^{k,p}(\Omega_{\widetilde{\mb},N}\cap (0,1)^D)} \nonumber\\
		= &C_{24}\sum_{\substack{\mb\in \cS_N\\ \|\mb-\widetilde{\mb}\|_{\infty}\leq 1}} \sum_{|\vb|\leq \alpha-1}\|\bar{f}\|_{W^{\alpha-1,p}(\Omega_{\mb,N})}\left\|\phi_{\mb}\xb^{\vb}-\widetilde{g}_{\mb,\vb}\right\|_{W^{k,p}(\Omega_{\widetilde{\mb},N}\cap (0,1)^D)} \nonumber\\
		\leq &C_{24}C_{27}N^k\eta\sum_{\substack{\mb\in \cS_N\\ \|\mb-\widetilde{\mb}\|_{\infty}\leq 1}} \sum_{|\vb|\leq \alpha-1}\|\bar{f}\|_{W^{\alpha-1,p}(\Omega_{\mb,N})} \nonumber\\
		\leq &C_{28}N^k\eta\sum_{\substack{\mb\in \cS_N\\ \|\mb-\widetilde{\mb}\|_{\infty}\leq 1}} \|\bar{f}\|_{W^{\alpha-1,p}(\Omega_{\mb,N})},
		\label{eq.net.error.decom.2}
	\end{align}
	where $C_{28}=C_{24}C_{27}(D+1)^{\alpha-1}$. By H\"{o}lder's inequality, we have
	\begin{align}
		&\sum_{\substack{\mb\in \cS_N\\ \|\mb-\widetilde{\mb}\|_{\infty}\leq 1}} \|\bar{f}\|_{W^{\alpha-1,p}(\Omega_{\mb,N})} \nonumber\\
		=& \sum_{\substack{\mb\in \cS_N\\ \|\mb-\widetilde{\mb}\|_{\infty}\leq 1}} \|\bar{f}\|_{W^{\alpha-1,p}(\Omega_{\mb,N})}\cdot 1 \nonumber\\
		\leq & \left(\sum_{\substack{\mb\in \cS_N\\ \|\mb-\widetilde{\mb}\|_{\infty}\leq 1}} \|\bar{f}\|_{W^{\alpha-1,p}(\Omega_{\mb,N})}^p\right)^{\frac{1}{p}}\left(\sum_{\substack{\mb\in \cS_N\\ \|\mb-\widetilde{\mb}\|_{\infty}\leq 1}} 1^q\right)^{\frac{1}{q}} \nonumber\\
		\leq & 3^{\frac{D}{q}} \left(\sum_{\substack{\mb\in \cS_N\\ \|\mb-\widetilde{\mb}\|_{\infty}\leq 1}} \|\bar{f}\|_{W^{\alpha-1,p}(\Omega_{\mb,N})}^p\right)^{\frac{1}{p}},
		\label{eq.net.error.decom.3}
	\end{align}
	where $q=1/(1-1/p)$. Substituting (\ref{eq.net.error.decom.2}), (\ref{eq.net.error.decom.3}) into (\ref{eq.net.error}) gives rise to
	\begin{align*}
		&\|\sum_{\mb\in \cS_N} \phi_{\mb}Q_{\mb/N}^{\alpha}f-\sum_{\mb\in \cS_N} \sum_{|\vb|\leq \alpha-1}c_{\mb,\vb}\widetilde{g}_{\mb,\vb}\|_{W^{k,p}((0,1)^D)}^p \nonumber\\
		\leq & \left(C_{28} 3^{\frac{D}{q}}N^k\eta \right)^p  \left(\sum_{\widetilde{\mb}\in \cS_N}\sum_{\substack{\mb\in \cS_N\\ \|\mb-\widetilde{\mb}\|_{\infty}\leq 1}} \|\bar{f}\|_{W^{\alpha-1,p}(\Omega_{\mb,N})}^p\right) \nonumber\\
		\leq & \left(C_{28} 3^{\frac{D}{q}}N^k\eta \right)^p 3^D \left(\sum_{\widetilde{\mb}\in \cS_N} \|\bar{f}\|_{W^{\alpha-1,p}(\Omega_{\widetilde{\mb},N})}^p\right) \nonumber\\
		\leq & \left(C_{28} 3^{\frac{D}{q}}N^k\eta \right)^p 3^D 2^D \|\bar{f}\|_{W^{\alpha-1,p}(\cup_{\widetilde{\mb}\in \cS_N} \Omega_{\widetilde{\mb},N})}^p \nonumber\\
		\leq & C_{29}N^{kp}\eta^{p} \|f\|_{W^{\alpha-1,p}((0,1)^D)},
	\end{align*}
	where $C_{29}$ is a constant depending on $D,\alpha,p$. In the above, we used (\ref{eq.extension}) in the last inequality. Lemma \ref{lem.net.error} is proved for $s=0$ and $s=1$. For any $0<s<1$ and $1\leq p\leq \infty$, by Lemma \ref{lem.interpolation.norm}, we have
	\begin{align*}
		\left\|\sum_{\mb\in \cS_N} \phi_{\mb}Q_{\mb/N}^{\alpha}f-\sum_{\mb\in \cS_N} \sum_{|\vb|\leq \alpha-1}c_{\mb,\vb}\widetilde{g}_{\mb,\vb}\right\|_{W^{k,p}((0,1)^D)}^p\leq& C_{30}N^{kp}\eta^{p} \|f\|_{W^{\alpha-1,p}((0,1)^D)}\\
		\leq& C_{30}N^{sp}\eta^{p} \|f\|_{W^{\alpha,p}((0,1)^D)}
	\end{align*}
	for some constant $C_{30}$ depending on $D,\alpha,s,p$. The proof is finished.
\end{proof}

%\section{Lemma \ref{lem.CNNParallel} and its proof}\label{sec.CNNParallel}
%\begin{lemma}\label{lem.CNNParallel}
%	Let $f_1\in \cF^{\rm CNN}(L_1,J_1,K_1,\kappa_1,\kappa_1)$ and $f_2\in \cF^{\rm CNN}(L_2,J_2,K_2,\kappa_2,\kappa_2)$ be two CNNs from $\RR^D\rightarrow\RR$. Assume the fully connected layer of $f_1$ and $f_2$ have nonzero entries only in the first row. Then There exists a CNN architecture $ \cF^{\rm CNN}(L,J,K,\kappa,\kappa)$ such that there exits $f\in \cF^{\rm CNN}(L,J,K,\kappa,\kappa)$ with
%	\begin{align}
%		f(\xb)=f_1(\xb)+f_2(\xb).
%	\end{align}
%	Such an architecture has
%	\begin{align}
%		L=O(\max\{L_1,L_2\}), J=J_1+J_2, K=\max\{K_1,K_2\}, \kappa=\max\{\kappa_1,\kappa_2\}.
%	\end{align}
%	Furthermore, the fully connected layer of $f$ has nonzero elements only in the first row.
%\end{lemma}
%\begin{proof}[Proof of Lemma \ref{lem.CNNParallel}]
%	According to \cite[Lemma 14]{liu2021besov}, there exist a set of filters $\cW$ and biases $\cB$ such that 
%	\begin{align}
%		\Conv_{\cW,\cB}(\xb)=\begin{bmatrix}
%			(f_1(\xb))_+ & (f_1(\xb))_- & (f_2(\xb))_+ & (f_2(\xb))_-\\
%			\star & \star & \star & \star
%		\end{bmatrix}
%	\end{align}
%	We write $f$ as
%	\begin{align}
%		f=W_1\cdot \Conv_{\cW,\cB}
%	\end{align}
%	where $W_1$ is given as
%	\begin{align}
%		W_1=\begin{bmatrix}
%			1 & -1 & 1 & -1\\
%			\mathbf{0} & \mathbf{0} & \mathbf{0} & \mathbf{0}
%		\end{bmatrix}.
%	\end{align}
%	The proof is finished.
%\end{proof}

\subsection{Lemma \ref{lem.CNN.sumseries} and its proof}\label{sec.CNN.sumseries}
\begin{lemma}\label{lem.CNN.sumseries}
	Let $\{f_i\}_{i=1}^{n}$ be a set of CNNs with architecture $\cF^{\rm CNN}(L_0,J_0,K_0,\kappa_0,\kappa_0)$. Then there for any integer $1\leq w\leq n$, there exists a CNN architecture $\cF^{\rm CNN}(L_w,J_w,K_w,\kappa_w,\kappa_w)$ that gives rise to a CNN $g_w$ such that
	\begin{align*}
		g_w(\xb)= \sum_{i=1}^{w} f_i(\xb).
	\end{align*}
	Such an architecture has 
	\begin{align*}
		L=O(L_0), \ J=wJ_0, \ K=K_0, \ \kappa=\kappa_0.
	\end{align*}
	Furthermore, the fully connected layer of $f$ has nonzero elements only in the first row.
\end{lemma}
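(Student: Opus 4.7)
The plan is to construct $g_w$ as a single CNN that runs $f_1, \dots, f_w$ simultaneously on $w$ disjoint blocks of channels and then collapses their outputs into the sum through the final fully connected layer. I would partition the width $wJ_0$ into $w$ consecutive groups of $J_0$ channels, dedicating group $i$ entirely to reproducing the internal computation of $f_i$.

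For each hidden convolutional layer $\ell \in \{2, \dots, L_0\}$, I would take the filter $\cW^{(\ell)} \in \RR^{wJ_0 \times K_0 \times wJ_0}$ to be block-diagonal in channel space: the $(i,i)$-th channel block equals $\cW_i^{(\ell)}$ and every off-diagonal block is zero, while the bias $B^{(\ell)}$ is the channel-wise concatenation of $B_1^{(\ell)}, \dots, B_w^{(\ell)}$. Because convolutions with this block-diagonal filter, followed by an entrywise $\ReLU$, do not mix channels across groups, group $i$ evolves exactly as it would under $f_i$ in isolation. The first layer requires a small adjustment since the padded input $P(\xb)$ carries $\xb$ only in its first channel: I would set $[\cW^{(1)}]_{(i-1)J_0+j,:,:} = [\cW_i^{(1)}]_{j,:,:}$ for each $i$ and $j$, which routes the single informational channel into all $w$ groups using the corresponding first filter of each $f_i$. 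After $L_0$ layers the output $Q(\xb) \in \RR^{D \times wJ_0}$ is precisely the channel-wise concatenation of $Q_1(\xb), \dots, Q_w(\xb)$.

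For the fully connected layer I would set $W = [W_1, \dots, W_w]$ to be the channel-wise concatenation of the $f_i$'s fully connected weights, so that $g_w(\xb) = W \cdot Q(\xb) = \sum_{i=1}^w W_i \cdot Q_i(\xb) = \sum_{i=1}^w f_i(\xb)$ by definition of the entrywise-product-sum $\cdot$. All entries of $\cW^{(\ell)}$, $B^{(\ell)}$ and $W$ are either copies of entries from the $f_i$'s or zero, so the $\ell^\infty$ bounds are inherited directly and one obtains the claimed $L = O(L_0)$, $J = wJ_0$, $K = K_0$, $\kappa = \kappa_0$.

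The one delicate point, which I expect to be the main obstacle to address cleanly, is the structural claim that $W$ has nonzero entries only in its first row. Within the paper's pipeline each $f_i$ is itself produced via Lemma \ref{lem.cnnRealization} (or subsequent lemmas that preserve this convention), so each $W_i$ is already first-row-only and the concatenation $[W_1, \dots, W_w]$ inherits the property automatically. If one instead allowed the $f_i$'s to have arbitrary fully connected layers, aggregating $W_i \cdot Q_i(\xb)$ into the first spatial position would require $O(D/K_0)$ additional convolutional layers of shift-and-accumulate type, which would violate the $L = O(L_0)$ bound; in our setting that difficulty does not arise and the construction stays strictly within the stated architectural budget.
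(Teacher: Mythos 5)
Your construction is correct, but it is not the route the paper takes. You parallelize the $w$ networks via block-diagonal filters, so after $L_0$ layers the convolutional part outputs the channel-wise concatenation of $Q_1(\xb),\dots,Q_w(\xb)$, and the summation is performed in the final fully connected layer by concatenating $W_1,\dots,W_w$. The paper instead follows the construction of Lemma 14 of \citet{liu2021besov}: the convolutional layers are arranged to compute the scalars $f_i(\xb)$ themselves, stored as positive and negative parts $(f_i(\xb))_+,(f_i(\xb))_-$ in the first row of the feature map (this is where the first-row-only structure of each $W_i$ is exploited, since such a fully connected layer can be absorbed into an extra convolutional layer acting at the first spatial position), and the sum is then produced by a fixed fully connected matrix whose first row is $(1,-1,1,-1,\dots,1,-1)$ and whose other rows vanish. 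The paper's version buys a canonical output layer with explicit $\pm1$ entries in the first row, which plugs directly into the subsequent ConvResNet realization (Lemma \ref{lem.cnn.convresnet}); your version is more elementary and self-contained, avoids the external lemma, keeps the depth exactly $L_0$ and the weight bound exactly $\kappa_0$, and, as you correctly flag, the ``first row only'' conclusion is then inherited from the $f_i$'s rather than manufactured---which is harmless here because every $f_i$ to which the lemma is applied in this paper (via Lemmas \ref{lem.psi} and \ref{lem.product}) already has that property, a structural hypothesis the paper's own construction also relies on implicitly. Both arguments tacitly assume the $f_i$'s share a common depth, which holds in all uses in the paper.
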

\begin{proof}[Proof of Lemma \ref{lem.CNN.sumseries}]
	The idea of the proof is similar to \citet[proof of Lemma 14]{liu2021besov}. Following the proof of \citet[Lemma 14]{liu2021besov}, we can show that there exist a set of filters $\cW$ and biases $\cB$ such that 
	\begin{align*}
		\Conv_{\cW,\cB}(\xb)=\begin{bmatrix}
			(f_1(\xb))_+ & (f_1(\xb))_- & (f_2(\xb))_+ & (f_2(\xb))_- & \cdots & (f_w(\xb))_+ & (f_w(\xb))_-\\
			\star & \star & \star & \star &\cdots & \star & \star
		\end{bmatrix},
	\end{align*}
	where $\Conv_{\cW,\cB}$ has depth bounded by $L_0$, number of channels bounded by $wj_0$ and all weight parameters bounded by $\kappa_0$.
	We write $g_w$ as
	\begin{align*}
		g_w=W_1\cdot \Conv_{\cW,\cB},
	\end{align*}
	where $W_1$ is given as
	\begin{align*}
		W_1=\begin{bmatrix}
			1 & -1 & 1 & -1 & \cdots & 1 & -1\\
			\mathbf{0} & \mathbf{0} & \mathbf{0} & \mathbf{0} &\cdots &\mathbf{0} & \mathbf{0}
		\end{bmatrix}.
	\end{align*}
	The proof is finished.
	%	We use mathematical induction to prove Lemma \ref{lem.CNN.sumseries}. 
	%	For $w=1$, we simply set $g_1=f_1$. The claim holds. Now assume for $w=j$, there exists $\cF^{\rm CNN}(L_k,J_k,K_k,\kappa_k,\kappa_k)$ that yields a $g_k$ such that 
	%	$$
	%	g_k=\sum_{i=1}^{k} f_i.
	%	$$
	%	Such an architecture has
	%	\begin{align}
	%		L_w=O(L_0+k), J=kJ_0, K=K_0, \kappa=\kappa_0.
	%	\end{align}
	%	For $w=k+1$, we construct $g_{k+1}=g_k+f_{k+1}$. By Lemma \ref{lem.CNNParallel}, there exist $\cF^{\rm CNN}(L_{k+1},J_{k+1},K_{k+1},\kappa_{k+1},\kappa_{k+1})$ with
	%	\begin{align}
	%		L_{k+1}=O(\max\{L_k,L_0\})=O(L_0), J=(k+1)J_0, K=K_0, \kappa=\kappa_0.
	%	\end{align}
	%	Such an architecture yields a CNN $g_{k+1}$ such that
	%	$$
	%	g_{k+1}=g_k+f_{k+1}=\sum_{i=1}^{k+1} f_i.
	%	$$
	%	Therefore, the claim holds for $w=k+1$. By mathematical induction, Lemma \ref{lem.CNN.sumseries} is proved.
\end{proof}
\subsection{Proof of Lemma \ref{lem.CNN.adap} }\label{sec.CNN.adap}
\begin{proof}[Proof of Lemma \ref{lem.CNN.adap}]
	For any given $\widetilde{J}$, let $c$ be the smallest integer such that $\widetilde{J}\leq cJ_0$. Then we set $J=cJ_0$ and $n=\lceil n_0/c \rceil$. By Lemma \ref{lem.CNN.sumseries}, there exists a CNN architecture $\cF^{\rm CNN}(L,J,K,\kappa,\kappa)$ with
	\begin{align*}
		L=O(L_0), \ J=cJ_0, \ K=K_0, \ \kappa=\kappa_0.
	\end{align*}
	Such an architecture gives rise to CNNs $\{g_j\}_{j=1}^{\lceil n_0/c\rceil}$ such that 
	$$
	g_j=\sum_{i=c(j-1)+1}^{\min\{cj,n\}} f_i.
	$$
	The lemma is proved.
\end{proof}

\section{Proof of lemmas in Appendix \ref{sec.proof.M}}
\subsection{Proof of Lemma \ref{lem.parunity}} \label{sec.proof.parunity}
\begin{proof}[Proof of Lemma \ref{lem.parunity}]
	Following the construction in \textbf{Step 1} of the proof of Theorem \ref{thm.M}, for $\widetilde{r}=r/2<\tau/8$, there exists a collection of points atlas of $\cM$ denoted by $\{\widetilde{U}_i,\widetilde{\varphi_i}\}_{i=1}^{\widetilde{C}_{\cM}}$, where $\widetilde{U}_i=B_{\widetilde{r}}(\widetilde{\cbb}_i)$ for some $\widetilde{\cbb}_i\in \cM$, and $\widetilde{\varphi}_i$ is defined according to (\ref{eq.transformation}). By \citet[Chapter 2 Equation (1)]{conwaysphere}, the number of charts is bounded by
	$$
	\widetilde{C}_{\cM}\leq \left\lceil \frac{\mathrm{SA}(\cM)}{\widetilde{r}^d}T_d\right\rceil =\left\lceil \frac{\mathrm{SA}(\cM)}{{r/2}^d}T_d\right\rceil.
	$$
	The following lemma shows that for any locally finite cover of a smooth manifold, a $C^{\infty}$ partition of unity always exists:
	\begin{lemma}[Chapter 2 Theorem 15 of \cite{spivak1973comprehensive}]\label{lem.parunity.exist}
		Let $\{U_\alpha\}_{\alpha \in \cA}$ be a locally finite cover of a smooth manifold $\cM$. There is a $C^\infty$ partition of unity $\{\rho_\alpha\}_{\alpha=1}^\infty$ such that $\supp(\rho_\alpha) \subset U_\alpha$. %\textcolor{red}{index}
	\end{lemma}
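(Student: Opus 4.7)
The plan is to construct the partition of unity by combining two classical tools: smooth bump functions transported from Euclidean charts, and a shrinking argument for locally finite covers on paracompact spaces. Smooth manifolds are second-countable Hausdorff, hence paracompact and normal, so the shrinking lemma applies. First, I would invoke the shrinking lemma twice to produce open covers $\{V_\alpha\}_{\alpha\in\cA}$ and $\{W_\alpha\}_{\alpha\in\cA}$ of $\cM$ with $\overline{W_\alpha}\subset V_\alpha$ and $\overline{V_\alpha}\subset U_\alpha$. Local finiteness of $\{U_\alpha\}$ is preserved by these refinements.

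Next, for each $\alpha$ I would construct a smooth function $\psi_\alpha:\cM\to[0,1]$ with $\psi_\alpha\equiv 1$ on $\overline{W_\alpha}$ and $\supp(\psi_\alpha)\subset V_\alpha$. For any point $p\in\cM$, the standard Euclidean bump $\eta(x)=\exp(-1/(r^2-\|x\|^2))$ on $B_r(0)$ and zero otherwise, pulled back via a chart $\phi^{-1}$ and extended by zero, provides a smooth function positive at $p$ and supported in a prescribed chart neighborhood. For the set $\overline{W_\alpha}$, I would cover it by finitely many such chart neighborhoods contained in $V_\alpha$ (using compactness of closed subsets when available, or a countable exhaustion for the general case), form the sum of the corresponding bumps to obtain a smooth function positive on $\overline{W_\alpha}$ and supported in $V_\alpha$, then compose with a smooth cutoff $\chi:\RR\to[0,1]$ satisfying $\chi\equiv 1$ on $[c,\infty)$ and $\chi\equiv 0$ on $(-\infty,0]$ for a suitable threshold $c>0$ to produce $\psi_\alpha$.

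Because $\supp(\psi_\alpha)\subset U_\alpha$ and $\{U_\alpha\}$ is locally finite, the family $\{\supp(\psi_\alpha)\}$ is locally finite, so $\Psi(x):=\sum_\alpha \psi_\alpha(x)$ is a locally finite sum of smooth functions, hence smooth on $\cM$. Since $\{W_\alpha\}$ covers $\cM$ and $\psi_\alpha\geq 1$ on $\overline{W_\alpha}$, $\Psi(x)\geq 1>0$ everywhere. Defining $\rho_\alpha:=\psi_\alpha/\Psi$ gives a family of smooth $[0,1]$-valued functions with $\supp(\rho_\alpha)\subset U_\alpha$ and $\sum_\alpha\rho_\alpha\equiv 1$, as required.

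The main obstacle is the shrinking lemma for arbitrary (possibly uncountable) index sets, which requires a well-ordering of $\cA$ and a transfinite recursion: at stage $\alpha$ one replaces $U_\alpha$ by an open $V_\alpha$ with $\overline{V_\alpha}\subset U_\alpha$ such that $\{V_\beta\}_{\beta\leq\alpha}\cup\{U_\beta\}_{\beta>\alpha}$ still covers $\cM$. Normality of $\cM$ guarantees that the required $V_\alpha$ exists at every stage, and local finiteness of $\{U_\alpha\}$ ensures that the covering property is preserved through limit ordinals. Once this is handled, the remaining construction of the bumps and the normalization step are routine.
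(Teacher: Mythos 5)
The paper does not actually prove this lemma; it is quoted verbatim from Spivak (Chapter 2, Theorem 15), so there is no in-paper argument to compare against, and your construction — shrinking the locally finite cover twice using normality/paracompactness, transporting Euclidean bump functions through charts, and normalizing the locally finite sum — is precisely the standard proof given in such references and is essentially correct. One small caveat: when $\overline{W_\alpha}$ is noncompact your cutoff step only yields $\psi_\alpha>0$ on $W_\alpha$ rather than $\psi_\alpha\equiv 1$ on $\overline{W_\alpha}$ (so $\Psi\geq 1$ may fail), but strict positivity of the locally finite sum $\Psi$ is all the normalization $\rho_\alpha=\psi_\alpha/\Psi$ requires, so the argument goes through.
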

	Let $\{\rho_i\}_{i=1}^{\widetilde{C}_{\cM}}$ be the partition of unity in Lemma \ref{lem.parunity.exist} with respect to $\{\widetilde{U}_i\}_{i=1}^{C_{\cM}}$.
	
	We set $C_{\cM}=\widetilde{C}_{\cM}$ and define $U_i=B_{r}(\widetilde{\cbb}_i)$ and $\varphi_i$ according to (\ref{eq.transformation}). Since $\widetilde{r}<r$, $\widetilde{U}_i\subset U_i$, we have $\widetilde{U}_i\subset U_i$ and
	$$
	\cM\subseteq \bigcup_{i=1}^{\widetilde{C}_{\cM}} \widetilde{U}_i \subseteq \bigcup_{i=1}^{{C}_{\cM}} U_i.
	$$
	Therefore $\{U_i\}_{i=1}^{C_{\cM}}$ is an open cover of $\cM$ and $\{U_i,\varphi_i\}_{i=1}^{C_{\cM}}$ is an atlas of $\cM$. Since $\supp(\rho_i)\subseteq \widetilde{U_i}$, we have $\supp(\rho_i)\subset U_i$ and 
	\begin{align*}
		\inf_{\xb\in \supp(\rho_i),\ \widetilde{\xb}\in \partial U_i} \|\xb-\widetilde{\xb}\|_{2} \geq \inf_{\xb\in \widetilde{U}_i,\ \widetilde{\xb}\in \partial U_i} \|\xb-\widetilde{\xb}\|_{2}=r/2.
	\end{align*}
	The lemma is proved.
\end{proof}

\subsection{Proof of Lemma \ref{lem.A2}}\label{sec.proof.A2}
\begin{proof}[Proof of Lemma \ref{lem.A2}]
We deduce 
\begin{align*}
	|A_2|_{W^{k,\infty}(\varphi_i(U_i))}=&\left| \left(\sum_{\mb,\vb}c_{i,\mb,\vb}\widetilde{g}_{\mb,\vb}(\zb)\right)\times \left(\widetilde{\mone}_i\circ\varphi_i^{-1}(\zb)- {\mone}_i\circ\varphi_i^{-1}(\zb)\right)\right|_{W^{k,\infty}(\varphi_i(U_i))} \nonumber\\
	=&\left| \widetilde{f}_i\circ\varphi_i^{-1}(\zb)\times \left(\widetilde{\mone}_i\circ\varphi_i^{-1}(\zb)- {\mone}_i\circ\varphi_i^{-1}(\zb)\right)\right|_{W^{k,\infty}(\varphi_i(U_i))} \nonumber\\
	\leq&\left| \widetilde{f}_i\circ\varphi_i^{-1}(\zb)\times \left(\widetilde{\mone}_i\circ\varphi_i^{-1}(\zb)- {\mone}_i\circ\varphi_i^{-1}(\zb)\right)\right|_{W^{k,\infty}(\Omega_{i,2})} \nonumber \\
	&+ \left| \widetilde{f}_i\circ\varphi_i^{-1}(\zb)\times \left(\widetilde{\mone}_i\circ\varphi_i^{-1}(\zb)- {\mone}_i\circ\varphi_i^{-1}(\zb)\right)\right|_{W^{k,\infty}(\varphi_i(U_i)\backslash \Omega_{i,2})} \nonumber\\
	=& \left| \widetilde{f}_i\circ\varphi_i^{-1}(\zb)\times \left(\widetilde{\mone}_i\circ\varphi_i^{-1}(\zb)- {\mone}_i\circ\varphi_i^{-1}(\zb)\right)\right|_{W^{k,\infty}(\Omega_{i,2})}
\end{align*}
for $k=0,1$, where the last equality holds since 
$$
\widetilde{\mone}_i\circ\varphi_i^{-1}(\zb)= {\mone}_i\circ\varphi_i^{-1}(\zb)=1
$$
on $\varphi_i(U_i)\backslash \Omega_{i,2}$. 

According to (\ref{eq.cis0}), $\widetilde{f}_i\circ\varphi_i^{-1}(\zb)=\widehat{f}_i\circ\varphi_i^{-1}(\zb)=0$ for $\zb\in \varphi_i(\partial U_i)$. For any $\zb\in \Omega_{i,2}$, let 
$$
\zb^*=\argmin_{\widetilde{\zb}\in \varphi_i(\partial U_i)}\|\zb-\widetilde{\zb}\|_2.
$$
According to (\ref{eq.Omega2}), we have $\|\zb-\zb^*\|_2\leq \Delta/(c_2r)$.

By Lemma \ref{lem.net.error} with some small $\eta>0$ and for $s=k=0,1$, we have 
\begin{align}
	\|\widetilde{f}_i\circ\varphi_i^{-1}-\widehat{f}_i\|_{W^{k,\infty}([0,1]^d)}\leq C_{31}N^{k}\eta,
	\label{eq.A2.W0}
\end{align}
where $C_{31}$ is a constant depending on $d,\alpha,R$.
Since $\|\widehat{f}_i\|_{W^{1,\infty}(\Omega_{i,2})}=0$, we have $\max_j \left|\frac{\partial \widetilde{f_i}}{\partial z_j}\right|\leq C_{31}N\eta$ for any $\zb\in \Omega_{i,2}$. Therefore
\begin{align}
	|\widetilde{f}_i\circ\varphi_i^{-1}(\zb)|\leq &\widetilde{f}_i\circ\varphi_i^{-1}(\zb^*)+C_{31}N\eta\|\zb-\zb^*\|_2
	\leq \frac{C_{31}}{c_2r}N\eta\Delta
	\label{eq.fi.omega2}
\end{align}
for any $\zb\in \Omega_{i,2}$.

Using $|\widetilde{\mone}_i\circ\varphi_i^{-1}(\zb)- {\mone}_i\circ\varphi_i^{-1}(\zb)|\leq 1$, we bound $A_2$ as 
\begin{align}
	|A_2|_{W^{0,\infty}(\varphi_i(U_i))}=& \left| \widetilde{f}_i\circ\varphi_i^{-1}(\zb)\times \left(\widetilde{\mone}_i\circ\varphi_i^{-1}(\zb)- {\mone}_i\circ\varphi_i^{-1}(\zb)\right)\right|_{W^{0,\infty}(\Omega_{i,2})} \nonumber\\
	\leq & \left| \widetilde{f}_i\circ\varphi_i^{-1}(\zb)\right|_{W^{0,\infty}(\Omega_{i,2})}\times \left|\widetilde{\mone}_i\circ\varphi_i^{-1}(\zb)- {\mone}_i\circ\varphi_i^{-1}(\zb)\right|_{W^{0,\infty}(\Omega_{i,2})} \nonumber\\
	\leq & \frac{C_{11}}{c_2r}N\eta\Delta
	\label{eq.A2.0}
\end{align}
for $k=0$ and 
\begin{align}
	|A_2|_{W^{1,\infty}(\varphi_i(U_i))}=& \left| \widetilde{f}_i\circ\varphi_i^{-1}(\zb)\times \left(\widetilde{\mone}_i\circ\varphi_i^{-1}(\zb)- {\mone}_i\circ\varphi_i^{-1}(\zb)\right)\right|_{W^{1,\infty}(\Omega_{i,2})} \nonumber\\
	\leq & \left| \widetilde{f}_i\circ\varphi_i^{-1}(\zb)\right|_{W^{0,\infty}(\Omega_{i,2})}\times \left|\widetilde{\mone}_i\circ\varphi_i^{-1}(\zb)- {\mone}_i\circ\varphi_i^{-1}(\zb)\right|_{W^{1,\infty}(\Omega_{i,2})} \nonumber\\
	&+ \left| \widetilde{f}_i\circ\varphi_i^{-1}(\zb)\right|_{W^{1,\infty}(\Omega_{i,2})}\times \left|\widetilde{\mone}_i\circ\varphi_i^{-1}(\zb)- {\mone}_i\circ\varphi_i^{-1}(\zb)\right|_{W^{0,\infty}(\Omega_{i,2})} \nonumber\\
	\leq & \frac{C_{31}C_8}{c_2r}N\eta\Delta/\Delta+C_{11}N\eta \nonumber\\
	=&C_{32}N\eta
	\label{eq.A2.1}
\end{align}
for $k=1$, where $C_{12}$ is a constant depending on $\alpha,R,\tau$. In the first inequality of (\ref{eq.A2.1}), we used (\ref{eq.fi.omega2}), the inequality 
\begin{align*}
	\left|\widetilde{\mone}_i\circ\varphi_i^{-1}(\zb)- {\mone}_i\circ\varphi_i^{-1}(\zb)\right|_{W^{1,\infty}(\Omega_{i,2})}=\left|\widetilde{\mone}_i\circ\varphi_i^{-1}(\zb)\right|_{W^{1,\infty}(\Omega_{i,2})}\leq C_8/\Delta
\end{align*}
by (\ref{eq.indi.partial}) and the fact ${\mone}_i\circ\varphi_i^{-1}(\zb)=1$ for $\zb\in \Omega_{i,2}$, and the inequality 
\begin{align*}
	\left| \widetilde{f}_i\circ\varphi_i^{-1}(\zb)\right|_{W^{1,\infty}(\Omega_{i,2})}=\left| \widetilde{f}_i\circ\varphi_i^{-1}(\zb)-0\right|_{W^{1,\infty}(\Omega_{i,2})}=\|\widetilde{f}_i\circ\varphi_i^{-1}-f_i\circ\varphi_i^{-1}\|_{W^{1,\infty}(\Omega_{i,2})}\leq C_{31}N\eta
\end{align*}
by (\ref{eq.A2.W0}).

Combining (\ref{eq.A2.0}) and (\ref{eq.A2.1}) gives rise to
\begin{align}
	\|A_2\|_{W^{1,\infty}(\varphi_i(U_i))}\leq C_{32}N\eta\Delta^{1-k}
	%\label{eq.A2}
\end{align}
for $k=0,1$.
\end{proof}

\end{document}